\def\eqref#1{equation~\ref{#1}}
\def\1{\bm{1}}
\def\rd{{\textnormal{d}}}
\DeclareMathAlphabet{\mathsfit}{\encodingdefault}{\sfdefault}{m}{sl}
\SetMathAlphabet{\mathsfit}{bold}{\encodingdefault}{\sfdefault}{bx}{n}
\newcommand{\E}{\mathbb{E}}
\DeclareMathOperator*{\argmax}{arg\,max}
\DeclareMathOperator{\Tr}{Tr}
\definecolor{darkgreen}{rgb}{0,0.5,0}
\definecolor{darkred}{rgb}{0.7,0,0}
\definecolor{teal}{rgb}{0.3,0.8,0.8}
\definecolor{orange}{rgb}{1.0,0.5,0.0}
\definecolor{purple}{rgb}{0.8,0.0,0.8}
\definecolor{Gray}{gray}{0.9}
\newcommand{\ouralg}{\textsc{Rep-UCB}}
\newcommand{\ourofflinealg}{\textsc{Rep-LCB}}
\newcommand{\flambe}{\textsc{Flambe}}
\renewcommand{\arraystretch}{2.}
\begin{document}
\title{Representation Learning for Online and Offline RL in Low-rank MDPs}
\author[1]{Masatoshi Uehara\thanks{mu223@cornell.edu}} 
\author[2]{Xuezhou Zhang\thanks{xz7392@princeton.edu }} 
\author[1]{Wen Sun \thanks{ws455@cornell.edu}}
\affil[1]{Department of Computer Science, Cornell University}
\affil[2]{Department of Electrical and Computer Engineering, Princeton University}
\date{}

\maketitle

\begin{abstract}

This work studies the question of \emph{Representation Learning in RL:} how can we learn a compact low-dimensional representation such that  on top of the representation we can perform RL procedures such as exploration and exploitation, in a sample efficient manner. We focus on the low-rank Markov Decision Processes (MDPs) where the transition dynamics correspond to a low-rank transition matrix. Unlike prior works that assume the representation is known (e.g., linear MDPs), here we need to learn the representation for the low-rank MDP. We study both the online RL and offline RL settings. For the online setting, operating with the same computational oracles used in \flambe \citep{Agarwal2020_flambe}----the state-of-art algorithm for learning representations in low-rank MDPs, we propose an algorithm \ouralg---{Upper Confidence Bound driven \textsc{Rep}resentation learning for RL}, which \emph{significantly improves} the sample complexity from $\widetilde{O}( A^9 d^7 / (\epsilon^{10} (1-\gamma)^{22}))$ for \flambe{} to $\widetilde{O}( d^4 A^2/ (\epsilon^2 (1-\gamma)^5)  )$ with $d$ being the rank of the transition matrix (or dimension of the ground truth representation), $A$ being the number of actions, and $\gamma$ being the discount factor. 
Notably, \ouralg{} is simpler than \flambe, as it directly balances the interplay between representation learning, exploration, and exploitation, while \flambe{} is an explore-then-commit style approach and has to perform reward-free exploration step-by-step forward in time. %
For the offline RL setting, we develop an algorithm that leverages pessimism to learn under a partial coverage condition: our algorithm is able to compete against \emph{any policy} as long as it is covered by the offline data distribution. \looseness=-1

\end{abstract}

\section{Introduction}

When applying Reinforcement Learning (RL) to large-scale problems where data is complex and high-dimensional, learning effective transformations of the data, i.e., representation learning, can often significantly improve the sample and computation efficiency of the RL procedure. Indeed, several empirical works have shown that leveraging representation learning techniques developed in supervised or unsupervised learning settings can accelerate the search for good decision-making strategies \citep{silver2018general,stooke2021decoupling,srinivas2020curl,yang2021representation}.  However, representation learning in RL is far more subtle than it is for non-sequential and non-interactive learning tasks (e.g., supervised learning). Prior works have shown that even if one is given the magic representation that exactly linearizes the optimal policy \citep{du2019good} or the optimal value functions \citep{wang2020statistical,weisz2021exponential}, RL is still challenging (i.e., one may still need exponentially many samples to learn). This indicates that an effective representation that permits efficient RL needs to encode more information about the underlying Markov Decision Processes (MDPs). Despite the recent empirical success of representation learning in RL , its statistical guarantee and theoretical properties remain under-investigated. 

In this work, we study the representation learning question under the low-rank MDP assumption. Concretely, a low-rank MDP assumes that the MDP transition matrix admits a low-rank factorization, i.e., there exists two \emph{unknown} mappings $\mu(s'), \phi(s,a)$, such that $P(s' | s,a) = \mu(s')^{\top} \phi(s,a)$ for all $s,a,s'$, where $P(s' | s,a)$ is the probability of transiting to the next state $s'$ under the current state and action $(s,a)$. The representation $\phi$ in a low-rank MDP not only linearizes the optimal state-action value function of the MDP \citep{jin2020provably}, but also linearizes the transition operator.  A low-rankness assumption on large stochastic matrices is a common and natural assumption and has enabled successful development of algorithms for real world applications such as movie recommendation systems \citep{koren2009matrix}.  We note that a low-rank MDP strictly generalizes the linear MDP model \citep{YangLinF2019RLiF,jin2020provably} which assumes $\phi$ is known a priori. %
The unknown representation $\phi$ makes learning in low-rank MDPs much more challenging than that in linear MDPs since one can no longer directly use linear function approximations. On the other hand, the fact that  linear MDPs can be solved statistical and computational efficiently if $\phi$ is known a priori implies that if one could learn the representation of the low-rank MDP, one could then efficiently learn the optimal policy. \looseness=-1

 Indeed, prior works have shown that learning in low-rank MDPs is statistically feasible \citep{jiang2017contextual,sun2019model,du2021bilinear} via leveraging rich function approximators.  However, these algorithms are version space algorithms and are not computationally efficient. Recent work \flambe{} proposes an oracle-efficient algorithm\footnote{The oracle generally refers to supervised learning style empirical risk minimization oracle. We seek to design an algorithm that runs in polynomial time with each oracle call counting as $O(1)$. The reduction to supervised learning has lead to many successful provable and practical algorithms in contextual bandit \citep{agarwal2014taming,dudik2017oracle,foster2020beyond} and RL \citep{du2019provably,misra2020kinematic}.}
 that learns in low-rank MDPs with a polynomial sample complexity, where the computation oracle is Maximum Likelihood Estimation (MLE) operating under the standard supervised learning style Empirical Risk Minimization (ERM) setting. In this work, we follow the same setup from \flambe{} \citep{Agarwal2020_flambe}, and propose a new algorithm --- \emph{Upper Confidence Bound driven Representation Learning, Exploration and Exploitation (\ouralg)}, which can learn a near optimal policy for a low-rank MDP with a polynomial sample complexity and is oracle-efficient. Comparing to \flambe, our algorithm \emph{significantly improves} the sample complexity from $O(d^{7} A^9 / (\epsilon^{10}(1-\gamma)^{22})$ for \flambe{} to $O(d^4 A^2 / (\epsilon^2 (1-\gamma)^5)$, where $d$ is the rank of the transition matrix (or dimension of the true representation), $A$ is the number of actions, $\epsilon$ is the suboptimality gap and $\gamma\in[0,1)$ is the discount factor in the MDP. Our algorithm is also arguably much simpler than \flambe:  \flambe{} is an explore-then-commit algorithm, has to explore in a layer-by-layer forward way, and does not permit data sharing across different time steps. In contrast, \ouralg{} carefully trades  exploration versus exploitation 
by combining the reward signal and exploration bonus (constructed using the latest learned representation), and enables data sharing across all time steps.\footnote{Our algorithm and analysis can be easily extended to finite horizon non-stationary setting. We choose the discounted infinite horizon setting to contrast our results to \flambe: \flambe{} is \emph{not} capable of learning stationary policies under the discounted infinite horizon setting.} Our sample complexity nearly matches the ones from those computationally inefficient algorithms \citep{jiang2017contextual,sun2019model,du2021bilinear}. We summarize the comparison with the prior works that study representation learning in Table~\ref{tab:comparison}.

\renewcommand{\arraystretch}{1.7}
\begin{table*}[t] 
\centering
\vspace{-1cm}
\centering\resizebox{\columnwidth}{!}{
 \begin{tabular}{| c |  c | c | c |}
 \hline
 Methods & Setting & Sample Complexity & Computation \\  
  \hline\hline
 OLIVE \citep{jiang2017contextual} & Low Bellman rank & $\frac{d^2 A}{ \epsilon^2 (1-\gamma)^4 }$  & Inefficient  \\ 
 \hline
 Witness rank \citep{sun2019model}  & Low Witness rank & $\frac{d^2 A}{ \epsilon^2 (1-\gamma)^4 }$ & Inefficient \\
 \hline
 BLin-UCB \citep{du2021bilinear} & Bilinear Class & $\frac{d^2 A}{ \epsilon^2 (1-\gamma)^7 }$ & Inefficient \\
 \hline
  Moffle{} \citep{modi2021model} & Low-nonnegative-rank MDP & $\frac{d^6 A^{13} }{\epsilon^2\eta^5(1-\gamma)^5 }$  & Oracle efficient  \\
  \hline
   \flambe{} \cite{Agarwal2020_flambe} & Low-rank MDP & $\frac{d^7 A^9}{\epsilon^{10} (1-\gamma)^{22}}$ & Oracle efficient \\
 \hline
 \rowcolor{Gray}\ouralg{ }(Ours) & Low-rank MDP & $\frac{d^4 A^2}{\epsilon^2 (1-\gamma)^{5} }$ & Oracle efficient \\  
 \hline
 \end{tabular}}
 \vspace{-5pt}
 \label{tab:comparison}
 \caption{Comparison among different provable representation learning algorithms in online RL. Algorithms such as OLIVE, Witness rank, and BLin-UCB work for settings which are more general than low-rank MDPs and have tight sample complexity. However, these algorithms are version space algorithms and thus are not computationally efficient. Moffle is an oracle-efficient algorithm (with a much stronger oracle than the one in \flambe{} and ours), but the assumptions under which Moffle operates essentially imply that the MDP's transition has low non-negative matrix rank (nnr) (see the detailed discussion in the related work section). Note that a nnr is at least as large as and could be exponentially larger than the rank \citep{Agarwal2020_flambe}. Finally, \flambe{} operates under the same function approximation setting and the computation oracle as ours. Our algorithm \emph{significantly improves} the sample complexity from \flambe{} in \emph{all parameters}.  Note the horizon dependence is not exactly comparable as these prior works originally considered the finite horizon setting with nonstationary transition, and we convert their results to the discounted setting by simply replacing the finite horizon H by $\Theta(1/(1-\gamma))$.
 }
\end{table*}

In addition to the online exploration setting, we also show that our new techniques can be directly used for designing offline RL algorithms for low-rank MDPs under partial coverage. More specifically, we propose an algorithm \ourofflinealg{}---\emph{{L}ower {C}onfidence {B}ound driven {Rep}representation Learning for offline RL}, that given an offline dataset, can learn to compete against any policy (including history-dependent policies) as long as it is covered by the offline data where the coverage is measured using the relative condition number \citep{agarwal2021theory} associated with the ground truth representation. Thus, our offline RL result generalizes prior offline RL works on linear MDPs \citep{JinYing2020IPPE,zhang2021corruption} which assume representation is known a priori and use linear function approximation. Computation-wise, our approach uses one call to the MLE computation oracle, and hence is oracle-efficient. \ourofflinealg{} is \emph{the first oracle efficient offline algorithm for low-rank MDP enjoying the aforementioned statistical guarantee.} See Section~\ref{sec:related_work} for a more detailed comparison with the existing literature on representation learning in offline RL.  \looseness=-1

\paragraph{Our contributions.} We develop new representation learning RL algorithms that enable sample efficient learning in low-rank MDPs under both online and offline settings:
\begin{enumerate}[leftmargin=0.9cm]

\item In the online episodic learning setting, our new algorithm \ouralg{} integrates representation learning, exploration, and exploitation together, and significantly improves the sample complexity of the prior state-of-art algorithm \flambe;

\item  In the offline learning setting, we propose a natural concentrability coefficient (i.e., relative condition number under the true representation) that captures the partial coverage condition in low-rank MDP, and our algorithm \ourofflinealg{} learns to compete against any policy (including history-dependent ones) under such a partial coverage condition. 
\end{enumerate}
\section{Related Work}
\label{sec:related_work}

\paragraph{Online Setting} We list the comparison as follows, which is summarized in Table \ref{tab:comparison}. 

\flambe{} \citep{Agarwal2020_flambe} was a state-of-the-art oracle-efficient algorithm for low-rank MDPs.  In all parameters, the statistical complexity is much worse than  \ouralg{ }. 
Our algorithm and \flambe{} operate under the same computation oracle. \flambe{} does not balance exploration and exploitation, and uses explore-then-committee style techniques (i.e., constructions of absorbing MDPs \citep{brafman2002r}) which results in its  worse sample complexity. 
    
With a more complex oracle, Moffle \citep{modi2021model} is a model-free algorithm for low-rank MDPs, with  two additional assumptions: (1) the transition has low \emph{non-negative rank (nnr)}, and (2) reachability in latent states.  The first assumption significantly restricts the scope of low-rank MDPs as there are matrices whose nnr is exponentially larger than the rank \citep{Agarwal2020_flambe}. The sample complexity of Moffle can scale $O(d^6|\Acal|^{13}/(\epsilon^2\eta^5 (1-\gamma)^5))$, where $\eta$ is the reachability probability, and $1/\eta$ could be as large as  $nnr^{1/2}$ (Proposition 4 in \citet{Agarwal2020_flambe}), which essentially means that Moffle has a polynomial dependence on the nnr. Thus, Moffle needs the nnr of the transition matrix to be small.

OLIVE \citep{jiang2017contextual}, Witness rank \citep{sun2019model} and Bilinear-UCB \citep{du2021bilinear}, when specialized to low-rank MDPs, have slightly tighter dependence on $d$ (e.g., $O(d^2 / \epsilon^2)$). But these algorithms are computationally inefficient as they are version space algorithms.  \cite{dann2021agnostic} shows that with a policy class, solving a low-rank MDP can take $\Omega(2^{d})$ samples. In this work, similar to Witness rank \citep{sun2019model} and \flambe, we use function approximators to model the transition. Thus our positive result is not in contradiction to the result from \cite{dann2021agnostic}. \looseness=-1 
    
VALOR \citep{dann2018oracle}, PCID \citep{du2019provably}, HOMER \citep{misra2020kinematic}, RegRL \citep{foster2020instance}, and the approach from \cite{feng2020provably} are algorithms for block MDPs which is a more restricted setting than low-rank MDPs. These works require additional assumptions such as deterministic transitions \citep{dann2018oracle}, reachability \citep{misra2020kinematic,du2019provably},  strong Bellman closure \citep{foster2020instance}, and strong unsupervised learning oracles \citep{feng2020provably}. \looseness=-1

\paragraph{Offline Setting} We discuss related works in offline RL. %
 
\citet{uehara2021pessimistic} obtained similar statistical results for offline RL on low-rank MDPs. Though the sample complexity in their algorithm is slightly tighter, our algorithm is oracle-efficient, while the CPPO algorithm from \citet{uehara2021pessimistic} is a version space algorithm. 

\citet{XieTengyang2021BPfO} propose a (general) pessimistic model-free algorithm in the offline setting. We can also apply their algorithm to low-rank MDPs and show some finite-sample guarantee. However, it is unclear whether the final bounds in their results can be characterized by the relative condition number only using the true representation, and whether they can compete with history-dependent policies. 
Thus, our result is still considered superior on low-rank MDPs.
The detail is given in \pref{sec:comparison}. \looseness=-1

In addition to the above two works, the pessimistic approach in offline RL has been extensively investigated. Empirically, it can work on simulation control tasks \citep{Kidambi2020,Yu2020,kumar2020conservative,Liu2020,ChangJonathanD2021MCSi}. On the theoretical side,  pessimism allows us to obtain the PAC guarantee on various models when a comparator policy is covered by offline data in some forms \citep{JinYing2020IPPE,RashidinejadParia2021BORL,YinMing2021NORL,zanette2021provable,zhang2021corruption,ChangJonathanD2021MCSi}. However, these algorithms and their analysis rely on a \emph{known representation} and linear function approximation. %

\section{Preliminaries}
\label{sec:prelim}
We consider an episodic discounted infinite horizon Markov Decision Process $\Mcal=\langle \Scal,\Acal,P,r,\gamma,d_0\rangle $ specified by a state space $\Scal$, a discrete action space $\Acal$, a transition model $P:\Scal\times \Acal\to \Delta(\Scal)$, a reward function $r:\Scal \times \Acal \to \mathbb{R}$, a discount factor $\gamma\in [0,1)$, and an initial distribution $d_0\in \Delta(\Scal)$. To simplify the presentation, we assume $r(s,a)$ and $d_0$ are known (e.g., when $d_0$ is a probability mass only on $s_0$, agent always starts from a fixed initial state $s_0$)\footnote{Extension to the unknown case is straightforward. Recall the major challenging of RL is due to the unknown transition model.}. Following prior work \citep{jiang2017contextual,sun2019model}, we assume \emph{trajectory reward is normalized}, i.e., for any trajectory $\{s_h,a_h\}_{h=0}^{\infty}$, we have $\sum_{h=0}^{\infty} \gamma^h r(s_h,a_h) \in [0,1]$. Since the ground truth $P^{\star}$ is unknown, we need to learn it by interacting with environments in an online manner or utilizing offline data at hand.  We remark that the extension of our all results  to the finite horizon nonstationary case is straightforward. %

\begin{figure}
\begin{center}
        \includegraphics[width=0.5\textwidth]{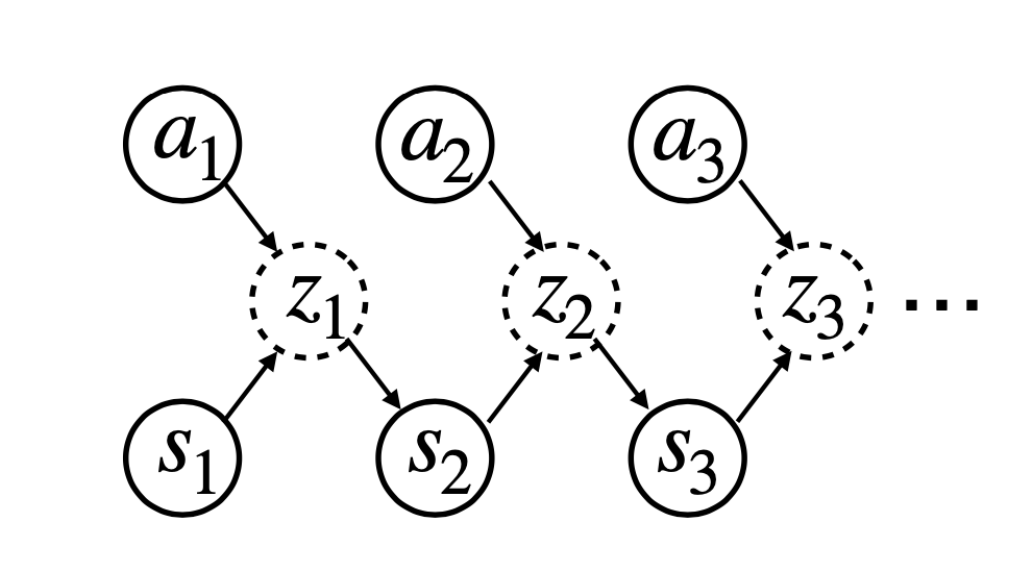}
\end{center}
  \caption{A latent state model captured by low-rank MDP. Here $\phi^\star(s,a)$ is a distribution over a discrete latent state space $\mathcal{Z}$. Note that this is still a Markovian model as there is no transition among latent states.}
  \label{fig:latent_rep}
\end{figure}

We use the following notation. Given a policy $\pi:\Scal \to \Delta(\Acal)$, which maps from state to distribution over actions and some model $P$, we define the value function $V^{\pi}_{P}(s) = \mathbb{E}\left[ \sum_{h=0}^{\infty} \gamma^h r(s_h,a_h) | s_0 = s, P, \pi \right]$ to represent the expected total discounted reward of $\pi$ under $P$ starting at $s$. Similarly, we define the state-action Q function $Q^{\pi}_P(s,a) := r(s,a) + \gamma \EE_{s'\sim P(\cdot | s,a)} V^{\pi}_P(s')$. The expected total discounted reward of a policy $\pi$ under transition $P$ and reward $r$ is denoted as $V^{\pi}_{P,r} := \EE_{s_0\sim d_0} V^{\pi}_P(s_0)$.
We define the discounted state-action occupancy distribution $d^{\pi}_P(s,a)=(1-\gamma)\sum_{t=0}^{\infty} \gamma^t d^{\pi}_{P,t}(s,a)$, where $d^{\pi}_{P,t}(s,a)$ is the probability of $\pi$ visiting $(s,a)$ at time step $t$ under $\pi$ and $P$. We slightly abuse the notation, and denote $d^{\pi}_{P}(s)$ as the state visitation, which is  equal to $\sum_{a\in\Acal} d^{\pi}_P(s,a)$. When $P$ is the ground truth transition model $P^\star$, we drop the subscript and simply use $d^{\pi}$ to denote its state-action distribution. Unless otherwise noted, $\Pi$ denotes the class of all polices $\{S\to \Delta(\Acal)\}$. We denote total variation distance of $P_1$ and $P_2$ by $\|P_1-P_2\|_1$. Finally, given a vector $a$, we define $\|a\|_2=\sqrt{a^{\top}a},\|a\|_{B}=\sqrt{a^{\top}Ba}$. $c_0,c_1,\cdots$ are universal constants.

We study low-rank MDPs defined as follows \citep{jiang2017contextual,Agarwal2020_flambe}. The conditions on the upper bounds of the norm of $\phi^{\star},\mu^{\star}$ are just for normalization.   
\begin{definition}[Low-rank MDP]
A transition model $P^{\star}:\Scal\times \Acal \to \Delta(\Acal)$ admits a low rank decomposition with rank $d\in \NN$
if there exists two embedding functions $\phi^{\star}\, \mu^{\star}$ such that 
\begin{align*}
    \forall s,s'\in \Scal, a\in \Acal: P^{\star}(s'\mid s,a)=\mu^{\star}(s')^{\top}{\phi^{\star}(s,a)}
\end{align*}
where $\|\phi^{*}(s,a)\|_2\leq 1$ for all $(s,a)$ and for any function $g:\Scal \to [0,1]$, $\|\int \mu^{\star}(s)g(s)\rd(s)\|_2\leq \sqrt{d}$. An MDP is a low rank MDP if $P^{\star}$ admits such a low rank decomposition.  
\end{definition}

Low-rank MDPs capture the latent variable model \citep{Agarwal2020_flambe} shown in \pref{fig:latent_rep} where $\phi^\star(s,a)$ is a distribution over a discrete latent state space $\mathcal{Z}$. The block-MDP model \citep{du2019provably} is a special  instance of the latent variable model with $\phi^\star(s,a)$ being a one-hot encoding vector. 
Note the linear MDPs \citep{YangLinF2019RLiF,jin2020provably} assume $\phi^\star$ is known.  %

Next, we explain two settings: the online learning setting and the offline learning setting. Then, we present our function approximation setup and computational oracles.

\paragraph{Episodic Online learning}
In online learning, our overall goal is  to learn a stationary policy $\hat \pi$ so that it maximizes $V^{\hat \pi}_{P^{\star},r}$, where $P^\star$ is the ground truth transition.  We assume that we operate under the episodic learning setting where we can only reset to states sampled from the initial distribution $d_0$ (e.g., to emphasize the  challenge from exploration, we can consider the special case where we can only reset to a fixed $s_0$). In the episodic setting, given a policy $\pi$, sampling a state $s$ from the state visitation $d_P^{\pi}$ is done by the following \emph{roll-in} procedure: starting at $s_0\sim d_{0}$, at every time step $t$, we terminate and return $s_t$ with probability $1-\gamma$, and otherwise we execute $a_t\sim \pi(s_t)$ and move to $t+1$, i.e., $s_{t+1} \sim P(\cdot | s_t,a_t)$. Such a sampling procedure is widely used in the policy gradient and policy optimization literature (e.g., \citep{kakade2002approximately,agarwal2021theory,agarwal2020pc}). 
\looseness=-1 %

\paragraph{Offline learning} In the offline RL, we are given a static dataset in the form of quadruples:
\begin{align*}\textstyle
    \Dcal=\{s^{(i)}, a^{(i)}, r^{(i)}, s'^{(i)}\}_{i=1}^n\sim \rho(s,a)\delta(r=r(s,a))P^{\star}(s'\mid s,a). 
\end{align*} 
For simplicity, we assume $\rho=d^{\pi_b}_{P^{\star}}$, where $\pi_b \in \Scal\to \Delta(\Acal)$ is a fixed behavior policy. We  denote $\EE_{\Dcal}[f(s,a,s')]=1/n\sum_{(s,a,s')\in \Dcal}f(s,a,s')$. To succeed in offline RL, we in general need some coverage property of $\rho$. One common assumption is that $\rho$ globally covers every possible policies' state-action distribution, i.e., $\max_{\pi,s,a} \frac{d^{\pi}_{P^\star}(s,a)}{\rho(s,a)} < \infty$ \citep{antos2008learning}. In this work, we relax such a global coverage assumption and work under the \emph{partial} coverage condition where $\rho$ may not cover distributions of all possible policies. Instead of competing against the optimal policy under the global coverage, we aim to compete against any policies covered by the offline data. %
In section~\ref{sec:offline}, we will precisely define the partial coverage condition using the concept of the relative condition number \citep{agarwal2021theory}. %
\looseness=-1

\paragraph{Function approximation setup and computational oracles} 

Since $\mu^\star$ and $\phi^\star$ are unknown, we use function classes to capture them. Our function approximation and computational oracles are \emph{exactly the same as the ones used in \flambe}. For completeness, we state the function approximation and computational oracles below.  %

\begin{assum}\label{assum:basic}
We have a model class $\Mcal = \{ (\mu, \phi): \mu\in \Phi, \phi\in\Phi \}$, where $\mu^\star\in \Phi$, $\phi^\star\in\Phi$.%
\end{assum} 
Following the norm bounds on $\mu^\star, \phi^\star$  we similarly assume that the same norm bounds hold for our function approximator, i.e., for any $\mu\in \Phi,\phi\in \Phi$, $\|\phi(s,a)\|_{2} \leq 1$, 
$\forall (s,a)$ and $\|\int \mu(s)g(s)\rd(s)\|_2\leq \sqrt{d},\forall g:\Scal \to [0,1]$, and $\int \mu^{\top}(s')\phi(s,a)\rd(s')=1,\,\forall (s,a)$. \looseness=-1

As  for computational oracles, we use  a supervised learning style MLE oracle. 
\begin{definition}[Maximum Likelihood Oracle (MLE)] \label{asm:mle}
Consider the model class $\Mcal$ and a dataset $\mathcal{D}$ in the form of $(s,a,s')$, the MLE oracle returns the maixmum likelihood estimator $\hat P := (\hat\mu,\hat\phi) = \argmax_{(\mu,\phi)\in\Mcal} \EE_{\Dcal} \ln (\mu(s')^{\top}\phi(s,a))$.
\end{definition}
We also invoke a planning procedure for \emph{known linear MDPs with potentially nonlinear rewards}, which can be done in polynomial time (after all, we know that online learning in linear MDPs can be done statistically and computationally efficient). 
Given a reward $r$ and a model $P:= (\mu,\phi)$ with $P(s' | s,a) = \mu(s')^{\top} \phi(s,a)$ (i.e., a \emph{known linear transition} with a \emph{known} feature $\phi$), we can compute the optimal policy $\arg\max_{\pi} V^{\pi}_{P,r}$ by standard least square value iteration which uses linear regression. 
A planning procedure for a known linear MDP is also used in \flambe, see Section 5.1 in \citet{Agarwal2020_flambe} for details of how to implement this procedure for a known linear MDP with polynomial computation complexity.   \looseness=-1

\section{Representation Learning in Online Setting}
\label{sec:online}

We consider the online episodic learning setting where the agent can only reset based on the initial state distribution $d_0$. To find a near-optimal policy for a low-rank MDP efficiently, we need to carefully interleave representation learning, exploration, and exploitation.

\subsection{Algorithm}

We present our proposed algorithm in the online setting which is described in \pref{alg:online_algorithm}.  We first describe the data collection process. Every iteration, \pref{alg:online_algorithm} rollouts its current policy $\pi$ to collect a tuple $(s,a,s',a',\tilde s)$ where $s\sim d_{P^\star}^{\pi}$, $a\sim U(\Acal), s'\sim P^\star(\cdot | s,a), a' \sim U(\Acal), \tilde s \sim P^{\star}(\cdot \mid s,a)$ where $U(\Acal)$ is a uniform distribution over actions (note that we take two uniform actions here). Recall that to sample $s\sim d^{\pi}_{P^\star}$, we start at $s_0\sim d_{0}$, at every time step $t$, we terminate and return $s_t$ with probability $1-\gamma$, and otherwise we execute $a_t\sim \pi(s_t)$ and move to $t+1$, i.e., $s_{t+1} \sim P^\star(\cdot | s_t,a_t)$. Thus collecting one tuple requires exactly one roll-in (i.e., one trajectory, and it is easy to verify that with high probability the roll-in terminates with $\tilde{O}(1/(1-\gamma))$ steps which is often called the effective horizon). 

After collecting new data and concatenating it with the existing data, we perform representation learning,  i,e, learning a factorization and a representation by MLE (line~\ref{line:mle}), set the bonus based on the learned feature (Eq.~\ref{eq:bonus_online}), and update the policy via planning inside the learned model with the bonus-enhanced reward (Line~\ref{line:plan_online}).  Note the learned transition $\hat{P}$ from MLE is linear with respect to the learned feature $\hat\phi$, and planning in a known linear MDP is known as computationally efficient \citep{jin2020provably} (see the explanation after Definition~\pref{asm:mle} as well). \looseness=-1

\begin{algorithm}[t] 
\caption{UCB-driven representation learning, exploration, and exploitation (\ouralg) } \label{alg:online_algorithm}
\begin{algorithmic}[1]
  \STATE {\bf  Input:} Regularizer $\lambda_n$, parameter $\alpha_n$, Models $\Mcal=\{( \mu,\phi):\mu\in \Psi,\phi\in \Phi\}$, Iteration $N$
  \STATE Initialize $\pi_0(\cdot\mid s)$ to be uniform; set $\Dcal_0 = \emptyset$, $\Dcal'_0 = \emptyset$
  \FOR{episode $n=1,\cdots,N$ } 
  \STATE Collect  a tuple $(s,a,s',a',\tilde s)$ with \begin{equation*}s\sim d_{P^\star}^{\pi_{n-1}}, a\sim U(\Acal), s'\sim P^\star(\cdot | s,a),a' \sim U(\Acal),\tilde s\sim P^\star(\cdot | s',a') \vspace{-5pt}\end{equation*}
  \STATE Update datasets by adding triples $(s,a,s')$ and $(s',a',\tilde s)$: $$\Dcal_n = \Dcal_{n-1} + \{(s,a,s')\}, \quad \Dcal'_n = \Dcal'_{n-1} + \{(s',a', \tilde s)\}$$ %
  
  \STATE Learn representation via ERM (i.e., MLE):  $$
      \hat P_n:= ( \hat \mu_n, \hat \phi_n) =\argmax_{( \mu,\phi )\in \Mcal} \E_{ \Dcal_n + \Dcal'_n }\left[\ln \mu^{\top}(s') \phi(s,a)\right]$$ \label{line:mle} %
  \STATE Update empirical covariance matrix $\hat\Sigma_n = \sum_{s,a\in\Dcal_n} \hat\phi_n(s,a) \hat\phi_n(s,a)^{\top} + \lambda_n I$
  \STATE Set the exploration bonus: %
     \begin{equation}
     \label{eq:bonus_online}
      \hat b_n(s,a) :=\min \prns{\alpha_n \sqrt{\hat \phi_n(s,a)^{\top}\hat \Sigma^{-1}_{n}\hat \phi_n(s,a)},2} %
     \end{equation}
  \STATE  Update policy $\pi_n=\argmax_{\pi}V^{\pi}_{\hat P_n,r+\hat b_n}$  \label{line:plan_online}
  \ENDFOR 
  \STATE \textbf{Return } $\pi_1,\cdots,\pi_N$
\end{algorithmic}
\end{algorithm}

\subsection{Analysis}

We show that our algorithm has the following PAC bound.  %
\begin{theorem}[PAC Bound for \ouralg]\label{thm:online} Fix $\delta\in(0,1), \epsilon\in(0,1)$.
Let $\hat \pi$ be a uniform mixture of $\pi_1,\cdots,\pi_N$ and $\pi^{\star}:=\argmax_{\pi} V^{\pi}_{P^{\star},r}$ as the  optimal policy. By setting parameters as follows: 
\begin{align*}
 \alpha_n = O\prns{\sqrt{\prns{|\Acal|+d^2}\gamma\ln(|\Mcal|n/\delta)  }},\quad \lambda_n=O\prns{d\ln (|\Mcal|n/\delta)}, %
\end{align*} with probability at least $1-\delta$, we have $$V^{\pi^\star}_{P^\star,r} - V^{\hat\pi}_{P^\star, r} \leq \epsilon,$$ 
where the number of collected samples is at most
\begin{align*}
    O\prns{  \frac{d^4|\Acal|^2 \ln(|\Mcal|/\delta) }{(1-\gamma)^5 \epsilon^2}\cdot  \nu  }, 
 \end{align*} where $\nu$ only contains log terms and the dependence on $|\Mcal|$ is at most $\ln ( \ln(|\Mcal|))$, i.e., $$\nu := O\left( \ln\left( \frac{d^4|\Acal|^2 \ln(|\Mcal|/\delta) }{(1-\gamma)^5\delta \epsilon^2}  \ln^{2}\prns{1+  \frac{d^4|\Acal|^2  \ln(|\Mcal|/\delta) }{(1-\gamma)^5 \epsilon^2}}  \right) \cdot \ln^{2}\prns{1+  \frac{d^4|\Acal|^2  \ln(|\Mcal|/\delta) }{(1-\gamma)^5 \epsilon^2}} \right).$$
\end{theorem}
The theorem shows that \ouralg{} learns in low-rank MDPs in a statistically efficient and oracle-efficient manner. To the best of our knowledge, this algorithm has the best sample complexity among all oracle efficient algorithms for low-rank MDPs. %

\paragraph{Highlight of the analysis} %
Below we highlight our key lemmas and proof techniques. %

\emph{First}, why is learning in a low-rank MDP harder than learning in models with linear structures? Unlike standard linear models such as linear MDPs \citep{YangLinF2019RLiF,jin2020provably}, KNRs \citep{Kakade2020,abbasi2011regret,mania2020active,song2021pc}, and GP / kernel models \citep{chowdhury2019online,curi2020efficient}, we cannot get uncertainty quantification on the model in a point-wise manner. When models are linear, one  can get the following style of point-wise uncertainty quantification for the learned model $\hat{P}$: $\forall s,a: {\ell}( \hat{P}(\cdot|s,a), P^\star(\cdot | s,a)) \leq \sigma(s,a)$ where $\sigma(s,a)$ is the uncertainty measure, and ${\ell}$ is some distance metric (e.g., $\ell_1$ norm). With proper scaling, the uncertainty measure $\sigma(s,a)$ is then used for the bonus. For example, in linear MDPs (i.e., low-rank MDP with known feature $\phi^\star$), given a dataset $\Dcal = \{s,a,s'\}$, we can learn a non-parametric model $\hat{P}(s'|s,a) := \hat{\mu}(s')^{\top}\phi^\star(s,a)$ , and get point-wise uncertainty quantification: %
\begin{align}  \label{eq:cs}
    \forall (s,a),\lvert  \int f(s')\hat\mu^{\top}(s') \phi^\star(s,a)\rd(s') -\int f(s'){\mu^\star}^{\top}(s') \phi^\star(s,a)\rd(s') \rvert \leq c \|\phi^\star(s,a)\|_{\Sigma^{-1}_{\phi^\star}}
\end{align}
for some family of functions $f:\Scal\to \mathbb{R}$ with $\Sigma_{\phi^\star} = \sum_{s,a\in\Dcal} \phi^\star(s,a)\phi^\star(s,a) + \lambda I$ \citep{lykouris2021corruption,neu2020unifying}. To set the scaling $c$ properly, since $\phi^\star$ is known a priori, the linear regression analysis applies here, and one can apply Cauchy-Schwarz inequality to the LHS of \pref{eq:cs}
to pull out $\phi^\star$ and get an upper bound in the form of 
\begin{align*}
  \underbrace{\|\phi^\star(s,a)\|_{\Sigma^{-1}_{\phi^\star}}}_{(a)}  \underbrace{ \|\int f(s)\{\hat \mu(s)-\mu^{\star}(s)\}\rd(s) \|_{\Sigma_{\phi^\star}} }_{(b)}
\end{align*}
where $c$ is set to be the linear regression training error measured in the term $(b)$ above. 

However, when we jointly learn $\mu$ and $\phi$, since nonlinear function approximation is used\footnote{There are nonlinear models using Eluder dimension \citep{russo2014learning} as the complexity measure. However, to the best of our knowledge, the only known examples that admit low Eluder dimension are linear models and generalized linear models with strong assumptions on the link function.\looseness=-1}, we cannot get point-wise uncertainty quantification via linear regression-based analysis. We stress that our bonus is not designed to capture the uncertainty quantification on the model error between $\hat{P}(\cdot|s,a) = \hat\mu^{\top} \hat\phi(s,a)$ and $P^\star(\cdot|s,a) = {\mu^\star}^{\top}\phi^\star(s,a)$ in a point-wise way, which is not tractable as $\hat{P}$ and $P^\star$ does not even share the same representation. Instead,  the bonus is carefully designed so that it only provides \emph{near-optimism at the initial state distribution}. This is formalized as follows. 
\begin{lemma}[Almost Optimism at the Initial State Distribution]\label{lem:optimism_main}
Set the parameters as in \pref{thm:online}. With probability $1-\delta$, 
\begin{align*}
 \forall n\in [1,\cdots,N],\forall \pi\in \Pi, V^{\pi}_{\hat P_n,r+\hat b_n}-V^{\pi}_{P^{\star},r}\geq -c_1\sqrt{\frac{|\Acal|\ln(|\Mcal|n/\delta)(1-\gamma)^{-1}}{n}}. 
\end{align*}
\end{lemma}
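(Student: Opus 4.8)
The plan is to prove near-optimism through a value-difference decomposition followed by a change of measure that is controlled by the exploration bonus. Since the value functional is linear in the reward for a fixed transition and policy, I would first split
\begin{align*}
V^{\pi}_{\hat P_n,\, r+\hat b_n} - V^{\pi}_{P^\star,r} = V^{\pi}_{\hat P_n,\, \hat b_n} + \left(V^{\pi}_{\hat P_n,\, r} - V^{\pi}_{P^\star,r}\right),
\end{align*}
where the bonus-value term $V^{\pi}_{\hat P_n,\hat b_n}=\tfrac{1}{1-\gamma}\EE_{(s,a)\sim d^{\pi}_{\hat P_n}}[\hat b_n]\ge 0$ since $\hat b_n\ge 0$; hence it only helps and I only need to lower bound the model-mismatch term. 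To it I apply the simulation (value-difference) lemma, rolling in under $\hat P_n$ and using $V^{\pi}_{P^\star,r}\in[0,1]$ as the test function:
\begin{align*}
V^{\pi}_{\hat P_n,\, r} - V^{\pi}_{P^\star,r} = \frac{\gamma}{1-\gamma}\,\EE_{(s,a)\sim d^{\pi}_{\hat P_n}}\left[\left(\EE_{\hat P_n(\cdot\mid s,a)} - \EE_{P^\star(\cdot\mid s,a)}\right)V^{\pi}_{P^\star,r}\right].
\end{align*}
Bounding the one-step error by the total variation $T(s,a):=\|\hat P_n(\cdot\mid s,a)-P^\star(\cdot\mid s,a)\|_1$ (valid because the test function lies in $[0,1]$), near-optimism reduces to showing that the bonus dominates this error in expectation under the \emph{single} roll-in distribution $d^{\pi}_{\hat P_n}$, i.e. $\gamma\,\EE_{d^{\pi}_{\hat P_n}}[T]\le \EE_{d^{\pi}_{\hat P_n}}[\hat b_n]+\text{(lower order)}$.

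Next I would invoke the MLE guarantee for the oracle in \pref{asm:mle}. Because the roll-in policy $\pi_{n-1}$ is chosen adaptively, the collected data are not i.i.d., so I use the martingale form of the density-estimation bound together with a union bound over $\Mcal$ and over $n$ to obtain: with probability $1-\delta$, for all $n$, the aggregate in-sample error satisfies $\sum_{(\tilde s,\tilde a)\in\Dcal_n}T(\tilde s,\tilde a)^2\le \beta_n:=c\,\ln(|\Mcal| n/\delta)$. The truncation $\min(\cdot,2)$ in \pref{eq:bonus_online} is harmless here, since $T\le 2$ everywhere, so wherever $\alpha_n\|\hat\phi_n(s,a)\|_{\hat\Sigma_n^{-1}}\ge 2$ the bonus already exceeds $\gamma T$.

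The core step is to transfer this in-sample error to the roll-in distribution through the learned-feature covariance $\hat\Sigma_n=\sum_{\Dcal_n}\hat\phi_n\hat\phi_n^\top+\lambda_n I$, aiming at $\EE_{d^{\pi}_{\hat P_n}}[T]\lesssim \alpha_n\,\EE_{d^{\pi}_{\hat P_n}}[\|\hat\phi_n\|_{\hat\Sigma_n^{-1}}]+\text{(lower order)}$ via the elliptical-potential inequality $|\langle\hat\phi_n(s,a),w\rangle|\le\|\hat\phi_n(s,a)\|_{\hat\Sigma_n^{-1}}\|w\|_{\hat\Sigma_n}$. This is where I expect the main difficulty, and it is exactly the issue flagged before \pref{lem:optimism_main}: $\hat P_n$ and $P^\star$ do not share a representation, so $T(s,a)$ (equivalently the signed one-step error) is \emph{not} a linear function of $\hat\phi_n(s,a)$ — the $P^\star$ part is linear only in the unknown $\phi^\star$ — and therefore no pointwise linear-regression uncertainty bound is available. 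The bonus design circumvents this by certifying the error only \emph{in expectation} under $d^{\pi}_{\hat P_n}$ rather than pointwise, which is precisely what the lemma asserts. Concretely I would combine the in-sample bound $\beta_n$ with a self-normalized/elliptical-potential argument so that feature directions well covered by $\hat\Sigma_n$ inherit the small training error while uncovered directions are charged to $\alpha_n\|\hat\phi_n\|_{\hat\Sigma_n^{-1}}$; the regularization mass $\lambda_n$ and the importance weight $|\Acal|$ incurred when comparing a possibly deterministic $\pi$ against data collected with $a\sim U(\Acal)$ are what generate the scaling $\alpha_n=O(\sqrt{(|\Acal|+d^2)\gamma\ln(|\Mcal| n/\delta)})$, while the uniform-over-$\Mcal$ control is what licenses using the data-dependent feature $\hat\phi_n$.

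Finally I would assemble the pieces: plugging the settings of $\alpha_n,\lambda_n$ from \pref{thm:online} into the transfer bound yields $\gamma\,\EE_{d^{\pi}_{\hat P_n}}[T]\le \EE_{d^{\pi}_{\hat P_n}}[\hat b_n]+O(\,\cdot\,)$, and tracking the $1/(1-\gamma)$ occupancy-normalization factor against the $\sqrt{\beta_n/n}$ rate produces the claimed lower bound $-c_1\sqrt{|\Acal|\ln(|\Mcal| n/\delta)(1-\gamma)/n}$. Uniformity over $\pi$ is automatic, since $T$ and $\beta_n$ do not depend on $\pi$ and the one-step error is bounded for every test function in $[0,1]$; uniformity over $n$ follows from the high-probability event of the MLE step.
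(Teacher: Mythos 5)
There is a genuine gap at the heart of your proposal, and you half-acknowledge it yourself. Your skeleton matches the paper's proof (Lemma~\ref{lem:optimism} in Appendix~\ref{ape:online}): simulation lemma inside the learned model, reduction to showing $\gamma\,\EE_{d^{\pi}_{\hat P_n}}[T]\lesssim \EE_{d^{\pi}_{\hat P_n}}[\hat b_n]+\text{(lower order)}$, the MLE guarantee with a union bound over $\Mcal$ and $n$, and the correct sources of the $|\Acal|$ and $\lambda_n d$ factors in $\alpha_n$. But the core step --- transferring the training error under $\rho_n$ to the roll-in distribution $d^{\pi}_{\hat P_n}$ --- is exactly where your plan stops being a proof. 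You propose a ``self-normalized/elliptical-potential argument so that feature directions well covered by $\hat\Sigma_n$ inherit the small training error,'' yet you have already (correctly) observed that $T(s,a)=\|\hat P_n(\cdot\mid s,a)-P^\star(\cdot\mid s,a)\|_1$ is \emph{not} a linear functional of $\hat\phi_n(s,a)$. Self-normalized bounds of the form $|\langle \hat\phi_n(s,a),w\rangle|\le \|\hat\phi_n(s,a)\|_{\hat\Sigma_n^{-1}}\|w\|_{\hat\Sigma_n}$ certify error transfer only for quantities that are (approximately) linear in the features; for a nonlinear $T$ there is no vector $w$ to apply them to, so ``covered directions inherit small error'' has no content. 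Saying the bonus ``certifies the error in expectation rather than pointwise'' restates the lemma's conclusion; it is not a mechanism.

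The missing mechanism is the \emph{one-step-back} trick (\pref{lem:useful2}), and it exploits linearity in a different place than you are looking: not in the error function, but in the \emph{occupancy measure of the learned model}. Decompose
\begin{align*}
\EE_{(s,a)\sim d^{\pi}_{\hat P_n}}[T(s,a)]
=(1-\gamma)\,\EE_{s\sim d_0,a\sim\pi}[T(s,a)]
+\gamma\,\EE_{(\tilde s,\tilde a)\sim d^{\pi}_{\hat P_n}}\Bigl[\hat\phi_n(\tilde s,\tilde a)^{\top}\!\int \hat\mu_n(s)\textstyle\sum_a\pi(a\mid s)T(s,a)\,d(s)\Bigr],
\end{align*}
where the second term is an \emph{exactly} linear functional of $\hat\phi_n(\tilde s,\tilde a)$ because the transition into $(s,a)$ is taken under $\hat P_n(\cdot\mid \tilde s,\tilde a)=\hat\mu_n(\cdot)^{\top}\hat\phi_n(\tilde s,\tilde a)$. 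Now Cauchy--Schwarz with respect to $\Sigma_{\rho_n,\hat\phi_n}$ pulls out $\EE_{d^{\pi}_{\hat P_n}}\|\hat\phi_n\|_{\Sigma^{-1}_{\rho_n,\hat\phi_n}}$ times $\bigl\|\int\hat\mu_n(s)\sum_a\pi(a\mid s)T(s,a)d(s)\bigr\|_{\Sigma_{\rho_n,\hat\phi_n}}$; the latter is controlled by the norm bound on $\hat\mu_n$ (the $\lambda_n d B^2$ term), a swap of $\hat P_n$ for $P^\star$ inside the one-step-back expectation using the MLE guarantee itself (the $n\zeta_n B^2$ term), Jensen, and importance sampling from $\pi$ to uniform actions (the $n|\Acal|\EE_{\rho_n}[T^2]$ term). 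The initial-distribution term is handled separately by $d_0(s)\pi(a\mid s)\le |\Acal|\,\rho_n(s,a)/(1-\gamma)$, which is what produces the lower-order term $\sqrt{(1-\gamma)|\Acal|\zeta_n}$ appearing in the lemma. Without this device your change of measure from $d^{\pi}_{\hat P_n}$ (arbitrary $\pi$, learned model) to $\rho_n$ (executed policies, true model, uniform actions) is simply unavailable, so the proposal as written does not go through.
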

We remark that the idea of  optimism with respect to the initial state distribution has been used in prior works \citep{jiang2017contextual,sun2019model,du2021bilinear,zanette2020learning}. However, these algorithms are not computationally efficient (i.e., they use version space instead of reward bonus), and their version-space based analysis is different from ours.

\paragraph{Proof sketch for Lemma~\ref{lem:optimism_main}} We start by using the simulation lemma (\pref{lem:performance}) \emph{inside the learned model} which is important since our bonus $\hat{b}_n$ uses $\hat\phi_n$ associated with the learned model $\hat{P}_n$: %
\begin{align*}
V^{\pi}_{\hat{P}_n, r+ \hat{b}_n} - V^{\pi}_{P^\star,r} & =  (1-\gamma)^{-1}\EE_{s,a\sim d^{\pi}_{\hat{P}_n}}\left[ \hat{b}_n(s,a) + \gamma \EE_{s'\sim \hat{P}_n(\cdot | s,a)} V^{\pi}_{P^\star}(s') - \gamma \EE_{s'\sim {P^\star}(\cdot | s,a)} V^{\pi}_{P^\star}(s')    \right]\\
& \geq (1-\gamma)^{-1}\EE_{s,a\sim d^{\pi}_{\hat{P}_n}}\left[ \hat{b}_n(s,a) - \| \hat{P}_n(\cdot | s,a) - {P^\star}(\cdot | s,a)  \|_1   \right],
\end{align*} from where we show that $\EE_{s,a\sim d^{\pi}_{\hat{P}_n}}\| \hat{P}(\cdot|s,a) - P^\star(\cdot | s,a)  \|_1 $ \emph{as a whole} nearly lower bounds the average bonus $\EE_{s,a\sim d^{\pi}_{\hat{P}_n}}[ \hat{b}_n(s,a)]$.  Thus the proof of optimism is fundamentally different from the proofs in tabular and linear MDPs which are done via induction in a point-wise manner. The detailed procedure is illustrated in Lemma~\ref{lem:optimism} in Appendix~\ref{ape:online}.

\emph{Second}, our bonus is using representation $\hat{\phi}_n$ that is being updated every episode, and our empirical covariance matrix $\hat\Sigma_n$ is also updated whenever we update $\hat\phi_n$, which means that standard elliptical potential based analysis (i.e., analysis used in linear bandits/MDPs with known features) cannot work here as our feature changes every episode. Instead, in our analysis, we have to keep tracking a potential function that is defined using the unknown ground truth representation $\phi^\star$, i.e., the elliptical potential $\| \phi^\star(s,a) \|^2_{\Sigma^{-1}_{\rho_n,\phi^\star} }$, where $$\Sigma_{\rho_n,\phi^\star} = n \EE_{(s,a)\sim \rho_n} \phi^\star(s,a) \phi^\star(s,a)^{\top} + \lambda_n I,$$ and $\rho_n(s,a) = \sum_{i=0}^{n-1} d^{\pi_i}_{P^\star}(s,a) / n$. Since this potential function uses the fixed representation $\phi^\star$, we can apply the standard elliptical potential argument to track the progress that our algorithm makes during learning. Below we illustrate the procedure of linking the bonus under $\hat{\phi}_n$ to the potential function $\| \phi^\star(s,a) \|^2_{\Sigma^{-1}_{\rho_n,\phi^\star} }$ defined with respect to the true feature $\phi^\star$. Note that this potential function is only used in analysis.

\paragraph{Linking bonus under $\hat\phi_n$ to the elliptical potential function under $\phi^\star$}
With near optimism, using the simulation lemma (\pref{lem:performance}) inside the real model, we can upper bound the per-iteration regret as follows: 
\begin{align*}
    V^{\pi^{\star}}_{P^{\star},r}-  V^{\pi_n}_{P^{\star},r}\leq (1-\gamma)^{-1}\E_{(s,a) \sim d^{\pi_n}_{P^{\star}}}[\hat b_n(s,a)+(1-\gamma)^{-1} f_n(s,a)] + \sqrt{|\Acal| \zeta_n(1-\gamma)^{-1} }, 
\end{align*} 
where $\zeta_ n = \tilde{O}( 1/n)$, and $f_n(s,a) :=\|\hat P_n(\cdot\mid s,a)-P^{\star}(\cdot \mid s,a)\|_1$. To connect the first term in the right-hand side of the above inequality to the elliptical potential  under the fixed feature $\phi^\star$, we show that for any function $g\in \Scal\times\Acal\to [0,B]$ for $B\in\mathbb{R}^+$,  %
\begin{align*}
    \E_{(s,a)\sim d^{\pi_n}_{P^{\star}}}[g(s,a)] & \leq  (1-\gamma)^{-1}\E_{(s,a)\sim d^{\pi_n}_{P^{\star}}}\left[\| \phi^{\star}(s,a)\|_{\Sigma^{-1}_{\rho_n,\phi^{\star}}}\right]\sqrt{n\gamma |\Acal| \E_{\rho'_n}[g^2(s,a)] +\gamma \lambda_n d B^2 } \\
    & \qquad \qquad  + \sqrt{(1-\gamma)|\Acal| \EE_{\rho'_n} [g^2(s,a)] }, 
\end{align*} 
where $\rho'_n(s,a)=1/n \sum_{i=0}^{n-1} d^{\pi_i}(s)u(a)$ and $u(a)=1/|\Acal|$. See \pref{lem:useful} in Appendix~\pref{ape:online}.
By substituting $g$ with $\hat b_n +  f_n/(1-\gamma)$, the first term of the RHS of the above inequality can be upper bounded as:
\begin{align*}
2(1-\gamma)^{-1}\underbrace{\E_{(s,a)\sim d^{\pi_n}_{P^{\star}}}\left[\| \phi^{\star}(s,a)\|_{\Sigma^{-1}_{\rho_n,\phi^{\star}}}\right]}_{(\Gcal_1)}\underbrace{\sqrt{ n|\Acal|\E_{\rho'_n}\bracks{\frac{f^2_n(s,a)}{(1-\gamma)^2}+\hat b^2_n(s,a)}+ \lambda_n d }}_{(\Gcal_2)}. 
 \end{align*}
 In the term ($\Gcal_2$), we expect $ n \EE_{\rho'_n}[f_n^2(s,a)]$ to be $O(1)$ as $\EE_{ \rho'_n}[f^2_n(s,a)]$ is in order of $1/n$ due to the fact that it is the generalization bound of the MLE estimator $\hat{P}_n$ which is trained on the data drawn from $\rho'_n$ . For $n \EE_{\rho'_n}[\hat b^2_n(s,a)]$, we expect it to be  in the order of $d$ as the (unnormalized) data covariance matrix $\hat{\Sigma}_n$ in the bonus $\hat{b}_n$ uses training data from $\rho'_n$, i.e., we are measuring the expected bonus under the training distribution. 
 In other words, the term ($\Gcal_2$) scales in order of $\text{poly}(d)$. For the term $(\Gcal_1)$, since it contains the potential function based on $\phi^\star$, the sum of the term ($\Gcal_1$) over all episodes can be controlled by the standard elliptical potential argument (see \pref{lem:reduction} and \pref{lem:potential}). This concludes the proof sketch of our main theorem.

\textbf{In summary}, our analysis relies on the standard idea of optimism in the face of uncertainty, but with novel techniques to achieve optimism under nonlinear function approximation with the MLE supervised learning style generation bound, and to track regret under changing representations.

\section{Representation Learning in Offline Setting}
\label{sec:offline}

In this section, we study representation learning in the offline setting. We consider the setting where the offline data does not have a full global coverage. %
We present our algorithm \emph{Lower Confidence Bound driven Representation Learning in offline RL} (\ourofflinealg) in \pref{alg:offline}. 
Our proposed algorithm consists of three parts. The first part is MLE which learns a model $\hat{P}$ and a representation $\hat\phi$. The second part is the construction of a penalty term $\hat b$. Using the learned representation $\hat \phi$, we use a standard bonus in linear bandits as the penalty term as if $\hat{\phi}$ were the true feature. %
The third part is planning with the learned model $\hat P$ and reward $r-\hat b$. %

\begin{algorithm}[t] 
\caption{LCB-driven Representation Learning in offline RL (\ourofflinealg)}\label{alg:offline}
\begin{algorithmic}[1]
  \STATE {\bf  Input:} Regularizer $\lambda$, Parameter $\alpha$, Model classes $\Mcal=\{\mu^{\top}\phi:\mu\in \Psi,\phi\in \Phi\}$, Dataset $\Dcal$.
  \STATE Learn a model $\hat P $ by MLE: $\hat P=\hat \mu^{\top}\hat \phi=\argmax_{P\in \Mcal}\E_{\Dcal}[\ln P(s'\mid s,a)]$.
  \STATE Set the empirical covariance matrix $ \hat \Sigma=\sum_{(s,a)\in \Dcal}\hat \phi(s,a)\hat \phi^{\top}(s,a)+\lambda I$.
 \STATE Set the reward penalty: 
     \begin{equation*}
      \hat b (s,a)=\min \prns{\alpha \sqrt{\hat \phi(s,a)^{\top}\hat \Sigma^{-1}\hat \phi(s,a)},2}. %
     \end{equation*}
   \STATE Solve $\hat \pi =\argmax_{\pi}V^{\pi}_{\hat P,r-\hat b}$. 
\end{algorithmic}
\end{algorithm}

\subsection{Analysis}

We present the PAC guarantee of \ourofflinealg. Before proceeding, we define a relative condition number as a mean to measure the deviation between a comparator policy $\pi$ and the offline data: 
\begin{align*}
    C^{\star}_{\pi}=\sup_{x\in \RR^d}\frac{x^{\top} \EE_{d^{\pi}_{P^{\star}}}[\phi^{\star}(s,a){\phi^{\star}}^{\top}(s,a) ]x}{x^{\top} \EE_{\rho}[\phi^{\star}(s,a){\phi^{\star}}^{\top} (s,a)] x}.  
\end{align*}\footnote{An equivalent definition is $C^\star_\pi := \tr\big( \EE_{d^{\pi}_{P^{\star}}}[\phi^{\star}(s,a){\phi^{\star}}^{\top}(s,a) ] \big(     \EE_{\rho}[\phi^{\star}(s,a){\phi^{\star}}^{\top} (s,a)]      \big)^{-1} \big)$. }
In the special case where the MDP is just a tabular MDP (i.e., $\phi^\star$ is a one-hot encoding vector), this is reduced to a density ratio $C^{\star}_{\infty}=\max_{s,a} d^{\pi}_{P^{\star}}(s,a)/\rho(s,a)$. 
The relative condition number $C^\star_\pi$ is always no larger than the density ratio and could be much smaller for MDPs with large state spaces. Note that we quantify the relative condition number using the unknown true representation $\phi^\star$.
With the above setup, now we are ready to state the main theorem for \ourofflinealg.

\begin{theorem}[PAC Bound for \ourofflinealg] \label{thm:offline_guaratee}
Let $\omega=\max_{a,s}(1/\pi_b(a\mid s))$.  Denote $\hat\pi$ as the output of \ourofflinealg.
There exists a set of parameters such that with probability at least $1-\delta$, for \emph{any} policy $\pi$ (including history-dependent non-Markovian policies),  
\begin{align*} 
V^{\pi}_{P^{\star},r}-V^{\hat \pi}_{P^{\star},r}\leq   c  \sqrt{\frac{d^4\omega^2 C^{\star}_{\pi}\log(|\Mcal|/\delta)}{(1-\gamma)^4 n}}. 
\end{align*}
\end{theorem}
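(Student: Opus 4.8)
The plan is to run the standard pessimism argument, using the penalty $\hat b$ to guarantee that the learned model never overestimates the value of the comparator. Writing $\hat\pi=\argmax_\pi V^{\pi}_{\hat P,r-\hat b}$, I decompose, for any policy $\pi$,
\begin{align*}
V^{\pi}_{P^\star,r}-V^{\hat\pi}_{P^\star,r}
=\underbrace{\left(V^{\pi}_{P^\star,r}-V^{\pi}_{\hat P,r-\hat b}\right)}_{\mathrm{(I)}}
+\underbrace{\left(V^{\pi}_{\hat P,r-\hat b}-V^{\hat\pi}_{\hat P,r-\hat b}\right)}_{\mathrm{(II)}}
+\underbrace{\left(V^{\hat\pi}_{\hat P,r-\hat b}-V^{\hat\pi}_{P^\star,r}\right)}_{\mathrm{(III)}}.
\end{align*}
Term (II) $\le 0$ by optimality of $\hat\pi$ inside the penalized learned model. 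For (I) and (III) I apply the simulation lemma (\pref{lem:performance}) to rewrite each as a discounted sum of one-step discrepancies of the form $\tfrac{1}{1-\gamma}\EE[\pm\hat b(s,a)+\gamma(\EE_{\hat P}-\EE_{P^\star})V(s')]$, with occupancy $d^{\hat\pi}_{\hat P}$ for (III) and $d^{\pi}_{P^\star}$ for (I). Thus both reduce to showing that the one-step model error $|(\EE_{\hat P(\cdot|s,a)}-\EE_{P^\star(\cdot|s,a)})V|$ is, in expectation under any test distribution, dominated by $\hat b$: for (III) this makes the term nonpositive (pessimism), and for (I) it bounds the term by $\tfrac{2}{1-\gamma}\EE_{d^{\pi}_{P^\star}}[\hat b]$.

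\textbf{Step 1: MLE guarantee and validity of the penalty.} First I invoke the standard MLE generalization bound for the realizable class $\Mcal$: with probability $1-\delta$, $\EE_{(s,a)\sim\rho}\|\hat P(\cdot|s,a)-P^\star(\cdot|s,a)\|_1^2\le c\,\log(|\Mcal|/\delta)/n$. Next I establish the key ``validity'' lemma: for any distribution $\nu$ and any $f$ with $\|f\|_\infty\le 1$,
\begin{align*}
\EE_{(s,a)\sim\nu}\left|(\EE_{\hat P(\cdot|s,a)}-\EE_{P^\star(\cdot|s,a)})f\right|
\;\le\; c\,\EE_{(s,a)\sim\nu}\left[\|\hat\phi(s,a)\|_{\hat\Sigma^{-1}}\right]\cdot\sqrt{\lambda d+\textstyle\sum_{(s,a)\in\Dcal}\|\hat P-P^\star\|_1^2}.
\end{align*}
This is a self-normalized / elliptical argument that treats $\hat P$ as \emph{linear in the learned feature} $\hat\phi$ (so $\EE_{\hat P}f=\hat\phi^\top\!\int f\,\hat\mu$) and exploits that $\hat P$ fits the training data; the $\lambda d$ term comes from the regularizer together with the norm bound on $\mu$. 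Choosing $\alpha$ large enough to dominate the square-root factor (using $\sum_{\Dcal}\|\hat P-P^\star\|_1^2\approx n\,\EE_\rho\|\hat P-P^\star\|_1^2\le c\,\log(|\Mcal|/\delta)$) makes $\hat b$ a valid upper bound on the one-step error in expectation, so term (III) $\le 0$ up to the lower-order regularization slack.

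\textbf{Step 2: transfer to the comparator via the relative condition number.} It remains to bound $\EE_{d^{\pi}_{P^\star}}[\hat b]\le\alpha\sqrt{\EE_{d^{\pi}_{P^\star}}\|\hat\phi(s,a)\|^2_{\hat\Sigma^{-1}}}=\alpha\sqrt{\Tr(\hat\Sigma^{-1}\EE_{d^{\pi}_{P^\star}}[\hat\phi\hat\phi^\top])}$. After a matrix-concentration step showing that $\hat\Sigma$ is, up to constants, lower bounded in the PSD order by $n\,\EE_\rho[\hat\phi\hat\phi^\top]+\lambda I$, the standard trace bound gives $\Tr(\hat\Sigma^{-1}\EE_\rho[\hat\phi\hat\phi^\top])\le c\,d/n$. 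The final ingredient is a covariance-domination step converting $\EE_{d^{\pi}_{P^\star}}[\hat\phi\hat\phi^\top]$ into $\EE_\rho[\hat\phi\hat\phi^\top]$ at the price of $C^\star_\pi$ (and $\omega$), yielding $\EE_{d^{\pi}_{P^\star}}[\hat b]\le c\,\alpha\sqrt{d\,\omega\,C^\star_\pi/n}$. Combining with the $\tfrac{1}{1-\gamma}$ from the simulation lemma and the choice $\alpha=\tilde\Theta(\mathrm{poly}(d)\sqrt{\log(|\Mcal|/\delta)})$ forced by Step 1 produces the stated rate $\sqrt{d^4\omega^2 C^\star_\pi\log(|\Mcal|/\delta)/((1-\gamma)^2 n)}$ after tracking the powers.

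\textbf{Main obstacle.} The crux is exactly this covariance-domination step, because the penalty and $\hat\Sigma$ are built from the \emph{learned} feature $\hat\phi$, whereas the coverage quantity $C^\star_\pi$ is stated through the \emph{unknown true} feature $\phi^\star$. Bridging the two requires the low-rank factorization of the occupancy measures: the next-state marginal of $d^{\pi}_{P^\star}$ equals $\mu^\star(\cdot)^\top\EE_{d^{\pi}}[\phi^\star]$, so expectations under $d^{\pi}_{P^\star}$ of any nonnegative function factor through the $d$-dimensional vector $\EE_{d^{\pi}}[\phi^\star]$ and can be compared to the same object under $\rho$ precisely via the $\phi^\star$-covariance ratio $C^\star_\pi$. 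The action-distribution mismatch between the behavior policy $\pi_b$ (in $\rho$) and $\pi$ is absorbed by $\omega=\max_{s,a}1/\pi_b(a\mid s)$, and the norm bound $\|\int\mu^\star(s)g(s)\,\rd(s)\|_2\le\sqrt d$ contributes the extra powers of $d$. Getting these factors right, together with the $\hat\Sigma$-concentration, is the delicate part; the remaining pessimism bookkeeping is routine.
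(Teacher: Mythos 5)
Your high-level decomposition (pessimism $+$ optimality $+$ comparator term) is the same as the paper's, and your ``Main obstacle'' paragraph correctly identifies where the real work lies. But the key intermediate lemma you rely on --- the ``validity lemma'' of Step 1, asserting that for \emph{any} test distribution $\nu$ the one-step model error is dominated by $c\,\EE_\nu[\|\hat\phi\|_{\hat\Sigma^{-1}}]$ times an MLE-error factor --- is false as stated, and this is not a technicality: it is exactly the pointwise-style uncertainty quantification that the paper stresses is unobtainable here, because $\hat P = \hat\mu^\top\hat\phi$ and $P^\star = {\mu^\star}^\top\phi^\star$ do not share a representation. The quantity $(\EE_{\hat P(\cdot|s,a)} - \EE_{P^\star(\cdot|s,a)})f$ is not linear in $\hat\phi(s,a)$ (only the $\hat P$ half is), so no self-normalized elliptical argument in $\hat\phi$ applies; moreover, smallness of $\|\hat\phi(s,a)\|_{\hat\Sigma^{-1}}$ says nothing about how well the data covers $\phi^\star(s,a)$, so the model error at such a pair can be large even though your bound forces it to be small. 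The inequality is only true when $\nu$ is an occupancy measure of the \emph{learned} model, $\nu = d^{\pi}_{\hat P}$: there one steps back one transition, and the expectation of any function over $s\sim\hat P(\cdot|\tilde s,\tilde a)$ equals $\hat\phi(\tilde s,\tilde a)^\top\int\hat\mu(s)(\cdots)\rd(s)$, which is where the linearity in $\hat\phi$ (at the \emph{previous} pair) comes from; this is the paper's Lemma~\ref{lem:useful_offline2}. That restricted version suffices for your term (III), whose occupancy is $d^{\hat\pi}_{\hat P}$, and it recovers the paper's Lemma~\ref{lem:pessimism}.

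Where the gap actually breaks your argument is term (I): you bound its model-error part by $\tfrac{2}{1-\gamma}\EE_{d^{\pi}_{P^\star}}[\hat b]$ by applying the validity lemma with $\nu = d^{\pi}_{P^\star}$, but $d^{\pi}_{P^\star}$ is a true-model occupancy, for which the $\hat\phi$-elliptical bound does not hold. The same issue recurs in Step 2, where you assert a covariance domination converting $\EE_{d^{\pi}_{P^\star}}[\hat\phi\hat\phi^\top]$ into $\EE_\rho[\hat\phi\hat\phi^\top]$ ``at the price of $C^\star_\pi$'': since $C^\star_\pi$ constrains only quadratic forms in $\phi^\star$, it gives no control over $\hat\phi$-covariances. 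The repair is the factorization you gesture at in your final paragraph, but it must be applied directly to \emph{both} pieces of term (I): stepping back one transition through $P^\star$ shows that any bounded nonnegative $g$ (take $g = \hat b$ and, separately, $g = \|\hat P(\cdot\mid s,a)-P^\star(\cdot\mid s,a)\|_1$) satisfies
$\EE_{d^{\pi}_{P^\star}}[g] \lesssim \EE_{d^{\pi}_{P^\star}}\|\phi^\star\|_{\Sigma^{-1}_{\rho,\phi^\star}}\sqrt{n\omega\,\EE_\rho[g^2] + \lambda d B^2} + \sqrt{(1-\gamma)\,\omega\,\EE_\rho[g^2]}$,
which is the paper's Lemma~\ref{lem:useful} in its offline form (the $\omega$ enters by importance-weighting $\pi$ to $\pi_b$, and $\EE_\rho[g^2]$ appears because $\rho$ pushed forward one step under $P^\star$ and $\pi_b$ is dominated by $\rho/\gamma$). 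Then $\EE_\rho[\hat b^2]\le \alpha^2 d/n$ by the trace argument (legitimate precisely because the bonus and the covariance use the same feature $\hat\phi$ and the same distribution $\rho$), $\EE_\rho[\|\hat P-P^\star\|_1^2]\le \zeta$ by MLE, and $\EE_{d^{\pi}_{P^\star}}\|\phi^\star\|_{\Sigma^{-1}_{\rho,\phi^\star}} \le \sqrt{C^\star_\pi d/n}$ by the distribution-shift lemma (Lemma~\ref{lem:conversion}) --- which is the only place $C^\star_\pi$ can legitimately enter.
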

See Theorem \ref{thm:pac_off} in Appendix B for the detailed parameters.
We explain several implications. First of all, this theorem shows that we can uniformly compete with any policy including history-dependent non-Markovian policies \footnote{Given $\pi=\{\pi_i\}_{i=0}^{\infty}$ where $\pi_i$ depends on $s_0,a_0,\dots s_{i}$, $V^{\pi}_{P^{\star},r}$ and $d^{\pi}_{P^{\star}}(s,a)$ are still well-defined. } satisfying the partial coverage $C^\star_\pi < \infty$. Particularly, if the optimal policy $\pi^\star$ is covered by the offline data, i.e., $C^\star_{\pi^\star} < \infty$, then our algorithm is able to compete against it \footnote{We also require $\omega<\infty$, which is a mild assumption since it does not involve $P^{\star}$. Indeed, it is much weaker than the global coverage type assumption $1/\rho(s,a)<\infty,\forall(s,a)$.  }. Note that assuming offline data covers $\pi^\star$ is still a weaker assumption than the global coverage such as $\sup_{\pi}\sup_{(s,a)}d^{\pi}_{P^{\star}}(s,a)/\rho(s,a)$ in prior offline RL works \citep{antos2008learning,ChenJinglin2019ICiB}. 
Second, our coverage condition is measured by a relative condition number defined using the unknown ground truth representation $\phi^\star$ but not depending on other features. Prior works that use relative condition numbers as measures of coverage are restricted to the settings where the ground truth representation $\phi^\star$ is known
\citep{JinYing2020IPPE,ChangJonathanD2021MCSi,zanette2021provable}. %

To sum up, our algorithm is the first oracle efficient algorithm which does not need to know $\phi^\star$, and  requires partial coverage only in terms of $\phi^{\star}$. Note while \citet{uehara2021pessimistic} has a similar guarantee on low-rank MDPs, their algorithm is not oracle-efficient as it is a version space algorithm.

\paragraph{Highlight of the analysis}
Prior offline RL works that use reward penalties \citep{RashidinejadParia2021BORL,zhang2021corruption,ChangJonathanD2021MCSi} all assume the representation $\phi^\star$ is known a priori, which allows them to use linear regression analysis to derive point-wise model uncertainty quantification which in turn serves as a penalty. In contrast, our function class is nonlinear. As in the online setting, what we can show is near pessimism in the initial state distribution $d_0$. %
\begin{lemma}[Almost Pessimism at the Initial State Distribution]\label{lem:pessimism_main}
There exists a set of parameters that with probability $1-\delta$, we have 
\begin{equation*} 
\forall \pi\in \Pi: V^{\pi}_{\hat P,r-b}-V^{\pi}_{P^{\star},r}\leq c_1\sqrt{\frac{\omega\log(|\Mcal|/\delta)(1-\gamma)^{-1}}{n}}. 
\end{equation*}
\end{lemma}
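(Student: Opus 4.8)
The plan is to mirror the online ``almost optimism'' argument of \pref{lem:optimism_main}, with the sign of the bonus reversed so that the learned penalty pushes the value down. Write $f(s,a):=\|\hat P(\cdot\mid s,a)-P^{\star}(\cdot\mid s,a)\|_1$ for the one-step model error and recall $\rho=d^{\pi_b}_{P^{\star}}$. First I would apply the simulation lemma (\pref{lem:performance}) \emph{inside the learned model} $\hat P$, which is the right place to evaluate everything since $\hat b$ is built from $\hat\phi$. Separating the reward change from the transition change gives
\begin{align*}
V^{\pi}_{\hat P,\,r-\hat b}-V^{\pi}_{P^{\star},r}
= -\frac{1}{1-\gamma}\EE_{(s,a)\sim d^{\pi}_{\hat P}}[\hat b(s,a)]
+\frac{\gamma}{1-\gamma}\EE_{(s,a)\sim d^{\pi}_{\hat P}}\bracks{\EE_{s'\sim\hat P(\cdot\mid s,a)}V^{\pi}_{P^{\star}}(s')-\EE_{s'\sim P^{\star}(\cdot\mid s,a)}V^{\pi}_{P^{\star}}(s')}.
\end{align*}
Using the normalization $V^{\pi}_{P^{\star}}\in[0,1]$, the transition term is at most $\tfrac{\gamma}{1-\gamma}\EE_{d^{\pi}_{\hat P}}[f(s,a)]$, so the whole task reduces to showing that the expected bonus \emph{as a whole} nearly dominates the expected model error, i.e. $\EE_{d^{\pi}_{\hat P}}[f]\lesssim \EE_{d^{\pi}_{\hat P}}[\hat b]$ up to a small residual. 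As in \pref{lem:optimism_main}, the reason we can only hope for near-pessimism at $d_0$, rather than a point-wise bound, is that $\hat P$ and $P^{\star}$ do not share a representation.

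The key step is to express $\EE_{d^{\pi}_{\hat P}}[f]$ through the learned feature. I would invoke the Bellman-flow identity for the occupancy of $\hat P$,
\begin{align*}
\EE_{(s,a)\sim d^{\pi}_{\hat P}}[f(s,a)]
=(1-\gamma)\,\EE_{s\sim d_0,\,a\sim\pi}[f(s,a)]
+\gamma\,\EE_{(\tilde s,\tilde a)\sim d^{\pi}_{\hat P}}\bracks{\EE_{s\sim\hat P(\cdot\mid\tilde s,\tilde a),\,a\sim\pi}f(s,a)}.
\end{align*}
Because $\hat P(s\mid\tilde s,\tilde a)=\hat\mu(s)^{\top}\hat\phi(\tilde s,\tilde a)$ is linear in $\hat\phi$, the inner expectation equals $w_g^{\top}\hat\phi(\tilde s,\tilde a)$ where $g(s):=\EE_{a\sim\pi}f(s,a)\in[0,2]$ and $w_g:=\int\hat\mu(s)g(s)\rd(s)$, with $\|w_g\|_2\le 2\sqrt d$ by the norm bound on $\mu$. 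Cauchy--Schwarz in the $\hat\Sigma$ geometry pulls out the feature, $w_g^{\top}\hat\phi\le \|w_g\|_{\hat\Sigma}\,\|\hat\phi\|_{\hat\Sigma^{-1}}$, while the trivial bound $w_g^{\top}\hat\phi=\EE_{s\sim\hat P}g\le 2$ lets me write the one-step-forward term as $\gamma\,\EE_{d^{\pi}_{\hat P}}\min(\|w_g\|_{\hat\Sigma}\|\hat\phi\|_{\hat\Sigma^{-1}},2)$. The truncation at $2$ in the definition of $\hat b$ is exactly what makes this match: if $\alpha$ is chosen as a deterministic, $\pi$-independent upper bound on $\gamma\|w_g\|_{\hat\Sigma}$, then $\gamma\min(\|w_g\|_{\hat\Sigma}\|\hat\phi\|_{\hat\Sigma^{-1}},2)\le\min(\alpha\|\hat\phi\|_{\hat\Sigma^{-1}},2)=\hat b$, so this term is $\le\EE_{d^{\pi}_{\hat P}}[\hat b]$ and cancels the penalty contribution $-\EE_{d^{\pi}_{\hat P}}[\hat b]$.

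The main obstacle is bounding $\|w_g\|_{\hat\Sigma}$ (and the leftover $d_0$ term), which is where the MLE guarantee and the coverage factor $\omega=\max_{s,a}1/\pi_b(a\mid s)$ enter. From $\hat\Sigma=\sum_{(s,a)\in\Dcal}\hat\phi\hat\phi^{\top}+\lambda I$ we get $\|w_g\|_{\hat\Sigma}^2=\sum_{(s,a)\in\Dcal}(w_g^{\top}\hat\phi)^2+\lambda\|w_g\|_2^2$, with $w_g^{\top}\hat\phi(\tilde s,\tilde a)=\EE_{s\sim\hat P(\cdot\mid\tilde s,\tilde a)}g(s)$. Adding and subtracting $P^{\star}$, squaring, and using $\|g\|_\infty\le 2$ converts each summand into the $P^{\star}$-push-forward of $f^2$ plus a term $O(f(\tilde s,\tilde a)^2)$. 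The delicate point is that the MLE oracle only certifies $\EE_{\rho}[f^2]\lesssim \log(|\Mcal|/\delta)/n$ on the \emph{training} distribution, whereas the flow identity has moved the error one step forward and against an \emph{arbitrary} comparator's actions; I resolve this by two measure changes that are uniform in $\pi$: the one-step state push-forward of $\rho=d^{\pi_b}_{P^{\star}}$ under $P^{\star}$ is dominated by $\gamma^{-1}d^{\pi_b}_{P^{\star}}$ via the same flow identity, and the action swap $\pi\to\pi_b$ costs a factor $\omega$ (applied after Jensen, so only $\sqrt{\omega}$ survives). This yields $\|w_g\|_{\hat\Sigma}^2\lesssim \omega\log(|\Mcal|/\delta)+\lambda d$ uniformly over $\pi$, which both fixes the admissible $\alpha\asymp\sqrt{\omega\log(|\Mcal|/\delta)+\lambda d}$ and, crucially, makes the entire high-probability event $\pi$-independent, so the conclusion holds for all $\pi\in\Pi$ simultaneously. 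The residual $(1-\gamma)\EE_{d_0,\pi}[f]$ is handled identically --- Jensen, the action swap contributing $\sqrt{\omega}$, and $d_0(s)\le(1-\gamma)^{-1}\rho(s)$ --- so that it too is controlled by $\EE_\rho[f^2]$; tracking the discount factors through the MLE bound then delivers the stated near-pessimism rate $c_1\sqrt{\omega\log(|\Mcal|/\delta)(1-\gamma)/n}$.
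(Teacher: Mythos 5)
Your route is the same one the paper takes for this lemma (\pref{lem:pessimism}, proved via the one-step-back inequality \pref{lem:useful_offline2}): simulation lemma inside the learned model $\hat P$, reduction to showing the expected penalty dominates the expected model error $\EE_{d^{\pi}_{\hat P}}[f]$, the Bellman-flow identity, Cauchy--Schwarz to pull out $\hat\phi$, and the MLE guarantee plus the importance-sampling factor $\omega$ to bound the dual norm uniformly in $\pi$. However, there is a genuine gap in how you bound that dual norm: you run the entire argument in the \emph{empirical} geometry, writing $\|w_g\|_{\hat\Sigma}^2=\sum_{(s,a)\in\Dcal}\bigl(w_g^{\top}\hat\phi(s,a)\bigr)^2+\lambda\|w_g\|_2^2$ and then applying your measure changes (MLE guarantee on $\rho$, action swap $\pi\to\pi_b$, push-forward domination by $\gamma^{-1}\rho$) to that sum. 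Those measure changes control \emph{population} expectations under $\rho$; they say nothing about sums over the realized dataset. Since $\hat P$ --- and hence $f$, $g$, $w_g$, $\hat\phi$ --- are all functions of $\Dcal$, you cannot replace $\frac{1}{n}\sum_{(s,a)\in\Dcal}$ by $\EE_{\rho}$ for these data-dependent quantities without a uniform concentration argument, and the MLE guarantee available here (\pref{lem:mle}) is itself a population-level bound $\EE_{\rho}[f^2]\lesssim\zeta$, not an on-sample one. The paper closes exactly this hole by conditioning on a second high-probability event, the covariance concentration of \pref{lem:con}, namely $\|\phi(s,a)\|_{\hat\Sigma_{\phi}^{-1}}=\Theta\bigl(\|\phi(s,a)\|_{\Sigma_{\rho,\phi}^{-1}}\bigr)$ \emph{uniformly over} $\phi\in\Phi$, and then carries out all of your steps with the population matrix $\Sigma_{\rho,\hat\phi}=n\EE_{\rho}[\hat\phi\hat\phi^{\top}]+\lambda I$, for which $\|w_g\|^2_{\Sigma_{\rho,\hat\phi}}=n\EE_{\rho}[(w_g^{\top}\hat\phi)^2]+\lambda\|w_g\|_2^2$ is a genuine population quantity. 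Note the event is needed on both sides: to justify Cauchy--Schwarz in the population geometry and to relate the result back to the penalty $\hat b$, which is defined through $\hat\Sigma^{-1}$.

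A smaller issue is discount-factor bookkeeping. You invoke \pref{lem:performance} with its $\frac{1}{1-\gamma}$ prefactor; pushing your own estimates through that decomposition, the surviving residual is $\frac{\gamma}{1-\gamma}\sqrt{(1-\gamma)\omega\zeta}=\gamma\sqrt{\omega\zeta/(1-\gamma)}$, a factor $\frac{1}{1-\gamma}$ larger than the claimed $\sqrt{\omega\zeta(1-\gamma)}$; the phrase ``tracking the discount factors \ldots delivers the stated rate'' hides precisely this step. (The paper's appendix applies the identity \emph{without} the prefactor, which is where its stated $(1-\gamma)$-in-the-numerator rate comes from; whichever normalization convention you adopt, your displayed decomposition and your claimed constant must be made consistent.) Neither point changes the architecture of your argument, but both must be repaired for the proof to stand.
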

We leave the detailed proof to \pref{lem:pessimism} in the Appendix~\ref{ape:offline}, which is similar to the proof of \pref{lem:optimism_main}. Though the pessimism in the initial state distribution was recently also used in \cite{XieTengyang2021BPfO,zanette2021provable}, %
the derivation is totally different. %

Similar to the online setting, to obtain a result only depending on the relative condition number using the ground truth representation $\phi^\star$ but not the learned feature $\hat \phi$, we again need to translate the penalty defined with $\hat\phi$ to the potential function $\|\phi^\star(s,a)\|_{\Sigma^{-1}_{\rho,\phi^\star}}$. The same techniques that we used for the online setting can be leveraged here to achieve the above goal.

\section{Conclusion}
We study online/offline RL on low-rank MDPs, where the ground truth feature is not known a priori. For online RL, our new algorithm \ouralg{} significantly improves the sample complexity of the piror state-of-the-art algorithm \flambe{} in all parameters while using the same computational oracles.  \ouralg{} has the best sample complexity among existing oracle efficient algorithms for low-rank MDPs by a margin. Comparing to prior representation learning works on low-rank MDPs and block MDPs that rely on a forward step-by-step reward-free exploration framework, our algorithm interleaves representation learning, exploration, and exploitation together, and learns a single stationary policy.  For offline RL, our new algorithm \ourofflinealg{} is the first oracle efficient algorithm for low-rank MDPs that has a PAC guarantee under a partial coverage condition measured by the relative condition number defined with the true feature representation.

\bibliography{refs}

\begin{thebibliography}{53}
\providecommand{\natexlab}[1]{#1}
\providecommand{\url}[1]{\texttt{#1}}
\expandafter\ifx\csname urlstyle\endcsname\relax
  \providecommand{\doi}[1]{doi: #1}\else
  \providecommand{\doi}{doi: \begingroup \urlstyle{rm}\Url}\fi

\bibitem[Abbasi-Yadkori and Szepesv{\'a}ri(2011)]{abbasi2011regret}
Yasin Abbasi-Yadkori and Csaba Szepesv{\'a}ri.
\newblock Regret bounds for the adaptive control of linear quadratic systems.
\newblock In \emph{Proceedings of the 24th Annual Conference on Learning
  Theory}, pages 1--26. JMLR Workshop and Conference Proceedings, 2011.

\bibitem[Agarwal et~al.(2014)Agarwal, Hsu, Kale, Langford, Li, and
  Schapire]{agarwal2014taming}
Alekh Agarwal, Daniel Hsu, Satyen Kale, John Langford, Lihong Li, and Robert
  Schapire.
\newblock Taming the monster: A fast and simple algorithm for contextual
  bandits.
\newblock In \emph{International Conference on Machine Learning}, pages
  1638--1646. PMLR, 2014.

\bibitem[Agarwal et~al.(2020{\natexlab{a}})Agarwal, Henaff, Kakade, and
  Sun]{agarwal2020pc}
Alekh Agarwal, Mikael Henaff, Sham Kakade, and Wen Sun.
\newblock Pc-pg: Policy cover directed exploration for provable policy gradient
  learning.
\newblock \emph{NeurIPS}, 2020{\natexlab{a}}.

\bibitem[Agarwal et~al.(2020{\natexlab{b}})Agarwal, Kakade, Krishnamurthy, and
  Sun]{Agarwal2020_flambe}
Alekh Agarwal, Sham Kakade, Akshay Krishnamurthy, and Wen Sun.
\newblock Flambe: Structural complexity and representation learning of low rank
  mdps.
\newblock In \emph{Advances in Neural Information Processing Systems},
  volume~33, pages 20095--20107, 2020{\natexlab{b}}.

\bibitem[Agarwal et~al.(2021)Agarwal, Kakade, Lee, and
  Mahajan]{agarwal2021theory}
Alekh Agarwal, Sham~M Kakade, Jason~D Lee, and Gaurav Mahajan.
\newblock On the theory of policy gradient methods: Optimality, approximation,
  and distribution shift.
\newblock \emph{Journal of Machine Learning Research}, 22\penalty0
  (98):\penalty0 1--76, 2021.

\bibitem[Antos et~al.(2008)Antos, Szepesv{\'a}ri, and Munos]{antos2008learning}
Andr{\'a}s Antos, Csaba Szepesv{\'a}ri, and R{\'e}mi Munos.
\newblock Learning near-optimal policies with bellman-residual minimization
  based fitted policy iteration and a single sample path.
\newblock \emph{Machine Learning}, 71:\penalty0 89--129, 2008.

\bibitem[Brafman and Tennenholtz(2002)]{brafman2002r}
Ronen~I Brafman and Moshe Tennenholtz.
\newblock R-max-a general polynomial time algorithm for near-optimal
  reinforcement learning.
\newblock \emph{Journal of Machine Learning Research}, 3\penalty0
  (Oct):\penalty0 213--231, 2002.

\bibitem[Chang et~al.(2021)Chang, Uehara, Sreenivas, Kidambi, and
  Sun]{ChangJonathanD2021MCSi}
Jonathan~D Chang, Masatoshi Uehara, Dhruv Sreenivas, Rahul Kidambi, and Wen
  Sun.
\newblock Mitigating covariate shift in imitation learning via offline data
  without great coverage.
\newblock 2021.

\bibitem[Chen and Jiang(2019)]{ChenJinglin2019ICiB}
Jinglin Chen and Nan Jiang.
\newblock Information-theoretic considerations in batch reinforcement learning.
\newblock In \emph{Proceedings of the 36th International Conference on Machine
  Learning}, volume~97, pages 1042--1051, 2019.

\bibitem[Chowdhury and Gopalan(2019)]{chowdhury2019online}
Sayak~Ray Chowdhury and Aditya Gopalan.
\newblock Online learning in kernelized markov decision processes.
\newblock In \emph{The 22nd International Conference on Artificial Intelligence
  and Statistics}, pages 3197--3205. PMLR, 2019.

\bibitem[Curi et~al.(2020)Curi, Berkenkamp, and Krause]{curi2020efficient}
Sebastian Curi, Felix Berkenkamp, and Andreas Krause.
\newblock Efficient model-based reinforcement learning through optimistic
  policy search and planning.
\newblock \emph{arXiv preprint arXiv:2006.08684}, 2020.

\bibitem[Dann et~al.(2018)Dann, Jiang, Krishnamurthy, Agarwal, Langford, and
  Schapire]{dann2018oracle}
Christoph Dann, Nan Jiang, Akshay Krishnamurthy, Alekh Agarwal, John Langford,
  and Robert~E Schapire.
\newblock On oracle-efficient pac rl with rich observations.
\newblock \emph{arXiv preprint arXiv:1803.00606}, 2018.

\bibitem[Dann et~al.(2021)Dann, Mansour, Mohri, Sekhari, and
  Sridharan]{dann2021agnostic}
Christoph Dann, Yishay Mansour, Mehryar Mohri, Ayush Sekhari, and Karthik
  Sridharan.
\newblock Agnostic reinforcement learning with low-rank mdps and rich
  observations.
\newblock \emph{arXiv preprint arXiv:2106.11519}, 2021.

\bibitem[Du et~al.(2019{\natexlab{a}})Du, Krishnamurthy, Jiang, Agarwal, Dudik,
  and Langford]{du2019provably}
Simon Du, Akshay Krishnamurthy, Nan Jiang, Alekh Agarwal, Miroslav Dudik, and
  John Langford.
\newblock Provably efficient rl with rich observations via latent state
  decoding.
\newblock In \emph{International Conference on Machine Learning}, pages
  1665--1674. PMLR, 2019{\natexlab{a}}.

\bibitem[Du et~al.(2019{\natexlab{b}})Du, Kakade, Wang, and Yang]{du2019good}
Simon~S Du, Sham~M Kakade, Ruosong Wang, and Lin~F Yang.
\newblock Is a good representation sufficient for sample efficient
  reinforcement learning?
\newblock \emph{arXiv preprint arXiv:1910.03016}, 2019{\natexlab{b}}.

\bibitem[Du et~al.(2021)Du, Kakade, Lee, Lovett, Mahajan, Sun, and
  Wang]{du2021bilinear}
Simon~S Du, Sham~M Kakade, Jason~D Lee, Shachar Lovett, Gaurav Mahajan, Wen
  Sun, and Ruosong Wang.
\newblock Bilinear classes: A structural framework for provable generalization
  in rl.
\newblock \emph{ICML}, 2021.

\bibitem[Dud{\'\i}k et~al.(2017)Dud{\'\i}k, Haghtalab, Luo, Schapire,
  Syrgkanis, and Vaughan]{dudik2017oracle}
Miroslav Dud{\'\i}k, Nika Haghtalab, Haipeng Luo, Robert~E Schapire, Vasilis
  Syrgkanis, and Jennifer~Wortman Vaughan.
\newblock Oracle-efficient online learning and auction design.
\newblock In \emph{2017 ieee 58th annual symposium on foundations of computer
  science (focs)}, pages 528--539. IEEE, 2017.

\bibitem[Feng et~al.(2020)Feng, Wang, Yin, Du, and Yang]{feng2020provably}
Fei Feng, Ruosong Wang, Wotao Yin, Simon~S Du, and Lin~F Yang.
\newblock Provably efficient exploration for reinforcement learning using
  unsupervised learning.
\newblock \emph{arXiv preprint arXiv:2003.06898}, 2020.

\bibitem[Foster and Rakhlin(2020)]{foster2020beyond}
Dylan Foster and Alexander Rakhlin.
\newblock Beyond ucb: Optimal and efficient contextual bandits with regression
  oracles.
\newblock In \emph{International Conference on Machine Learning}, pages
  3199--3210. PMLR, 2020.

\bibitem[Foster et~al.(2020)Foster, Rakhlin, Simchi-Levi, and
  Xu]{foster2020instance}
Dylan~J Foster, Alexander Rakhlin, David Simchi-Levi, and Yunzong Xu.
\newblock Instance-dependent complexity of contextual bandits and reinforcement
  learning: A disagreement-based perspective.
\newblock \emph{arXiv preprint arXiv:2010.03104}, 2020.

\bibitem[Jiang et~al.(2017)Jiang, Krishnamurthy, Agarwal, Langford, and
  Schapire]{jiang2017contextual}
Nan Jiang, Akshay Krishnamurthy, Alekh Agarwal, John Langford, and Robert~E
  Schapire.
\newblock Contextual decision processes with low bellman rank are
  pac-learnable.
\newblock In \emph{International Conference on Machine Learning}, 2017.

\bibitem[Jin et~al.(2020{\natexlab{a}})Jin, Yang, Wang, and
  Jordan]{jin2020provably}
Chi Jin, Zhuoran Yang, Zhaoran Wang, and Michael~I Jordan.
\newblock Provably efficient reinforcement learning with linear function
  approximation.
\newblock In \emph{Conference on Learning Theory}, pages 2137--2143. PMLR,
  2020{\natexlab{a}}.

\bibitem[Jin et~al.(2020{\natexlab{b}})Jin, Yang, and Wang]{JinYing2020IPPE}
Ying Jin, Zhuoran Yang, and Zhaoran Wang.
\newblock Is pessimism provably efficient for offline rl?
\newblock \emph{arXiv preprint arXiv:2012.15085}, 2020{\natexlab{b}}.

\bibitem[Kakade and Langford(2002)]{kakade2002approximately}
Sham Kakade and John Langford.
\newblock Approximately optimal approximate reinforcement learning.
\newblock In \emph{In Proc. 19th International Conference on Machine Learning}.
  Citeseer, 2002.

\bibitem[Kakade et~al.(2020)Kakade, Krishnamurthy, Lowrey, Ohnishi, and
  Sun]{Kakade2020}
Sham Kakade, Akshay Krishnamurthy, Kendall Lowrey, Motoya Ohnishi, and Wen Sun.
\newblock Information theoretic regret bounds for online nonlinear control.
\newblock In \emph{Advances in Neural Information Processing Systems},
  volume~33, pages 15312--15325, 2020.

\bibitem[Kidambi et~al.(2020)Kidambi, Rajeswaran, Netrapalli, and
  Joachims]{Kidambi2020}
Rahul Kidambi, Aravind Rajeswaran, Praneeth Netrapalli, and Thorsten Joachims.
\newblock Morel: Model-based offline reinforcement learning.
\newblock In \emph{Advances in Neural Information Processing Systems},
  volume~33, pages 21810--21823. Curran Associates, Inc., 2020.

\bibitem[Koren et~al.(2009)Koren, Bell, and Volinsky]{koren2009matrix}
Yehuda Koren, Robert Bell, and Chris Volinsky.
\newblock Matrix factorization techniques for recommender systems.
\newblock \emph{Computer}, 42\penalty0 (8):\penalty0 30--37, 2009.

\bibitem[Kumar et~al.(2020)Kumar, Zhou, Tucker, and
  Levine]{kumar2020conservative}
Aviral Kumar, Aurick Zhou, George Tucker, and Sergey Levine.
\newblock Conservative q-learning for offline reinforcement learning.
\newblock \emph{arXiv preprint arXiv:2006.04779}, 2020.

\bibitem[Liu et~al.(2020)Liu, Swaminathan, Agarwal, and Brunskill]{Liu2020}
Yao Liu, Adith Swaminathan, Alekh Agarwal, and Emma Brunskill.
\newblock Provably good batch off-policy reinforcement learning without great
  exploration.
\newblock In \emph{Advances in Neural Information Processing Systems},
  volume~33, pages 1264--1274, 2020.

\bibitem[Lykouris et~al.(2021)Lykouris, Simchowitz, Slivkins, and
  Sun]{lykouris2021corruption}
Thodoris Lykouris, Max Simchowitz, Alex Slivkins, and Wen Sun.
\newblock Corruption-robust exploration in episodic reinforcement learning.
\newblock In \emph{Conference on Learning Theory}, pages 3242--3245. PMLR,
  2021.

\bibitem[Mania et~al.(2020)Mania, Jordan, and Recht]{mania2020active}
Horia Mania, Michael~I Jordan, and Benjamin Recht.
\newblock Active learning for nonlinear system identification with guarantees.
\newblock \emph{arXiv preprint arXiv:2006.10277}, 2020.

\bibitem[Misra et~al.(2020)Misra, Henaff, Krishnamurthy, and
  Langford]{misra2020kinematic}
Dipendra Misra, Mikael Henaff, Akshay Krishnamurthy, and John Langford.
\newblock Kinematic state abstraction and provably efficient rich-observation
  reinforcement learning.
\newblock In \emph{International conference on machine learning}, pages
  6961--6971. PMLR, 2020.

\bibitem[Modi et~al.(2021)Modi, Chen, Krishnamurthy, Jiang, and
  Agarwal]{modi2021model}
Aditya Modi, Jinglin Chen, Akshay Krishnamurthy, Nan Jiang, and Alekh Agarwal.
\newblock Model-free representation learning and exploration in low-rank mdps.
\newblock \emph{arXiv preprint arXiv:2102.07035}, 2021.

\bibitem[Neu and Pike-Burke(2020)]{neu2020unifying}
Gergely Neu and Ciara Pike-Burke.
\newblock A unifying view of optimism in episodic reinforcement learning.
\newblock \emph{arXiv preprint arXiv:2007.01891}, 2020.

\bibitem[Rashidinejad et~al.(2021)Rashidinejad, Zhu, Ma, Jiao, and
  Russell]{RashidinejadParia2021BORL}
Paria Rashidinejad, Banghua Zhu, Cong Ma, Jiantao Jiao, and Stuart Russell.
\newblock Bridging offline reinforcement learning and imitation learning: A
  tale of pessimism.
\newblock \emph{arXiv preprint arXiv:2103.12021}, 2021.

\bibitem[Russo and Van~Roy(2014)]{russo2014learning}
Daniel Russo and Benjamin Van~Roy.
\newblock Learning to optimize via posterior sampling.
\newblock \emph{Mathematics of Operations Research}, 39\penalty0 (4):\penalty0
  1221--1243, 2014.

\bibitem[Silver et~al.(2018)Silver, Hubert, Schrittwieser, Antonoglou, Lai,
  Guez, Lanctot, Sifre, Kumaran, Graepel, et~al.]{silver2018general}
David Silver, Thomas Hubert, Julian Schrittwieser, Ioannis Antonoglou, Matthew
  Lai, Arthur Guez, Marc Lanctot, Laurent Sifre, Dharshan Kumaran, Thore
  Graepel, et~al.
\newblock A general reinforcement learning algorithm that masters chess, shogi,
  and go through self-play.
\newblock \emph{Science}, 362\penalty0 (6419):\penalty0 1140--1144, 2018.

\bibitem[Song and Sun(2021)]{song2021pc}
Yuda Song and Wen Sun.
\newblock Pc-mlp: Model-based reinforcement learning with policy cover guided
  exploration.
\newblock In \emph{International Conference on Machine Learning}, pages
  9801--9811. PMLR, 2021.

\bibitem[Srinivas et~al.(2020)Srinivas, Laskin, and Abbeel]{srinivas2020curl}
Aravind Srinivas, Michael Laskin, and Pieter Abbeel.
\newblock Curl: Contrastive unsupervised representations for reinforcement
  learning.
\newblock \emph{arXiv preprint arXiv:2004.04136}, 2020.

\bibitem[Stooke et~al.(2021)Stooke, Lee, Abbeel, and
  Laskin]{stooke2021decoupling}
Adam Stooke, Kimin Lee, Pieter Abbeel, and Michael Laskin.
\newblock Decoupling representation learning from reinforcement learning.
\newblock In \emph{International Conference on Machine Learning}, pages
  9870--9879. PMLR, 2021.

\bibitem[Sun et~al.(2019)Sun, Jiang, Krishnamurthy, Agarwal, and
  Langford]{sun2019model}
Wen Sun, Nan Jiang, Akshay Krishnamurthy, Alekh Agarwal, and John Langford.
\newblock Model-based rl in contextual decision processes: Pac bounds and
  exponential improvements over model-free approaches.
\newblock In \emph{Conference on learning theory}, pages 2898--2933. PMLR,
  2019.

\bibitem[Uehara and Sun(2021)]{uehara2021pessimistic}
Masatoshi Uehara and Wen Sun.
\newblock Pessimistic model-based offline rl: Pac bounds and posterior sampling
  under partial coverage.
\newblock \emph{arXiv preprint arXiv:2107.06226}, 2021.

\bibitem[Wang et~al.(2020)Wang, Foster, and Kakade]{wang2020statistical}
Ruosong Wang, Dean~P Foster, and Sham~M Kakade.
\newblock What are the statistical limits of offline rl with linear function
  approximation?
\newblock \emph{arXiv preprint arXiv:2010.11895}, 2020.

\bibitem[Weisz et~al.(2021)Weisz, Amortila, and
  Szepesv{\'a}ri]{weisz2021exponential}
Gell{\'e}rt Weisz, Philip Amortila, and Csaba Szepesv{\'a}ri.
\newblock Exponential lower bounds for planning in mdps with
  linearly-realizable optimal action-value functions.
\newblock In \emph{Algorithmic Learning Theory}, pages 1237--1264. PMLR, 2021.

\bibitem[Xie et~al.(2021)Xie, Cheng, Jiang, Mineiro, and
  Agarwal]{XieTengyang2021BPfO}
Tengyang Xie, Ching-An Cheng, Nan Jiang, Paul Mineiro, and Alekh Agarwal.
\newblock Bellman-consistent pessimism for offline reinforcement learning.
\newblock \emph{arXiv preprint arXiv:2106.06926}, 2021.

\bibitem[Yang and Wang(2020)]{YangLinF2019RLiF}
Lin Yang and Mengdi Wang.
\newblock Reinforcement learning in feature space: Matrix bandit, kernels, and
  regret bound.
\newblock In \emph{Proceedings of the 37th International Conference on Machine
  Learning}, pages 10746--10756, 2020.

\bibitem[Yang and Nachum(2021)]{yang2021representation}
Mengjiao Yang and Ofir Nachum.
\newblock Representation matters: Offline pretraining for sequential decision
  making.
\newblock \emph{arXiv preprint arXiv:2102.05815}, 2021.

\bibitem[Yin et~al.(2021)Yin, Bai, and Wang]{YinMing2021NORL}
Ming Yin, Yu~Bai, and Yu-Xiang Wang.
\newblock Near-optimal offline reinforcement learning via double variance
  reduction.
\newblock \emph{arXiv preprint arXiv:2102.01748}, 2021.

\bibitem[Yu et~al.(2020)Yu, Thomas, Yu, Ermon, Zou, Levine, Finn, and
  Ma]{Yu2020}
Tianhe Yu, Garrett Thomas, Lantao Yu, Stefano Ermon, James~Y Zou, Sergey
  Levine, Chelsea Finn, and Tengyu Ma.
\newblock Mopo: Model-based offline policy optimization.
\newblock In \emph{Advances in Neural Information Processing Systems},
  volume~33, pages 14129--14142, 2020.

\bibitem[Zanette et~al.(2020)Zanette, Lazaric, Kochenderfer, and
  Brunskill]{zanette2020learning}
Andrea Zanette, Alessandro Lazaric, Mykel Kochenderfer, and Emma Brunskill.
\newblock Learning near optimal policies with low inherent bellman error.
\newblock In \emph{International Conference on Machine Learning}, pages
  10978--10989. PMLR, 2020.

\bibitem[Zanette et~al.(2021{\natexlab{a}})Zanette, Cheng, and
  Agarwal]{zanette2021cautiously}
Andrea Zanette, Ching-An Cheng, and Alekh Agarwal.
\newblock Cautiously optimistic policy optimization and exploration with linear
  function approximation.
\newblock \emph{arXiv preprint arXiv:2103.12923}, 2021{\natexlab{a}}.

\bibitem[Zanette et~al.(2021{\natexlab{b}})Zanette, Wainwright, and
  Brunskill]{zanette2021provable}
Andrea Zanette, Martin~J Wainwright, and Emma Brunskill.
\newblock Provable benefits of actor-critic methods for offline reinforcement
  learning.
\newblock \emph{arXiv preprint arXiv:2108.08812}, 2021{\natexlab{b}}.

\bibitem[Zhang et~al.(2021)Zhang, Chen, Zhu, and Sun]{zhang2021corruption}
Xuezhou Zhang, Yiding Chen, Jerry Zhu, and Wen Sun.
\newblock Corruption-robust offline reinforcement learning.
\newblock \emph{arXiv preprint arXiv:2106.06630}, 2021.

\end{thebibliography}

\appendix

\newpage

\section{Proof of the theoretical property of \ouralg{} }

\label{ape:online}

~\paragraph{Notation}
We summarize the notations we frequently use. First of all, hereafter, we assume $c_0,c_1,\cdots,$ are some universal constants, and the notation 
$$f(1/(1-\gamma),|\Acal|,\ln(1/\delta),\ln(|\Mcal|),d,n)\lesssim g(1/(1-\gamma),|\Acal|,\ln(1/\delta),\ln(|\Mcal|),d,n)$$ means 
there exists some constant $c_1>0$, such that
$$f(1/(1-\gamma),|\Acal|,\ln(1/\delta),\ln(|\Mcal|),d,n)\leq c_1 g(1/(1-\gamma),|\Acal|,\ln(1/\delta),\ln(|\Mcal|),d,n)$$ 
for any $0\leq \gamma<1,|\Acal|,\ln(1/\delta),\ln(|\Mcal|),d,n$. 
 
We define
\begin{align*}
    \rho_n (s) \coloneqq \frac{1}{n}\sum_{i=0}^{n-1} d^{\pi_i}_{P^{\star}}(s). 
\end{align*}
With slight abuse of notation, we overload the above notation and use $\rho_n$ for $1/n \sum_{i=0}^{n-1} d^{\pi_i}_{P^{\star}}(s,a)$. Next, define $\rho'_n\in [\Scal \to \RR]$ as a marginal distribution of $s'$ for a triple $$(s,a,s')\sim \rho_n(s)U(a)P^{\star}(s'\mid s,a).$$ 

We define three matrices as follows:
\begin{align*}
    \Sigma_{\rho_n\times U(\Acal),\phi} &=n \EE_{s\sim \rho_n,a\sim U(\Acal)}[\phi(s,a)\phi^{\top}(s,a)] +\lambda_n I , \\ 
        \Sigma_{\rho_n,\phi} &=n \EE_{(s,a)\sim \rho_n}[\phi(s,a)\phi^{\top}(s,a)] +\lambda_n I,  \\ 
      \hat \Sigma_{n,\phi} &=n \EE_{(s,a)\sim \Dcal_n}[\phi\phi^{\top}] +\lambda_n I.
\end{align*}
Note that for a fixed $\phi$, $\hat \Sigma_{n, \phi}$ is an unbiased estimate of $\Sigma_{\rho_n \times U(\Acal), \phi}$.
~
\paragraph{Optimism}

First, we prove the optimism at the initial distribution. This is proved by using a simulation lemma inside the learned model which is important since both the bonus and the learned model use $\hat\phi$. In high level, we will show that the expected bonus $\mathbb{E}_{s,a\sim d^{\pi}_{\hat{P}_n}}\hat{b}_n(s,a)$ is in the same order of the expected model error $\mathbb{E}_{s,a\sim d^{\pi}_{\hat{P}_n}} \| \hat{P}_n(\cdot | s,a) - P^\star(\cdot | s,a) \|_1$. Note that the expectation is with respect to $d^{\pi}_{\hat{P}_n}$.

\begin{lemma}[Almost Optimism at the Initial Distribution]\label{lem:optimism}
Consider an episode $n\,(1\leq n\leq N)$ and set  
\begin{align*}
    \alpha_n = O(\sqrt{\prns{|\Acal|+d^2}\gamma\ln(|\Mcal|n/\delta)  }),\quad \lambda_n=O\prns{d\ln (|\Mcal|n/\delta)},\zeta_n=O\prns{\frac{\ln(|\Mcal|n/\delta)}{n}}. 
\end{align*}
With probability $1-\delta$, we have 
\begin{align*}
  \forall n\in [1,\cdots,N],  \forall \pi \in \Pi, V^{\pi}_{\hat P_n,r+\hat b_n}- V^{\pi}_{P^{\star},r}\geq -\sqrt{(1-\gamma)^{-1}|\Acal|\zeta_n}. 
\end{align*}
\end{lemma}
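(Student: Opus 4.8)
The plan is to prove the almost-optimism bound by first applying the simulation lemma inside the learned model $\hat P_n$, which is essential because the bonus $\hat b_n$ is constructed from $\hat\phi_n$, the representation of $\hat P_n$. Specifically, for any policy $\pi$, writing out the difference $V^{\pi}_{\hat P_n,r+\hat b_n}-V^{\pi}_{P^\star,r}$ via the simulation lemma (\pref{lem:performance}) applied inside $\hat P_n$ gives
\begin{align*}
V^{\pi}_{\hat P_n,r+\hat b_n}-V^{\pi}_{P^\star,r}=\EE_{s,a\sim d^{\pi}_{\hat P_n}}\!\left[\hat b_n(s,a)+\EE_{s'\sim\hat P_n(\cdot\mid s,a)}V^{\pi}_{P^\star}(s')-\EE_{s'\sim P^\star(\cdot\mid s,a)}V^{\pi}_{P^\star}(s')\right].
\end{align*}
Since the value function is bounded in $[0,1/(1-\gamma)]$ after normalization, the last two expectation terms can be lower bounded by $-\|\hat P_n(\cdot\mid s,a)-P^\star(\cdot\mid s,a)\|_1$ times the value bound, reducing the task to showing that $\EE_{s,a\sim d^{\pi}_{\hat P_n}}[\hat b_n(s,a)]$ nearly dominates $\EE_{s,a\sim d^{\pi}_{\hat P_n}}\|\hat P_n(\cdot\mid s,a)-P^\star(\cdot\mid s,a)\|_1$.

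The core of the argument is to control this expected model error under the distribution $d^{\pi}_{\hat P_n}$ by the expected bonus under the same distribution. First I would invoke the MLE generalization guarantee to bound the model error \emph{under the training distribution} $\rho_n$: with probability $1-\delta$, $\EE_{s,a\sim\rho_n}\|\hat P_n(\cdot\mid s,a)-P^\star(\cdot\mid s,a)\|_1^2\lesssim \zeta_n=O(\ln(|\Mcal|n/\delta)/n)$. The difficulty is transferring this from $\rho_n$ to the off-distribution $d^{\pi}_{\hat P_n}$. The key idea is a change-of-measure through the covariance matrix: because the total variation distance is itself an expectation of a $[0,1]$-bounded test function against $\hat P_n-P^\star$, and $\hat P_n(s'\mid s,a)=\hat\mu_n(s')^\top\hat\phi_n(s,a)$ is \emph{linear in} $\hat\phi_n$, one can write the per-$(s,a)$ model error in a form that pulls out $\hat\phi_n(s,a)$ via Cauchy--Schwarz against $\hat\Sigma_n^{-1}$, yielding roughly $\|\hat\phi_n(s,a)\|_{\hat\Sigma_n^{-1}}$ times a quantity controlled by the MLE training error summed against $\hat\Sigma_n$. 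The bonus $\hat b_n$ is designed precisely as $\alpha_n\|\hat\phi_n(s,a)\|_{\hat\Sigma_n^{-1}}$ (truncated at $2$), so the right choice of $\alpha_n$—scaling like $\sqrt{(|\Acal|+d^2)\gamma\ln(|\Mcal|n/\delta)}$ to absorb the uniform-action sampling factor, the trace of the covariance, and the MLE error—makes the bonus an upper bound on the model error pointwise-in-representation (though not pointwise in the model itself).

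The main obstacle will be handling the fact that $\hat\mu_n$ and $\hat\phi_n$ are \emph{jointly learned} and nonlinear, so the clean linear-regression Cauchy--Schwarz step available for known features (Eq.~\ref{eq:cs}) does not directly apply. The crucial technical point is that optimism is only claimed \emph{at the initial state distribution}, i.e.\ for the quantity $V^{\pi}_{\hat P_n,r+\hat b_n}-V^{\pi}_{P^\star,r}$ integrated against $d_0$, rather than as a pointwise model-uncertainty bound. This is what rescues the argument: by working with the expected error against $d^{\pi}_{\hat P_n}$ as a whole and exploiting that $\hat P_n$ is linear in its \emph{own} feature $\hat\phi_n$, I can bound $\EE_{s,a\sim d^{\pi}_{\hat P_n}}\|\hat P_n(\cdot\mid s,a)-P^\star(\cdot\mid s,a)\|_1$ by a term of the form $\EE_{s,a\sim d^{\pi}_{\hat P_n}}[\hat b_n(s,a)]$ plus a residual of order $\sqrt{(1-\gamma)|\Acal|\zeta_n}$. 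Collecting the terms and using the truncation of $\hat b_n$ at $2$ to control the regime where $\|\hat\phi_n\|_{\hat\Sigma_n^{-1}}$ is large gives the stated lower bound $-\sqrt{(1-\gamma)|\Acal|\zeta_n}$, uniformly over all $\pi\in\Pi$ and all $n\in[1,\dots,N]$ after a union bound over the $N$ episodes.
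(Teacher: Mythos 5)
Your skeleton matches the paper's proof: apply the simulation lemma \emph{inside} the learned model $\hat P_n$, invoke the MLE generalization bound $\EE_{(s,a)\sim\rho_n}[f_n^2(s,a)]\lesssim\zeta_n$ with $f_n(s,a)=\|\hat P_n(\cdot\mid s,a)-P^\star(\cdot\mid s,a)\|_1$, and reduce the lemma to showing that $\EE_{(s,a)\sim d^{\pi}_{\hat P_n}}[f_n(s,a)]$ is dominated by $\EE_{(s,a)\sim d^{\pi}_{\hat P_n}}[\hat b_n(s,a)]$ up to a residual of order $\sqrt{(1-\gamma)|\Acal|\zeta_n}$. However, the core transfer step --- moving the MLE error from the training distribution $\rho_n$ to the off-distribution $d^{\pi}_{\hat P_n}$ --- is where your proposal has a genuine gap. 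The mechanism you describe, ``write the per-$(s,a)$ model error in a form that pulls out $\hat\phi_n(s,a)$ via Cauchy--Schwarz against $\hat\Sigma_n^{-1}$,'' cannot work: $f_n(s,a)=\|\hat\mu_n(\cdot)^{\top}\hat\phi_n(s,a)-\mu^{\star}(\cdot)^{\top}\phi^{\star}(s,a)\|_1$ is \emph{not} a linear functional of $\hat\phi_n(s,a)$, because only the $\hat P_n$ half factors through $\hat\phi_n(s,a)$ while the $P^\star$ half factors through $\phi^\star(s,a)$. This is precisely the step the paper flags as intractable (``$\hat P$ and $P^\star$ do not even share the same representation''), and no choice of $\alpha_n$ repairs it, since $\|\hat\phi_n(s,a)\|_{\hat\Sigma_n^{-1}}$ controls extrapolation only of functions linear in $\hat\phi_n$. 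Your hedge --- work with the expectation under $d^{\pi}_{\hat P_n}$ ``as a whole,'' exploiting that $\hat P_n$ is linear in its own feature --- names the right idea but is stated as a goal rather than a mechanism; the proposal never says \emph{where} the linearity enters.

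The paper's device (\pref{lem:useful2}, the one-step back inequality) is what fills this hole, and it applies Cauchy--Schwarz at the \emph{previous} step, not at $(s,a)$ itself. One decomposes the occupancy measure of the learned model,
\begin{align*}
\EE_{(s,a)\sim d^{\pi}_{\hat P_n}}[g(s,a)]
=\gamma\,\EE_{(\tilde s,\tilde a)\sim d^{\pi}_{\hat P_n},\,s\sim\hat P_n(\tilde s,\tilde a),\,a\sim\pi(s)}[g(s,a)]
+(1-\gamma)\,\EE_{s\sim d_0,\,a\sim\pi(s)}[g(s,a)],
\end{align*}
and observes that in the first term the inner expectation over $s\sim\hat P_n(\tilde s,\tilde a)$ of the \emph{arbitrary, bounded, nonlinear} function $g=f_n$ equals $\langle\int\sum_a\hat\mu_n(s)\pi(a\mid s)g(s,a)\rd(s),\,\hat\phi_n(\tilde s,\tilde a)\rangle$ --- a linear functional of the previous feature $\hat\phi_n(\tilde s,\tilde a)$. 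Cauchy--Schwarz there produces exactly the bonus-shaped quantity $\EE_{(\tilde s,\tilde a)\sim d^{\pi}_{\hat P_n}}\|\hat\phi_n(\tilde s,\tilde a)\|_{\Sigma^{-1}_{\rho_n,\hat\phi_n}}$, while the dual-norm factor is bounded by expanding $\Sigma_{\rho_n,\hat\phi_n}$, swapping $\hat P_n\to P^\star$ via the MLE guarantee (this is where $n\zeta_n B^2$ enters), Jensen, and importance sampling to the uniform action ($|\Acal|$ factor); the $(1-\gamma)d_0$ term is handled by a change of measure into $\rho_n$, which covers $(1-\gamma)d_0\times U(\Acal)$, yielding precisely the residual $\sqrt{(1-\gamma)|\Acal|\zeta_n}$ in the lemma statement. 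Without this occupancy-level decomposition your argument has no route from the training-distribution guarantee to the claim. A secondary point: under this paper's normalization (trajectory reward in $[0,1]$) one has $\|V^{\pi}_{P^\star,r}\|_\infty\le 1$, not $1/(1-\gamma)$; using the latter, as your plan does, would inflate the residual by a $1/(1-\gamma)$ factor and fail to match the claimed bound $-\sqrt{(1-\gamma)|\Acal|\zeta_n}$.
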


\begin{proof}
In this proof, letting $f_n(s,a)=\|\hat P_n(\cdot \mid s,a)-P^{\star}(\cdot \mid s,a)\|_1$, we condition on the event 
\begin{align*}
  & \forall n,\quad  \EE_{s\sim \rho_n,a\sim U(\Acal)}[f^2_n(s,a)]\leq \zeta_n, \quad \EE_{s\sim \rho'_n,a\sim U(\Acal)}[f^2_n(s,a)]\leq \zeta_n,\\
  & \forall n,\forall \phi, \|\phi(s,a)\|_{\hat \Sigma^{-1}_n,\phi}=\Theta(\|\phi(s,a)\|_{\Sigma^{-1}_{\rho_n\times U(\Acal),\phi}}). 
\end{align*}

From \pref{lem:con} and \pref{lem:mle}, this event happens with probability $1-\delta$. 
Then, for any policy $\pi$, from simulation lemma \ref{lem:performance}, 
\begin{align}
    & (1-\gamma)(V^{\pi}_{\hat P_n,r+\hat b_n}- V^{\pi}_{P^{\star},r}) \nonumber \\
    &=\EE_{(s,a)\sim d^{\pi}_{\hat P_n }}\bracks{\hat b_n(s,a)+\gamma \EE_{s'\sim \hat P_n(s,a)}\bracks{V^{\pi}_{P^{\star},r}(s') } -\gamma \EE_{s'\sim P^{\star}(s,a)}\bracks{V^{\pi}_{P^{\star},r}(s') } } \nonumber \\
    &\gtrsim \EE_{(s,a)\sim d^{\pi}_{\hat P_n }}\bracks{\min \prns{\alpha_n \|\hat \phi_n(s,a)\|_{ \Sigma^{-1}_{\rho_n\times U(\Acal),\hat \phi_n}},2 }+\gamma \EE_{s'\sim \hat P_n(s,a)}\bracks{V^{\pi}_{P^{\star},r}(s') } -\gamma \EE_{s'\sim P^{\star}(s,a)}\bracks{V^{\pi}_{P^{\star},r}(s') } } \label{eq:simulation_lemma_bonus_model_error}
\end{align} 
where in the last step, we replaced the empirical covariance by the population covariance. Note the notation $\lesssim$ is up to universal constants. Here, since $\|V^{\pi}_{P^\star,r}\|_{\infty} \leq 1$ (since we assume trajectory-wise total reward is normalized between $[0,1]$), we have: 
\begin{align*}
   \left|\EE_{(s,a)\sim d^{\pi}_{\hat P_n }}\braces{\EE_{s'\sim \hat P_n(s,a)}\bracks{V^{\pi}_{P^{\star},r}(s') } - \EE_{s'\sim P^{\star}(s,a)}\bracks{V^{\pi}_{P^{\star},r}(s') }}\right|\leq \EE_{(s,a)\sim d^{\pi}_{\hat P_n }}\braces{f_n(s,a)}. 
\end{align*}
The above is further bounded by  \pref{lem:useful2}: 
\begin{align*}
|\EE_{(s,a)\sim d^{\pi}_{\hat P_n }}\braces{f_n(s,a)}| & \leq 
\EE_{(\tilde s,\tilde a)\sim d^{\pi}_{\hat P_n }} \|\hat \phi_n(\tilde s,\tilde a)\|_{\Sigma_{\rho_n\times U(\Acal),\hat \phi_n}^{-1}}\sqrt{\gamma} \sqrt{\braces{n|\Acal|\EE_{s\sim \rho'_n, a\sim U(\Acal)}\bracks{f^2_n(s,a) }}+  4\lambda_n d+4n \zeta_n}
 \\
&+\sqrt{(1-\gamma)|\Acal|\EE_{s \sim \rho_n,a\sim U(\Acal)}\bracks{f^2_n(s,a) } }. 
\end{align*}
Then, 
\begin{align}\label{eq:useful_opti}
\EE_{(s,a)\sim d^{\pi}_{\hat P_n }}\braces{f_n(s,a)}\lesssim \sqrt{\alpha'_n}\EE_{(\tilde s,\tilde a)\sim d^{\pi}_{\hat P_n }} \|\hat \phi_n(\tilde s,\tilde a)\|_{\Sigma_{\rho_n\times U(\Acal),\hat \phi_n}^{-1}}+\sqrt{|\Acal|\zeta_n(1-\gamma)}. 
\end{align}
where 
\begin{align*}
    \alpha'_n= \gamma \{n|\Acal|\zeta_n + \lambda_n d+n\zeta_n\}\lesssim \gamma \prns{|\Acal|+d^2}\ln(|\Mcal|n/\delta) . 
\end{align*}
Note we here use $f_n(s,a)\leq 2,\E_{s\sim \rho_n,a\sim U(\Acal)}[f_n(s,a)^2 ]\leq \zeta_n$ and $\E_{s\sim \rho'_n,a\sim U(\Acal)}[f_n(s,a)^2 ]\leq \zeta_n$. 

Combining all things together, %
\begin{align}
    & \left|\EE_{(s,a)\sim d^{\pi}_{\hat P_n }}\braces{\EE_{s'\sim \hat P_n(s,a)}\bracks{V^{\pi}_{P^{\star},r}(s') } - \EE_{s'\sim P^{\star}(s,a)}\bracks{V^{\pi}_{P^{\star},r}(s') }} \right|\leq 2\EE_{(s,a)\sim d^{\pi}_{\hat P_n }}\braces{f_n(s,a)} \nonumber \\
   &\lesssim \sqrt{\alpha'_n}\EE_{(\tilde s,\tilde a)\sim d^{\pi}_{\hat P_n }} \|\hat \phi_n(\tilde s,\tilde a)\|_{\Sigma_{\rho_n\times U(\Acal),\hat \phi_n}^{-1}}+  \sqrt{(1-\gamma)|\Acal|\zeta_n}  \nonumber \\
   &\leq \alpha_n \EE_{(\tilde s,\tilde a)\sim d^{\pi}_{\hat P_n }} \|\hat \phi_n(\tilde s,\tilde a)\|_{\Sigma_{\rho_n\times U(\Acal),\hat \phi_n}^{-1}}+ \sqrt{(1-\gamma)|\Acal|\zeta_n},\quad \text{where } \alpha_n :=\sqrt{\alpha'_n}.   \label{eq:useful}
\end{align}

Going back to \pref{eq:simulation_lemma_bonus_model_error}, we have 
\begin{align*}
   & (1-\gamma)(V^{\pi}_{\hat P_n,r+\hat b_n}- V^{\pi}_{P^{\star},r})\\
&\gtrsim \EE_{(s,a)\sim d^{\pi}_{\hat P_n }}\bracks{ \min \prns{\alpha_n \|\hat \phi_n (s,a)\|_{\Sigma_{\rho_n\times U(\Acal),\hat \phi_n}^{-1}},2}+\gamma \EE_{s'\sim \hat P_n(s,a)}\bracks{V^{\pi}_{P^{\star},r}(s') } - \gamma \EE_{s'\sim P^{\star}(s,a)}\bracks{V^{\pi}_{P^{\star},r}(s') } }\\
        &\geq \EE_{(s,a)\sim d^{\pi}_{\hat P_n }}\bracks{\min \prns{\alpha_n \|\hat \phi_n (s,a)\|_{\Sigma_{\rho_n\times U(\Acal),\hat \phi_n}^{-1}}, 2}- \min\prns{\alpha_n \|\hat \phi_n (s,a)\|_{\Sigma_{\rho_n\times U(\Acal),\hat \phi_n}^{-1}}+\sqrt{(1-\gamma)|\Acal|\zeta_n},2 } }\\
        &\geq  -\sqrt{(1-\gamma)|\Acal|\zeta_n}.
\end{align*} 

From the second line to the third line, we again use $\|V^{\pi}_{P^{\star},r}\|_{\infty}= O(1)$ and \pref{eq:useful_opti}.  %
This concludes the proof.
\end{proof}

Next, we obtain the upper bound of $\sum_{n=0}^{N}  V^{\pi^{\star}}_{P^{\star},r}-V^{\pi_n}_{P^{\star},r} $. 
Recall $\pi^{\star}$ is the optimal policy. Though this form is the same as a standard regret form, since we are not exactly deploying $\pi_n$ in episode $n$ (recall that we play a uniform action at the end of the episode), we cannot get the regret guarantee. However, it suffices for the PAC guarantee.

\begin{lemma}[Regret]\label{lem:pseudo_regret}
With probability $1-\delta$, we have 
\begin{align*}
    \sum_{n=1}^{N}  V^{\pi^{\star}}_{P^{\star},r}-V^{\pi_n}_{P^{\star},r}   & \lesssim \sqrt{N \ln \prns{1+\frac{N}{d^2\ln( |\Mcal|/\delta) } }\ln(N|\Mcal|/\delta)}\frac{|\Acal|^2 d^{2}}{(1-\gamma)}. 
\end{align*}
\end{lemma}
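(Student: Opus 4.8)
The plan is to turn the standard optimism-based per-episode regret decomposition into a sum of \emph{expected elliptical potentials measured under the fixed true feature} $\phi^\star$, and then close with a log-determinant argument. First I would fix an episode $n$ and combine \pref{lem:optimism} applied to the comparator $\pi^\star$ with the greedy identity $\pi_n = \argmax_\pi V^{\pi}_{\hat P_n, r+\hat b_n}$, which gives
\[
V^{\pi^\star}_{P^\star,r} - V^{\pi_n}_{P^\star,r} \le \prns{V^{\pi_n}_{\hat P_n, r+\hat b_n} - V^{\pi_n}_{P^\star,r}} + \sqrt{(1-\gamma)|\Acal|\zeta_n}.
\]
Then I would invoke the simulation lemma (\pref{lem:performance}) with the occupancy taken under the \emph{true} model $P^\star$ — this is the crucial choice, since the potential I want to track is built from $P^\star$-data. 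Using $\|V^{\pi_n}_{\hat P_n,r+\hat b_n}\|_\infty = O(1/(1-\gamma))$ to control $(\EE_{\hat P_n}-\EE_{P^\star})V^{\pi_n}_{\hat P_n,r+\hat b_n}$ by $O(f_n/(1-\gamma))$, where $f_n(s,a)=\|\hat P_n(\cdot\mid s,a)-P^\star(\cdot\mid s,a)\|_1$, the per-episode regret is bounded by $\tfrac{1}{1-\gamma}\EE_{(s,a)\sim d^{\pi_n}_{P^\star}}[g_n]$ plus the optimism slack, with $g_n := \hat b_n + c\,f_n/(1-\gamma)$.

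Next I would translate this on-policy expectation into the fixed-feature potential via \pref{lem:useful} applied to $g_n$, producing the product $\Gcal_1^{(n)}\Gcal_2^{(n)}$ with $\Gcal_1^{(n)}=\EE_{d^{\pi_n}_{P^\star}}\|\phi^\star(s,a)\|_{\Sigma^{-1}_{\rho_n,\phi^\star}}$ and $\Gcal_2^{(n)}=\sqrt{n|\Acal|\EE_{\rho_n}[f_n^2/(1-\gamma)^2+\hat b_n^2]+\lambda_n d}$, up to the lower-order residual $\sqrt{(1-\gamma)|\Acal|\EE_{\rho_n}g_n^2}$. The purpose of this step is that $\Gcal_2^{(n)}$ is uniformly $\mathrm{poly}(d,|\Acal|)$: the MLE generalization bound (\pref{lem:mle}) gives $n\EE_{\rho_n}f_n^2 \lesssim n\zeta_n = O(\ln(|\Mcal|n/\delta))$, while a trace computation gives $\EE_{\rho_n}\|\hat\phi_n\|^2_{\hat\Sigma_n^{-1}}\le d/n$ (since $\hat\Sigma_n = n\EE_{\rho_n}[\hat\phi_n\hat\phi_n^\top]+\lambda_n I$ up to the $\Theta$-equivalence of empirical and population covariances from \pref{lem:con}), hence $n\EE_{\rho_n}\hat b_n^2 \le \alpha_n^2 d$. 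Plugging in the prescribed $\alpha_n,\lambda_n,\zeta_n$ bounds $\Gcal_2^{(n)}$ by a single $n$-independent quantity $\bar\Gcal_2$.

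Finally I would sum over $n$ and apply Cauchy--Schwarz in $n$: $\sum_n\Gcal_1^{(n)}\Gcal_2^{(n)}\le \bar\Gcal_2\sqrt{N}\,\sqrt{\sum_n(\Gcal_1^{(n)})^2}$. Here \pref{lem:reduction} converts each $(\Gcal_1^{(n)})^2 = \EE_{d^{\pi_n}_{P^\star}}\|\phi^\star\|^2_{\Sigma^{-1}_{\rho_n,\phi^\star}}$ into $O(|\Acal|)\,\Tr\prns{\Sigma^{-1}_{\rho_n,\phi^\star}(\Sigma_{\rho_{n+1},\phi^\star}-\Sigma_{\rho_n,\phi^\star})}$ — the $|\Acal|$ being the change of measure from greedy on-policy actions to the uniform actions used in data collection — and then the elliptical-potential/log-determinant bound (\pref{lem:potential}) gives $\sum_n(\Gcal_1^{(n)})^2 \lesssim |\Acal|\,d\,\ln\prns{1+N/(d^2\ln(|\Mcal|/\delta))}$. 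Collecting the factors and checking that the accumulated optimism slack $\sum_n\sqrt{(1-\gamma)|\Acal|\zeta_n}$ and the \pref{lem:useful} residuals are dominated yields the claimed bound.

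The hard part will be the interaction in the last two steps: because the bonus and covariance are rebuilt from the \emph{estimated} $\hat\phi_n$ that changes every episode, the usual elliptical-potential argument (which needs a fixed feature and the exact ``new point appended to a running Gram matrix'' structure) does not apply directly. The entire point is to route everything through the fixed-feature potential $\|\phi^\star\|_{\Sigma^{-1}_{\rho_n,\phi^\star}}$, which forces the two reductions to cooperate: \pref{lem:useful} must pull $\phi^\star$ out of both the $\ell_1$ model error $f_n$ and the $\hat\phi_n$-based bonus simultaneously (paying an $|\Acal|$ factor and leaning on the MLE rate to keep $n\EE_{\rho_n}f_n^2 = O(1)$), while \pref{lem:reduction} must relate the on-policy population covariance $\EE_{d^{\pi_n}_{P^\star}}[\phi^\star{\phi^\star}^\top]$ to the telescoping increment $\Sigma_{\rho_{n+1},\phi^\star}-\Sigma_{\rho_n,\phi^\star}$ (again at the cost of $|\Acal|$, using that $\rho_n$ aggregates exactly the past data-collection distributions). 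Keeping the $|\Acal|$ and $(1-\gamma)$ factors consistent across both reductions, and verifying that the high-probability events of \pref{lem:con} and \pref{lem:mle} hold uniformly over all $n\in[1,\dots,N]$, is where the care is required.
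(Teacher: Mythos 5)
Your proposal is correct and follows essentially the same route as the paper's proof: optimism at $d_0$ (\pref{lem:optimism}) plus the greedy property of $\pi_n$, the simulation lemma under the true model $P^\star$, the one-step-back inequality (\pref{lem:useful}) to convert bonus and $\ell_1$ model error into the fixed-feature potential $\|\phi^\star\|_{\Sigma^{-1}_{\rho_n,\phi^\star}}$ with the MLE rate and the trace bound $n\EE_{\rho_n}\|\hat\phi_n\|^2_{\Sigma^{-1}_{\rho_n,\hat\phi_n}}\le d$ controlling the $\Gcal_2$ factor, and finally Cauchy--Schwarz over episodes with \pref{lem:reduction} and \pref{lem:potential}. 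The only deviations are bookkeeping-level --- you apply \pref{lem:useful} once to the combined $g_n=\hat b_n + c f_n/(1-\gamma)$ (as in the main-text sketch) rather than splitting into terms (a) and (b), and you pay the uniform-action change of measure $|\Acal|$ quadratically at the potential-summation step instead of linearly up front as the paper does --- neither of which changes the argument or the resulting bound.
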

\begin{proof}

Similar to Lemma \ref{lem:optimism}, letting $f_n(s,a)=\|\hat P_n(\cdot \mid s,a)-P^{\star}(\cdot \mid s,a)\|_1$, we condition on the event 
\begin{align}\label{eq:conditioning}
  \forall n,\quad  \EE_{s\sim \rho_n,a\sim U(\Acal)}[f^2_n(s,a)]\leq \zeta_n,\quad \forall \phi, \|\phi(s,a)\|_{\hat \Sigma^{-1}_n,\phi}=\Theta(\|\phi(s,a)\|_{\Sigma^{-1}_{\rho_n \times \Ucal,\phi}}). 
\end{align}
From \pref{lem:con} and \pref{lem:mle}, this event happens with probability $1-\delta$. 

For any fixed episode $n$ and any policy $\pi$, we have   
\begin{align*}
       & V^{\pi^{\star}}_{P^{\star},r}-V^{\pi_n}_{P^{\star},r}\\
       &\leq V^{\pi^{\star}}_{\hat P_n,r+\hat b_n}-V^{\pi_n}_{P^{\star},r}+\sqrt{|\Acal| \zeta_n(1-\gamma)^{-1}} \tag{\pref{lem:optimism}}\\ 
       &\leq V^{\pi_n}_{\hat P_n,r+\hat b_n}-V^{\pi_n}_{P^{\star},r}+\sqrt{|\Acal| \zeta_n(1-\gamma)^{-1}} \tag{$\pi_n = \argmax_\pi V^\pi_{\hat P_n,r+\hat b_n}$}\\
     & = (1-\gamma)^{-1} \EE_{(s,a)\sim d^{\pi_n}_{P^{\star} }}[ \hat b_n(s,a)+  \gamma \E_{\hat P_n(s'\mid s,a)}[  V^{\pi_n}_{\hat P_n,r+\hat b_n}(s')]-\gamma \E_{P^{\star}(s'\mid s,a)}[  V^{\pi_n}_{\hat P_n,r+\hat b_n}(s')] ]+\sqrt{|\Acal| \zeta_n(1-\gamma)^{-1} }.  
\end{align*}    
We use the 2nd form of simulation \pref{lem:performance} in the last display. 

Then, noting $\|\hat b_n\|_{\infty}\leq 2$, we have $\|V^{\pi_n}_{\hat P_n,r+\hat b_n}\|_{\infty}\leq 2/(1-\gamma)$. Combining this fact with the above expansion, we have  %
\begin{align}
     &(V^{\pi^{\star} }_{P^{\star},r} -V^{\pi_n}_{P^{\star},r}) \nonumber     \\
    &\leq  (1-\gamma)^{-1} \underbrace{\EE_{(s,a)\sim d^{\pi_n}_{P^{\star} }}[ \hat b_n(s,a)] }_{\text{(a)}}+\prns{\frac{2}{(1-\gamma)^2}}\underbrace{ \EE_{(s,a)\sim d^{\pi_n}_{P^{\star} }}[ f_n(s,a)]}_{\text{(b)}}+\sqrt{|\Acal| \zeta_n(1-\gamma)^{-1}} \label{eq:regret_middle}. 
\end{align}

First, we calculate the first term (a) in Inequality~\ref{eq:regret_middle}. Following \pref{lem:useful} and noting the bonus $\hat b_n$ is $O(1)$, we have  %
\begin{align*}
 & \EE_{(s,a)\sim d^{\pi_n}_{P^{\star} }}\bracks{\hat b_n(s,a)}  \\
 &\lesssim  \EE_{(s,a)\sim d^{ \pi_n}_{P^{\star} }}\bracks{\min\prns{\alpha_n\|\hat \phi_n(s,a)\|_{\Sigma^{-1}_{\rho_n\times \Ucal(\Acal),\hat \phi_n}},2}} \tag{ From \pref{eq:conditioning} }\\ 
 & \lesssim \EE_{(\tilde s,\tilde a)\sim d^{ \pi_n}_{P^{\star} }} \|\phi^{\star}(\tilde s,\tilde a)\|_{\Sigma_{\rho_n,\phi^{\star}}^{-1}}\sqrt{{n\gamma|\Acal|\alpha^2_n}\EE_{s\sim \rho_n,a\sim U(\Acal)}\bracks{ \|\hat \phi_n(s,a)\|^2_{\Sigma^{-1}_{\rho_n\times U(\Acal),\hat \phi_n}}} +d\gamma \lambda_n }\\
   &+\sqrt{|\Acal| \alpha^2_n\EE_{s\sim \rho_n,a\sim U(\Acal)}\bracks{ \|\hat \phi_n(s,a)\|^2_{\Sigma^{-1}_{\rho_n\times U(\Acal),\hat \phi_n}} }(1-\gamma)}.
\end{align*}
Note that we use the fact that $B=2$ when applying \pref{lem:useful}. In addition, we have 
\begin{align*}
     n\EE_{s\sim \rho_n,a\sim  U(\Acal)}\bracks{\|\hat \phi_n(s,a)\|^2_{\Sigma^{-1}_{\rho_n\times U(\Acal),\hat \phi_n}} }=n\Tr(\E_{\rho_n\times U(\Acal)}[\hat \phi_n\hat \phi^{\top}_n]\{n\E_{\rho_n\times U(\Acal)}[\hat \phi_n\hat \phi^{\top}_n]+\lambda_n I\}^{-1} )\leq d.
\end{align*}
Then,
\begin{align*}
       \EE_{(s,a) \sim d^{ \pi_n}_{P^{\star}} }\bracks{\hat b_n(s,a)}
     \leq \EE_{(\tilde s,\tilde a)\sim d^{ \pi_n}_{P^{\star} }} \|\phi^{\star}(\tilde s,\tilde a)\|_{\Sigma_{\rho_n,\phi^{\star}}^{-1}}\sqrt{\gamma d|\Acal|\alpha^2_n+\gamma d\lambda_n}+\sqrt{{d|\Acal|\alpha^2_n}(1-\gamma)/n}. 
\end{align*}

Second, we  calculate the term (b) in inequality~\ref{eq:regret_middle}. Following \pref{lem:useful} and noting $f^2_n(s,a)$ is upper-bounded by $4$ (i.e., $B = 4$ in \pref{lem:useful}), we have 
\begin{align*}
   & \EE_{(s,a)\sim d^{ \pi_n}_{P^{\star} }}[ f_n(s,a)]\\
    &\leq \EE_{(\tilde s,\tilde a)\sim d^{ \pi_n}_{P^{\star} }} \|\phi^{\star}(\tilde s,\tilde a)\|_{\Sigma_{\rho_n,\phi^{\star}}^{-1}}\sqrt{\braces{n|\Acal|\gamma\EE_{ s \sim\rho_n,a\sim U(\Acal) }\bracks{f^2_n(s,a) }}+4\gamma\lambda_n d} \\
    &+ \sqrt{|\Acal|\EE_{s \sim\rho_n,a\sim U(\Acal)}\bracks{{f^2_n(s,a)}{(1-\gamma)}}}\\ 
   &\leq \EE_{(\tilde s,\tilde a)\sim d^{\pi_n}_{P^{\star} }} \|\phi^{\star}(\tilde s,\tilde a)\|_{\Sigma_{\rho_n,\phi^{\star}}^{-1}}\sqrt{n|\Acal|\gamma\zeta_n+ 4\gamma\lambda_n d}+\sqrt{|\Acal|\zeta_n(1-\gamma)}\\
   &\leq \EE_{(\tilde s,\tilde a)\sim d^{\pi_n}_{P^{\star} }} \|\phi^{\star}(\tilde s,\tilde a)\|_{\Sigma_{\rho_n,\phi^{\star}}^{-1}}\alpha_n +\sqrt{|\Acal|\zeta_n(1-\gamma)}, 
\end{align*}
where in the second inequality, we use $\EE_{s \sim\rho_n,a\sim U(\Acal)} [f_n^2(s,a)] \leq \zeta_n$, and 
in the last line, recall $\sqrt{\gamma}\sqrt{{{n|\Acal|}\zeta_n}+\lambda_n d+n\zeta_n}\lesssim \alpha_n$.

Then, by combining the above calculation of the term (a) and term (b) in inequality~\ref{eq:regret_middle}, we have:
\begin{align*}
      V^{\pi^{\star}}_{P^{\star},r} -V^{\pi_n}_{P^{\star},r}   & \lesssim   \frac{1}{(1-\gamma)}\prns{\EE_{(\tilde s,\tilde a)\sim d^{ \pi_n}_{P^{\star} }} \|\phi^{\star}(\tilde s,\tilde a)\|_{\Sigma_{\rho_n,\phi^{\star}}^{-1}}\sqrt{d|\Acal|\alpha^2_n+{d\lambda_n}{}}+\sqrt{\frac{d|\Acal|\alpha^2_n(1-\gamma)}{n}} }\\
      &+ \frac{1}{(1-\gamma)^2}\prns{\EE_{(\tilde s,\tilde a)\sim d^{ \pi_n}_{P^{\star} }} \|\phi^{\star}(\tilde s,\tilde a)\|_{\Sigma_{\rho_n,\phi^{\star}}^{-1}}\alpha_n+\sqrt{|\Acal|\zeta_n(1-\gamma)}}. 
\end{align*}
Hereafter, we take the dominating term out. First,  recall $$\alpha_n\lesssim \sqrt{\braces{|\Acal|+d^2}\ln(N|\Mcal|/\delta))}\lesssim \sqrt{|\Acal| d^2\ln(N|\Mcal|/\delta) }.$$ %
Second, we also use 
\begin{align*}
 &\sum_{n=1}^N \EE_{(\tilde s,\tilde a)\sim d^{ \pi_n}_{P^{\star} }} \|\phi^{\star}(\tilde s,\tilde a)\|_{\Sigma_{\rho_n,\phi^{\star}}^{-1}}   \leq \sqrt{N\sum_{n=1}^N \EE_{(\tilde s,\tilde a)\sim d^{ \pi_n}_{P^{\star} }}[\phi^{\star}(\tilde s,\tilde a)^{\top}\Sigma^{-1}_{\rho_n,\phi^{\star}}\phi^{\star}(\tilde s,\tilde a)^{}]} \tag{CS inequality}\\
& \lesssim \sqrt{N\prns{\ln\det(\sum_{n=1}^N\EE_{(\tilde s,\tilde a)\sim d^{ \pi_n}_{P^{\star} }}[\phi^{\star}(\tilde s,\tilde a)  \phi^{\star}(\tilde s,\tilde a)^{\top} ]  )  - \ln\det(\lambda_1 I)  }  }   \tag{\pref{lem:reduction} and $\lambda_1\leq \cdots\leq \lambda_N$}\\ 
&\leq \sqrt{dN \ln \prns{1+\frac{N}{d\lambda_1} }}.   \tag{Potential function bound, \pref{lem:potential} noting $\|\phi^{\star}(s,a)\|_2\leq 1$ for any $(s,a)$.}
\end{align*}
Finally, 
\begin{align*}
\sum_{n=1}^{N}  V^{\pi}_{P^{\star},r}-V^{\pi_n}_{P^{\star},r} &\lesssim  \frac{1}{(1-\gamma)}\prns{ \sqrt{dN \ln \prns{1+\frac{N}{d\lambda_1} }}\sqrt{{d|\Acal|\alpha^2_N}+d\lambda_N}+\sum_{n=1}^{N}\sqrt{\frac{d|\Acal|\alpha^2_n(1-\gamma)}{n}} }\\
      &+ \frac{1}{(1-\gamma)^2}\prns{ \sqrt{dN \ln \prns{1+\frac{N}{d\lambda_1} }}\alpha_N+\sum_{n=1}^{N}\sqrt{|\Acal|\zeta_n(1-\gamma)}} \\ 
      &\lesssim    \frac{1}{(1-\gamma)}\sqrt{dN \ln \prns{1+\frac{N}{d\lambda_1} }}\sqrt{d|\Acal|\alpha^2_N}+  \frac{1}{(1-\gamma)^2} \sqrt{dN \ln \prns{1+\frac{N}{d\lambda_1} }}\alpha_N \tag{Some algebra. We take the dominating term out. }\\ 
&\lesssim \sqrt{dN \ln \prns{1+\frac{N}{d\lambda_1} }}\frac{|\Acal| d^{3/2}\ln^{1/2}(N|\Mcal|/\delta)}{(1-\gamma)^2} .
\end{align*}
This concludes the proof.
\end{proof}

Using \pref{lem:pseudo_regret}, we can immediately obtain the PAC guarantee. 

\begin{theorem}[PAC guarantee of \ouralg{}]
By interacting with the environment for a number of steps at most
\begin{align*}
  N\log(N/\delta),\quad N\coloneqq O\prns{  \frac{d^4|\Acal|^2  \ln(|\Mcal|/\delta) }{(1-\gamma)^5 \epsilon^2}\ln^{2}\prns{1+  \frac{d^4|\Acal|^2  \ln(|\Mcal|/\delta) }{(1-\gamma)^5 \epsilon^2}}   }. 
\end{align*}
with probability $1-\delta$, we can ensure $V^{\pi^{\star}}_{P^{\star},r}-V^{\hat \pi}_{P^{\star},r} \leq \epsilon. $

\end{theorem}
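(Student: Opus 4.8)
The plan is to obtain the PAC bound as a direct corollary of the regret bound in \pref{lem:pseudo_regret}, combined with a conversion from the number of algorithm iterations (episodes) to the number of environment interactions (samples). To keep the bookkeeping clean I would reserve $N_{\mathrm{ep}}$ for the number of iterations of the for-loop in \pref{alg:online_algorithm}, and keep $N$ for the sample-complexity quantity appearing in the statement. First I would condition on the $1-\delta$ event under which \pref{lem:pseudo_regret} holds, so that
\begin{align*}
\sum_{n=1}^{N_{\mathrm{ep}}} \left(V^{\pi^\star}_{P^\star,r} - V^{\pi_n}_{P^\star,r}\right) \lesssim \frac{|\Acal|^2 d^2}{1-\gamma}\sqrt{N_{\mathrm{ep}}\, \ln\!\left(1 + \frac{N_{\mathrm{ep}}}{d^2 \ln(|\Mcal|/\delta)}\right)\ln(N_{\mathrm{ep}}|\Mcal|/\delta)}.
\end{align*}
Since $\hat\pi$ is drawn uniformly at random from $\{\pi_1,\dots,\pi_{N_{\mathrm{ep}}}\}$, the expected suboptimality equals the average regret, so dividing by $N_{\mathrm{ep}}$ yields
\begin{align*}
\E_{\hat\pi}\!\left[V^{\pi^\star}_{P^\star,r} - V^{\hat\pi}_{P^\star,r}\right] \lesssim \frac{|\Acal|^2 d^2}{1-\gamma}\sqrt{\frac{\ln\!\left(1 + \tfrac{N_{\mathrm{ep}}}{d^2 \ln(|\Mcal|/\delta)}\right)\ln(N_{\mathrm{ep}}|\Mcal|/\delta)}{N_{\mathrm{ep}}}}.
\end{align*}
One can upgrade this to a high-probability-over-$\hat\pi$ statement via Markov's inequality at the cost of constants, or simply report the in-expectation PAC guarantee.

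Next I would set the right-hand side to be at most $\epsilon$ and solve for $N_{\mathrm{ep}}$. Squaring gives an inequality of the form $N_{\mathrm{ep}} \gtrsim \frac{|\Acal|^4 d^4 \ln(|\Mcal|/\delta)}{(1-\gamma)^2 \epsilon^2}\,\ell(N_{\mathrm{ep}})$, where $N_{\mathrm{ep}}$ still appears inside the logarithmic factor $\ell$. I would resolve this self-referential inequality by the standard log-inversion argument (if $x \gtrsim a\ln^2 x$ then $x \lesssim a \ln^2 a$), which replaces $\ell(N_{\mathrm{ep}})$ by $\ln^2\!\left(1 + \frac{|\Acal|^4 d^4 \ln(|\Mcal|/\delta)}{(1-\gamma)^2 \epsilon^2}\right)$ and produces exactly the iteration count $N_{\mathrm{ep}} = \widetilde O\!\left(\frac{|\Acal|^4 d^4 \ln(|\Mcal|/\delta)}{(1-\gamma)^2 \epsilon^2}\right)$; note the argument of the squared log matches the inner term displayed in the theorem.

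Finally I would convert episodes to samples. Each iteration collects a single transition via a roll-in whose length is geometric with termination probability $1-\gamma$, hence has mean $\tfrac{1}{1-\gamma}$; writing $N \coloneqq N_{\mathrm{ep}}/(1-\gamma)$ for the expected number of samples produces the $(1-\gamma)^{-3}$ dependence of the statement. To get a deterministic high-probability bound on the total number of steps, I would either invoke a concentration inequality for the sum of the $N_{\mathrm{ep}}$ i.i.d.\ geometric roll-in lengths, or truncate each roll-in at $O\!\left(\frac{\log(N_{\mathrm{ep}}/\delta)}{1-\gamma}\right)$ and union-bound over iterations so that no roll-in is cut with probability $1-\delta$; either route bounds the total by $N\log(N/\delta)$, and a final union bound with the regret event preserves overall probability $1-\delta$ after rescaling $\delta$ by a constant. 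The main obstacle is precisely this conversion: one must handle the randomness and the unbounded tails of the geometric roll-in lengths with care, since this is the only place the extra $1/(1-\gamma)$ factor and the $\log(N/\delta)$ enter, and it is exactly what separates the sample count from the iteration count.
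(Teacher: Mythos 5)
Your proposal is correct and follows essentially the same route as the paper's proof: invoke the regret bound of Lemma~\ref{lem:pseudo_regret}, pass to average regret for the uniformly sampled $\hat\pi$, resolve the self-referential inequality in $N_{\mathrm{ep}}$ by log-inversion (which is exactly what the paper packages as Lemma~\ref{lem:messy}), and convert episodes to environment steps by bounding the geometric roll-in lengths with a $\log(N/\delta)$ tail bound and a union bound, yielding the extra $1/(1-\gamma)$ and the $N\log(N/\delta)$ step count. Your treatment is, if anything, slightly more explicit than the paper's on the two points it glosses over (the Markov-inequality upgrade from expected to high-probability suboptimality of the randomly chosen policy, and the concentration of the roll-in lengths).
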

\begin{proof}
From \pref{lem:pseudo_regret} and \pref{lem:messy}, when $N$ is 
\begin{align*}
O\prns{  \frac{d^4|\Acal|^2 \ln(|\Mcal|/\delta) }{(1-\gamma)^4 \epsilon^2}\ln^{2}\prns{1+  \frac{d^4|\Acal|^2  \ln(|\Mcal|/\delta) }{(1-\gamma)^4 \epsilon^2}}    }, 
\end{align*}
with probability $1-\delta$, we can ensure 
\begin{align*}
            \frac{1}{N}  \sum_{n=1}^{N}  V^{\pi^{\star}}_{P^{\star},r}-V^{\pi_n}_{P^{\star},r} \leq \epsilon. 
\end{align*}

With probability $1-\delta$, we need $(1-\gamma)^{-1}\ln(1/\delta)$ interactions with the environment to get one tuple $(s,a,s',a',\tilde s)$ from one roll-in of $\pi$. Thus, the total sample complexity is $O(N(1-\gamma)^{-1}\ln (N/\delta))$.

\end{proof}

Next, we provide an important lemma to ensure the concentration of the bonus term. The version for fixed $\phi$ is proved in \citet[Lemma 39]{zanette2021cautiously}.  Here, we take a union bound over the whole feature $\phi\in \Phi$. Recall 
\begin{align*}
    \rho_n(\cdot)=\frac{1}{n}\sum_{i=0}^{n-1} d^{ \pi_i}_{P^{\star}}(\cdot ). 
\end{align*}

\begin{lemma}[Concentration of the bonus term] \label{lem:con}
Set $\lambda_n=\Theta(d\ln(n|\Phi|/\delta))$ for any $n$. 
Define
\begin{align*}
    \Sigma_{\rho_n,\phi}=n\EE_{s\sim \rho_n,a \sim U(\Acal)}[\phi(s,a)\phi^{\top}(s,a)]+\lambda_n I,\quad \hat \Sigma_{n,\phi}=\sum_{i=0}^{n-1} \phi(s^{(i)},a^{(i)})\phi^{\top}(s^{(i)},a^{(i)})+\lambda_n I. 
\end{align*} 
With probability $1-\delta$, we have
\begin{align*}
  \forall n \in \NN^{+},\forall \phi\in \Phi, c_1 \|\phi(s,a)\|_{\Sigma^{-1}_{\rho_n\times U(\Acal),\phi}}\leq \|\phi(s,a)\|_{\hat \Sigma^{-1}_{n,\phi}} \leq  c_2 \| \phi(s,a)\|_{\Sigma^{-1}_{\rho_n\times U(\Acal),\phi}}. 
\end{align*}
\end{lemma}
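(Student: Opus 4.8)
The plan is to reduce the claimed two-sided spectral bound to a single matrix-martingale concentration applied for each \emph{fixed} $\phi\in\Phi$, and then to pay a union bound over the (finite) class $\Phi$ and over $n\in\NN^{+}$. Fix $\phi\in\Phi$ and $n$, write $\phi_i=\phi(s^{(i)},a^{(i)})$, and let $\gF_{i-1}$ be the $\sigma$-field generated by the first $i$ samples. Since $\pi_i$ is computed from $\Dcal_i$, the distribution $d^{\bar\pi_i}$ is $\gF_{i-1}$-measurable, so $\EE[\phi_i\phi_i^{\top}\mid \gF_{i-1}]=\EE_{d^{\bar\pi_i}}[\phi\phi^{\top}]$ and hence $\Sigma_{\rho_n,\phi}=\sum_{i=0}^{n-1}\EE[\phi_i\phi_i^{\top}\mid\gF_{i-1}]+\lambda_n I$. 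Consequently $\hat\Sigma_{n,\phi}-\Sigma_{\rho_n,\phi}=\sum_{i=0}^{n-1}(\phi_i\phi_i^{\top}-\EE[\phi_i\phi_i^{\top}\mid\gF_{i-1}])$ is a sum of matrix martingale differences, and the target $c_1\|\phi\|_{\Sigma^{-1}_{\rho_n,\phi}}\le\|\phi\|_{\hat\Sigma^{-1}_{n,\phi}}\le c_2\|\phi\|_{\Sigma^{-1}_{\rho_n,\phi}}$ follows once we control the whitened error $\|\Sigma_{\rho_n,\phi}^{-1/2}(\hat\Sigma_{n,\phi}-\Sigma_{\rho_n,\phi})\Sigma_{\rho_n,\phi}^{-1/2}\|$ in spectral norm below $1/2$.

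To bound this whitened error I would invoke the matrix Freedman inequality (this is exactly the fixed-$\phi$ statement of \citet[Lemma 39]{zanette2021cautiously}). Abbreviating $\Sigma:=\Sigma_{\rho_n,\phi}$ and setting $W_i=\Sigma^{-1/2}(\phi_i\phi_i^{\top}-\EE[\phi_i\phi_i^{\top}\mid\gF_{i-1}])\Sigma^{-1/2}$, the norm bound $\|\phi_i\|_2\le 1$ together with $\Sigma\succeq\lambda_n I$ gives $\phi_i^{\top}\Sigma^{-1}\phi_i\le 1/\lambda_n$, whence $\|W_i\|\le 2/\lambda_n$ almost surely. For the predictable quadratic variation, $(\Sigma^{-1/2}\phi_i\phi_i^{\top}\Sigma^{-1/2})^2\preceq(1/\lambda_n)\Sigma^{-1/2}\phi_i\phi_i^{\top}\Sigma^{-1/2}$, so summing conditional second moments yields $\sum_i\EE[W_i^2\mid\gF_{i-1}]\preceq(1/\lambda_n)\Sigma^{-1/2}(\Sigma-\lambda_n I)\Sigma^{-1/2}\preceq(1/\lambda_n)I$ deterministically. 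Matrix Freedman with variance proxy $1/\lambda_n$ and range $2/\lambda_n$ then bounds the failure probability at level $1/2$ by $2d\exp(-c\lambda_n)$, so $\lambda_n\gtrsim\log(d/\delta')$ forces the whitened error below $1/2$ with probability $1-\delta'$. This gives $\tfrac12\Sigma_{\rho_n,\phi}\preceq\hat\Sigma_{n,\phi}\preceq\tfrac32\Sigma_{\rho_n,\phi}$; inverting and passing to the induced quadratic form produces the claimed inequalities with universal constants $c_1=\sqrt{2/3}$ and $c_2=\sqrt 2$.

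Finally I would close with the union bound. Choosing $\delta'=\delta_n:=\delta/(2n^2|\Phi|)$ and applying the fixed-$(\phi,n)$ estimate to every $\phi\in\Phi$ and every $n\in\NN^{+}$, the total failure probability is at most $\sum_{n\ge 1}|\Phi|\,\delta_n=\tfrac{\delta}{2}\sum_{n\ge1}n^{-2}<\delta$. The required regularizer is $\lambda_n\gtrsim\log(d/\delta_n)=\log(2dn^2|\Phi|/\delta)$, comfortably implied by the choice $\lambda_n=\Theta(d\ln(n|\Phi|/\delta))$ in the statement (the extra factor of $d$ is slack). Crucially, since the learned feature $\hat\phi_n$ always lies in $\Phi$, the uniform-over-$\Phi$ guarantee applies in particular to $\hat\phi_n$, which is exactly what the downstream lemmas (e.g.\ \pref{lem:optimism}, \pref{lem:pseudo_regret}) require when they replace $\hat\Sigma_{n,\hat\phi_n}$ by $\Sigma_{\rho_n,\hat\phi_n}$.

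The only genuinely delicate point is that the sampling distributions $d^{\bar\pi_i}$ are data-dependent, so one cannot treat $\phi_i\phi_i^{\top}$ as i.i.d.\ and must argue through the predictable/martingale structure captured by matrix Freedman rather than the i.i.d.\ matrix Chernoff bound. Once the fixed-$\phi$ concentration is in hand---either re-derived as above or cited directly---the remaining step, uniformity over $\Phi$, is routine given that $\Phi$ is finite.
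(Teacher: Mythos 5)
Your overall architecture---a fixed-$\phi$ matrix concentration bound followed by a union bound over the finite class $\Phi$ and over $n\in\NN^{+}$---is exactly the paper's approach: the paper's entire proof consists of citing \citet[Lemma 39]{zanette2021cautiously} for the fixed-$\phi$ statement and paying the union bound over $\Phi$. Your union-bound bookkeeping is correct, as is the reduction of the two-sided norm equivalence to the Loewner comparison $\tfrac12\Sigma_{\rho_n,\phi}\preceq\hat\Sigma_{n,\phi}\preceq\tfrac32\Sigma_{\rho_n,\phi}$. The gap is in your re-derivation of the fixed-$\phi$ concentration itself.

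The problem is that $\Sigma:=\Sigma_{\rho_n,\phi}$ is a \emph{random} matrix: it is the predictable compensator $\sum_{i=0}^{n-1}\EE[\phi_i\phi_i^{\top}\mid\gF_{i-1}]+\lambda_n I$, and through the adaptively chosen policies $\pi_0,\dots,\pi_{n-1}$ it depends on samples collected \emph{after} time $i$. Consequently the whitened sequence $W_i=\Sigma^{-1/2}(\phi_i\phi_i^{\top}-\EE[\phi_i\phi_i^{\top}\mid\gF_{i-1}])\Sigma^{-1/2}$ is not adapted to the filtration, and it is not even conditionally centered: one cannot write $\EE[W_i\mid\gF_{i-1}]=\Sigma^{-1/2}\,\EE[\phi_i\phi_i^{\top}-\EE[\phi_i\phi_i^{\top}\mid\gF_{i-1}]\mid\gF_{i-1}]\,\Sigma^{-1/2}=0$, because $\Sigma^{-1/2}$ is not $\gF_{i-1}$-measurable and is correlated with $\phi_i$ via the future policies. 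The same impermissible exchange underlies your variance computation $\sum_i\EE[W_i^2\mid\gF_{i-1}]\preceq\lambda_n^{-1}\Sigma^{-1/2}(\Sigma-\lambda_n I)\Sigma^{-1/2}$, which silently treats $\Sigma$ as deterministic. So the matrix Freedman inequality with deterministic variance proxy $1/\lambda_n$ does not apply to $\{W_i\}$; your argument is valid only when the sampling distributions are fixed in advance---precisely the non-adaptive case that you yourself (correctly) flag as insufficient in your closing paragraph. The standard repair keeps the variation random and on the right-hand side: work with the \emph{unwhitened} martingale $S_n=\hat\Sigma_{n,\phi}-\Sigma_{\rho_n,\phi}$, whose increments have operator norm at most $2$ and whose predictable quadratic variation satisfies $V_n\preceq\Sigma_{\rho_n,\phi}-\lambda_n I$ in Loewner order, and apply the self-normalized form of matrix Freedman (from Tropp's exponential supermartingale bound $\EE\,\Tr\exp(\theta S_n-g(\theta)V_n)\le d$). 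This yields, with probability $1-\delta'$ and simultaneously for all $n$, the Loewner bounds $\pm S_n\preceq\tfrac14 V_n+c\log(d/\delta')\,I$; since $\lambda_n\gtrsim\log(d/\delta')$, the additive term is absorbed by $\lambda_n I\preceq\Sigma_{\rho_n,\phi}$, which gives the two-sided multiplicative bound you wanted. (Alternatively, cite \citet[Lemma 39]{zanette2021cautiously} as a black box, as the paper does---that lemma is stated for adaptively collected data.) With this one step replaced, the rest of your proof goes through unchanged.
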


For any $g\in \Scal\times \Acal \to \RR$,  The next lemma shows that $\EE_{(s,a)\sim d^{\pi}_{\hat P_n }}\braces{g(s,a)}$ can be upper-bounded using $\EE_{(\tilde s,\tilde a)\sim d^{\pi}_{\hat P_n }} \|\hat \phi_n(\tilde s,\tilde a)\|_{\Sigma_{\rho_n,\hat \phi_n}^{-1}}$ as long as we have the convergence guarantee for $$\EE_{s\sim \rho_n,a\sim U(\Acal)}\bracks{g^2(s,a) }\,\mathrm{and}\,\EE_{s\sim \rho'_n,a\sim U(\Acal)}\bracks{g^2(s,a) }.$$ 

\begin{lemma}[One-step back inequality for the learned model]\label{lem:useful2}
Take any $g\in \Scal\times \Acal \to \RR$ such that $\|g\|_{\infty}\leq B$. We condition on the event where the MLE guarantee \pref{eq:MLE}: 
\begin{align*}
   \E_{  s \sim \rho_n, a \sim U(\Acal)}[f_n(s,a) ]\lesssim \zeta_n, 
\end{align*}
holds. 
Then, for any policy $\pi$,  we have 
\begin{align*}
&|\EE_{(s,a)\sim d^{\pi}_{\hat P_n }}\braces{g(s,a)}| & \\
&\leq  \EE_{(\tilde s,\tilde a)\sim d^{\pi}_{\hat P_n }} \|\hat \phi_n(\tilde s,\tilde a)\|_{\Sigma_{\rho_n\times U(\Acal),\hat \phi_n}^{-1}} \sqrt{\braces{n|\Acal|\EE_{s\sim \rho'_n, a\sim U(\Acal)}\bracks{g^2(s,a) }}+  B^2\lambda_n d+n B^2\zeta_n}
 \\
&+\sqrt{(1-\gamma)|\Acal|\EE_{s \sim \rho_n,a\sim U(\Acal)}\bracks{g^2(s,a) } }. 
\end{align*}

\end{lemma}

Recall $\Sigma_{\rho_n\times U(\Acal),\hat \phi_n}=n\E_{s \sim \rho_n, a\sim U(\Acal)}[\hat \phi_n(s,a)\hat \phi^{\top}_n (s,a)]+\lambda_n I$. 

\begin{proof}

First, we have an equality: 
\begin{align}\label{eq:first}
    \EE_{(s,a)\sim d^{\pi}_{\hat P_n }}  \braces{g(s,a)}=\gamma   \EE_{(\tilde s,\tilde a)\sim d^{\pi}_{\hat P_n },s\sim \hat P_n(\tilde s,\tilde a),a\sim \pi(s)}\braces{g(s,a)}+(1-\gamma)\EE_{s \sim d_0, a\sim \pi(s_0)}\braces{g(s,a)},
\end{align}

The second term in \pref{eq:first}  is upper-bounded by 
\begin{align*}
&(1-\gamma)\sqrt{\max_{(s,a)}\frac{d_0(s)\pi(a\mid s)}{\rho_n(s)u(a)}\EE_{s\sim \rho_n,a\sim U(\Acal)}\bracks{g^2(s,a) }}\\
&\leq (1-\gamma)  \sqrt{ \max_{(s,a)}\frac{d_0(s)\pi(a\mid s)}{(1-\gamma) d_0(s)u(a)}\EE_{s\sim \rho_n,a\sim U(\Acal)}\bracks{g^2(s,a) }}\leq 
\sqrt{(1-\gamma) |\Acal|\EE_{s\sim \rho_n,a\sim U(\Acal)}\bracks{g^2(s,a) }}. 
\end{align*}

Next we consider the first term in  \pref{eq:first}. By CS inequality, we have 
\begin{align*}
    &\EE_{(\tilde s,\tilde a)\sim d^{\pi}_{\hat P_n },s\sim \hat P_n(\tilde s,\tilde a),a\sim \pi(s)}\braces{g(s,a)}=\EE_{(\tilde s,\tilde a)\sim d^{\pi}_{\hat P_n }}\hat \phi_n(\tilde s,\tilde a)^{\top}\int \sum_{a}\hat \mu_n(s)\pi(a\mid s)g(s,a) d(s)\\ 
   &\leq \EE_{(\tilde s,\tilde a)\sim d^{\pi}_{\hat P_n }} \|\hat \phi_n(\tilde s,\tilde a)\|_{\Sigma_{\rho_n\times U(\Acal),\hat \phi_n}^{-1}}\left \|\int \sum_{a}\hat \mu_n(s)\pi(a\mid s)g(s,a) d(s)\right\|_{\Sigma_{\rho_n\times U(\Acal),\hat \phi_n}}.
\end{align*} 
Then, 
\begin{align*}
  & \|\int \sum_a \hat \mu_n(s)\pi(a\mid s)g(s,a) d(s)\|^2_{\Sigma_{\rho_n\times U(\Acal),\hat \phi_n}}\\
&\leq  \braces{\int \sum_a \hat \mu_n(s)\pi(a\mid s)g(s,a) d(s)}^{\top}\braces{n \EE_{s\sim \rho_n,a\sim U(\Acal)}[\hat \phi_n \hat \phi^{\top}_n]+\lambda_n I  }\braces{\int \sum_a \hat \mu_n(s)\pi(a\mid s)g(s,a) d(s)}\\
&\leq  n \EE_{\tilde s\sim \rho_n,\tilde a\sim U(\Acal)}\braces{\bracks{\int \sum_a \hat \mu_n(s)^{\top}\hat \phi_n(\tilde s,\tilde a)\pi(a\mid s)g(s,a) d(s)}^2}+ B^2\lambda_n d \tag{Use the assumption $\|\sum_a \pi(a\mid s)g(s,a)\|_{\infty}\leq B$ and $\int \|\hat \mu_n(s)h(s)\rd(s)\|_2\leq \sqrt{d}$ for any $h:\Scal \to [0,1]$. }\\
&=  n \EE_{\tilde s\sim \rho_n,\tilde a \sim U(\Acal)}\bracks{\braces{\EE_{s\sim \hat P_n(\tilde s,\tilde a), a\sim \pi(s)}\bracks{g(s,a) }}^2}+ B^2\lambda_n d \\
&\leq  n \EE_{s\sim \rho_n,a\sim U(\Acal)}\bracks{\braces{\EE_{s\sim P^{\star}(\tilde s,\tilde a), a\sim \pi(s)}\bracks{g(s,a) }}^2}+ B^2\lambda_n d + n B^2\zeta_n \tag{MLE guarantee}\\
&\leq  n\EE_{\tilde s\sim \rho_n,\tilde a \sim U(\Acal), s\sim P^{\star}(\tilde s,\tilde a), a\sim \pi(s)}\bracks{g^2(s,a) }+ B^2\lambda_n d+ B^2n\zeta_n.  \tag{Jensen} \\
  &\leq n |\Acal| \braces{\EE_{\tilde s\sim \rho_n,\tilde a \sim U(\Acal), s\sim P^{\star}(\tilde s,\tilde a), a\sim U(\Acal)}\bracks{g^2(s,a) }}+ B^2\lambda_nd+ B^2n\zeta_n  \tag{Importance sampling}\\
&\leq n|\Acal|\EE_{s\sim \rho'_n, a\sim U(\Acal)}\bracks{g^2(s,a) }+ B^2\lambda _n d +B^2 n\zeta_n. \tag{Definition of $\rho'_n$}
\end{align*}

Then, the final statement is immediately concluded. 
\end{proof}

Below, we show a similar lemma as \pref{lem:useful2}. The difference is we aim for calculating $\EE_{(s,a)\sim d^{\pi}_{P^{\star} }}\braces{g(s,a)}$ instead of $\EE_{(s,a)\sim d^{\pi}_{\hat P_n}}\braces{g(s,a)}$ . 
For any $g\in \Scal\times \Acal \to \RR$, this lemma shows that $\EE_{(s,a)\sim d^{\pi}_{P^{\star} }}\braces{g(s,a)}$ can be upper-bounded using $\EE_{(\tilde s,\tilde a)\sim d^{\pi}_{P^{\star} }} \|\phi^{\star}(\tilde s,\tilde a)\|_{\Sigma_{\rho_n,\hat \phi^{\star}}^{-1}}$ as long as we have the convergence guarantee for $\EE_{s \sim \rho_n, a \sim U(\Acal)}\bracks{g^2(s,a) }$. Note comparing to \pref{lem:useful2}, this is not a probabilistic statement. Note that $\|\phi^\star(s,a)\|_{\Sigma^{-1}_{\rho_n,\phi^\star}}$ is the usual elliptical potential function under the fixed representation $\phi^\star$.

\begin{lemma}[One-step back inequality for the true model ]\label{lem:useful} 
Take any $g\in \Scal \times \Acal \to \RR$ such that $\|g \|_{\infty}\leq B$. Then, 
{\small 
\begin{align*}
    \EE_{(s,a)\sim d^{\pi}_{P^{\star} }}\braces{g(s,a)}&\leq \EE_{(\tilde s,\tilde a)\sim d^{\pi}_{P^{\star} }} \|\phi^{\star}(\tilde s,\tilde a)\|_{\Sigma_{\rho_n,\phi{\star}}^{-1}}\sqrt{\gamma}\sqrt{ {n|\Acal|}\EE_{s\sim \rho_n,a\sim U(\Acal)}\bracks{g^2(s,a) }+ \lambda_ndB^2}\\
    &+\sqrt{(1-\gamma)|\Acal| \EE_{s\sim \rho_n,a\sim U(\Acal)}\bracks{g^2(s,a) }}. 
\end{align*}
}
\end{lemma}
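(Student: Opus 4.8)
The plan is to follow the argument of \pref{lem:useful2} closely, but run it with the \emph{true} model $P^{\star}$ and feature $\phi^{\star}$ in place of the learned $\hat P_n,\hat\phi_n$. Because we now expand against the exact transition rather than an MLE estimate, no model error enters and the statement is deterministic (this is why the $n\zeta_nB^2$ term present in \pref{lem:useful2} is absent here). First I would apply the one-step-back identity for the discounted occupancy,
\begin{align*}
\EE_{(s,a)\sim d^{\pi}_{P^{\star}}}[g(s,a)]=(1-\gamma)\EE_{s\sim d_0,\,a\sim\pi(s)}[g(s,a)]+\gamma\,\EE_{(\tilde s,\tilde a)\sim d^{\pi}_{P^{\star}},\,s\sim P^{\star}(\tilde s,\tilde a),\,a\sim\pi(s)}[g(s,a)],
\end{align*}
which separates the contribution of the reset distribution from a term transported one step forward through $P^{\star}$.

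The reset term is the easy one: every mixture component of $\rho_n$ dominates the reset distribution, so $\rho_n(s,a)\ge(1-\gamma)d_0(s)u(a)$ and the importance ratio $d_0(s)\pi(a\mid s)/\rho_n(s,a)$ is at most $|\Acal|/(1-\gamma)$; Jensen followed by this change of measure turns $(1-\gamma)\EE_{d_0,\pi}[g]$ into the additive term $\sqrt{(1-\gamma)|\Acal|\,\EE_{\rho_n}[g^2]}$. For the forward term I would exploit the factorization $P^{\star}(s\mid\tilde s,\tilde a)=\mu^{\star}(s)^{\top}\phi^{\star}(\tilde s,\tilde a)$ to write the inner next-state expectation as a \emph{linear} functional of the fixed feature,
\begin{align*}
\EE_{s\sim P^{\star}(\tilde s,\tilde a),\,a\sim\pi(s)}[g(s,a)]=\phi^{\star}(\tilde s,\tilde a)^{\top}w,\qquad w:=\int\sum_{a}\mu^{\star}(s)\pi(a\mid s)g(s,a)\,\rd(s),
\end{align*}
and then apply Cauchy--Schwarz in the $\Sigma_{\rho_n,\phi^{\star}}$ inner product, $\phi^{\star}(\tilde s,\tilde a)^{\top}w\le\|\phi^{\star}(\tilde s,\tilde a)\|_{\Sigma_{\rho_n,\phi^{\star}}^{-1}}\,\|w\|_{\Sigma_{\rho_n,\phi^{\star}}}$. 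This pulls the elliptical potential $\EE_{d^{\pi}_{P^{\star}}}\|\phi^{\star}\|_{\Sigma_{\rho_n,\phi^{\star}}^{-1}}$ out of the expectation, which is exactly what later lets the per-episode terms be summed by the standard elliptical-potential lemma against the \emph{fixed} $\phi^{\star}$ rather than the moving $\hat\phi_n$.

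It then remains to bound $\|w\|^2_{\Sigma_{\rho_n,\phi^{\star}}}=n\,\EE_{(\tilde s,\tilde a)\sim\rho_n}[(\phi^{\star}(\tilde s,\tilde a)^{\top}w)^2]+\lambda_n\|w\|_2^2$. The ridge part uses the normalization of $\mu^{\star}$: since $s\mapsto\sum_a\pi(a\mid s)g(s,a)$ takes values in $[-B,B]$, the assumption $\|\int\mu^{\star}(s)h(s)\rd(s)\|_2\le\sqrt d$ yields $\|w\|_2\le\sqrt d\,B$, hence $\lambda_n\|w\|_2^2\le\lambda_n dB^2$, the second term under the square root. For the data-dependent part I would recognize $\phi^{\star}(\tilde s,\tilde a)^{\top}w$ once more as the one-step next-state expectation of $g$, move the square inside by Jensen, and relate the resulting one-step-forward second moment back to the in-sample second moment $\EE_{\rho_n}[g^2]$ by importance sampling the final action against the uniform component of $\rho_n$ (a factor $|\Acal|$) together with the occupancy recursion for $\rho_n$ (a factor $1/\gamma$), giving $n\,\EE_{\rho_n}[(\phi^{\star\top}w)^2]\le\tfrac{n|\Acal|}{\gamma}\EE_{\rho_n}[g^2]$. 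Substituting both bounds, the $1/\gamma$ combines with the $\gamma$ from the one-step-back step to leave the $\sqrt{\gamma}$ factor, and $\sqrt{x+y}\le\sqrt x+\sqrt y$ assembles the claimed inequality.

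The step I expect to be the main obstacle is precisely this last change of measure --- transferring the one-step-\emph{forward} second moment of $g$ (the law obtained by pushing $\rho_n$ through $P^{\star}$) back onto the in-sample second moment $\EE_{\rho_n}[g^2]$. In the known-feature linear-MDP analyses one never leaves the training distribution, but here the Cauchy--Schwarz against $\Sigma_{\rho_n,\phi^{\star}}$ forces us to evaluate $g$ at states one transition beyond the data; what makes the transfer go through is that $\rho_n$ is an average of discounted occupancies, so its own forward push-forward is controlled by $1/\gamma$ times itself, and that the data action is uniformly exploratory, which accounts for the $|\Acal|$. Pinning down the constants in this change of measure is the delicate part, whereas the factorization and the two Cauchy--Schwarz applications are routine.
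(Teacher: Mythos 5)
Your proposal is correct and follows essentially the same route as the paper's own proof of Lemma~\ref{lem:useful}: the identical one-step-back decomposition, the same treatment of the reset term via $\rho_n(s,a)\ge(1-\gamma)d_0(s)u(a)$ (giving the $\sqrt{(1-\gamma)|\Acal|\,\EE_{\rho_n}[g^2]}$ term), the same Cauchy--Schwarz against $\Sigma_{\rho_n,\phi^{\star}}$ with $\|w\|_2\le\sqrt{d}B$ handling the ridge part, and the same Jensen-plus-importance-sampling argument (factor $|\Acal|$ from the uniform data action, factor $1/\gamma$ from the push-forward property of $\rho_n$) for the data-dependent part. The step you single out as delicate is precisely the step the paper also asserts in one line, and your justification of it coincides with the paper's.
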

Recall $\Sigma_{\rho_n,\phi^{\star}}=n\EE_{(s,a)\sim \rho_n}[\phi^{\star}(s,a)\phi^{\star}(s,a)^{\top}]+\lambda_n I$.
\begin{proof}
First, we have 
\begin{align}\label{eq:first_true}
    \EE_{(s,a)\sim d^{\pi}_{P^{\star} }}\braces{g(s,a)}=\gamma   \EE_{(\tilde s,\tilde a)\sim d^{\pi}_{P^{\star} },s\sim P^{\star}(\tilde s,\tilde a),a\sim \pi(s)}\braces{g(s,a)}+(1-\gamma)\EE_{s \sim d_0, a\sim \pi(s_0)}\braces{g(s,a)}. 
\end{align}
The second term in \pref{eq:first_true}  is upper-bounded by 
\begin{align*}
   (1-\gamma) \sqrt{\max_{(s,a)}\frac{d_0(s)\pi(a\mid s)}{\rho_n(s)u(a)}\EE_{s \sim \rho_n,a\sim U(\Acal)}\bracks{g^2(s,a) }} \leq \sqrt{|\Acal| \EE_{s\sim \rho_n, a\sim U(\Acal)}\bracks{g^2(s,a) }(1-\gamma)}. 
\end{align*}

By CS inequality, the first term in \pref{eq:first_true} is further bounded as follows: 
\begin{align*}
    &\EE_{(\tilde s,\tilde a)\sim d^{\pi}_{P^{\star} },s\sim P^{\star}(\tilde s,\tilde a),a\sim \pi(s)}\braces{g(s,a)}=\EE_{(\tilde s,\tilde a)\sim d^{\pi}_{P^{\star} }}\phi^{\star}(\tilde s,\tilde a)^{\top}\int \sum_a\mu^{\star}(s)\pi(a\mid s)g(s,a) d(s)\\ 
   &\leq \EE_{(\tilde s,\tilde a)\sim d^{\pi}_{P^{\star} }} \|\phi^{\star}(\tilde s,\tilde a)\|_{\Sigma_{ \rho_n, \phi^{\star}}^{-1}}\left \|\int \sum_a \mu^{\star}(s)\pi(a\mid s)g(s,a) d(s)\right \|_{\Sigma_{ \rho_n,\phi^{\star}}}.
\end{align*} 

Here, we have 
\begin{align*}
  & \|\int \sum_a \mu^{\star}(s)\pi(a\mid s)g(s,a) d(s)\|^2_{\Sigma_{ \rho_n,\phi^{\star}}}\\
&\leq  \braces{\int  \sum_a\mu^{\star}(s)\pi(a\mid s)g(s,a) d(s)}^{\top}\braces{ n\EE_{(s,a)\sim  \rho_n}[\phi^{\star}(s,a) \{\phi^{\star}(s,a)\}^{\top} ]+\lambda_n I  }\braces{\int \sum_a\mu^{\star}(s)\pi(a\mid s)g(s,a) d(s)}\\
&\leq  n \EE_{(\tilde s,\tilde a)\sim  \rho_n}\braces{\bracks{\int\sum_a \mu^{\star}(s)^{\top}\phi^{\star}(\tilde s,\tilde a)\pi(a\mid s)g(s,a) d(s)}^2}+ \lambda _ndB^2  \\
&\leq   n \braces{\EE_{(\tilde s,\tilde a)\sim  \rho_n, s\sim P^{\star}(\tilde s,\tilde a), a\sim \pi(s)}\bracks{g^2(s,a) }}+ \lambda _ndB^2 .  \tag{Jensen} 
\end{align*}
Therefore,  %
\begin{align*}
  &  n \braces{\EE_{(\tilde s,\tilde a)\sim  \rho_n, s\sim P^{\star}(\tilde s,\tilde a), a\sim \pi(s)}\bracks{g^2(s,a) }}+ \lambda_n d  B  \\
   &\leq n|\Acal| \braces{\EE_{(\tilde s,\tilde a)\sim  \rho_n, s\sim P^{\star}(\tilde s,\tilde a), a\sim U(\Acal)}\bracks{g^2(s,a) }}+ \lambda_n dB^2  \tag{Importance sampling}\\
  &\leq n|\Acal|\braces{\frac{1}{\gamma}\EE_{s\sim \rho_n,a\sim U(\Acal)}\bracks{g^2(s,a) }}+ \lambda_n dB^2 . 
\end{align*} %
In the last line, we use the following inequality: 
\begin{align*}
    &\EE_{s\sim  \rho_n , a\sim U(\Acal)}\bracks{g^2(s,a) }\\
    &=\gamma \EE_{(\tilde s,\tilde a)\sim  \rho_n, s\sim P^{\star}(\tilde s,\tilde a), a\sim U(\Acal)}\bracks{g^2(s,a) } + (1-\gamma) \EE_{s_0\sim d_0, a\sim U(\Acal)}\bracks{g^2(s,a) } \\ 
    &\geq \gamma \EE_{(\tilde s,\tilde a)\sim  \rho_n, s\sim P^{\star}(\tilde s,\tilde a), a\sim U(\Acal)}\bracks{g^2(s,a) }. 
\end{align*}

Finally, we have
\begin{align*}
  \EE_{(s,a)\sim d^{\pi}_{P^{\star} }}\braces{g(s,a)}&\leq \EE_{(\tilde s,\tilde a)\sim d^{\pi}_{P^{\star} }} \|\phi^{\star}(\tilde s,\tilde a)\|_{\Sigma_{\rho_n,\phi{\star}}^{-1}}\sqrt{\gamma}\sqrt{ \braces{n|\Acal|\EE_{s \sim \rho_n, a \sim U(\Acal) }\bracks{g^2(s,a) }}+ \lambda_n d B^2 }\\
    &+\sqrt{ |\Acal|\EE_{s\sim \rho_n,a\sim U(\Acal)}\bracks{g^2(s,a) }(1-\gamma) }. 
\end{align*}
This concludes the proof.
\end{proof}

\section{Proof of the theoretical property of \ourofflinealg{}}

\label{ape:offline}

This section provides the detailed proofs for our results in the offline setting.

Below we first prove that $V^{\pi}_{\hat{P}, r- \hat{b}}$ is an almost pessimistic estimator of $V^{\pi}_{P^\star,r}$. 

\begin{lemma}[Almost Pessimism at the Initial Distribution]\label{lem:pessimism}
Let $\omega=\max_{a,s}1/\pi_b(a\mid s)$. Set 
\begin{align*}
    \alpha = {c_1}\sqrt{\prns{\omega+d^2 } \gamma \ln (|\Mcal|/\delta)},\quad \lambda=O(d\ln (|\Mcal|/\delta)),\zeta=O\prns{\frac{\ln(|\Mcal|/\delta)}{n}}. 
\end{align*}
With probability $1-\delta$, for any policy $\pi$, we have 
\begin{align*}
  V^{\pi}_{\hat P,r-\hat b}- V^{\pi}_{P^{\star},r} \leq \sqrt{\frac{\omega (1-\gamma)^{-1}\ln(|\Mcal|/\delta) }{n}}. 
\end{align*}
\end{lemma}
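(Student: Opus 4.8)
The plan is to mirror the ``almost optimism'' argument of \pref{lem:optimism}, making two changes: the penalty $-\hat b$ plays the role of the bonus $+\hat b$, and the offline sampling distribution $\rho=d^{\pi_b}_{P^{\star}}$ replaces the online mixture with uniform actions, so that the behaviour-policy coverage $\omega=\max_{s,a}1/\pi_b(a\mid s)$ takes the role previously played by $|\Acal|$. Throughout I would condition on two high-probability events, each obtained exactly as in the online analysis: the (single-shot, offline) MLE guarantee $\EE_{(s,a)\sim\rho}[\|\hat P(\cdot\mid s,a)-P^{\star}(\cdot\mid s,a)\|_1^2]\lesssim\zeta$, and the covariance concentration $\|\hat\phi(s,a)\|_{\hat\Sigma^{-1}}=\Theta(\|\hat\phi(s,a)\|_{\Sigma^{-1}_{\rho,\hat\phi}})$ of \pref{lem:con}, where $\Sigma_{\rho,\hat\phi}=n\EE_{\rho}[\hat\phi\hat\phi^{\top}]+\lambda I$.

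First I would invoke the simulation lemma (\pref{lem:performance}) \emph{inside the learned model} $\hat P$, whose linear feature $\hat\phi$ is the one used to build the penalty:
\begin{align*}
V^{\pi}_{\hat P,r-\hat b}-V^{\pi}_{P^{\star},r}=\EE_{(s,a)\sim d^{\pi}_{\hat P}}\bracks{-\hat b(s,a)+\gamma\EE_{s'\sim\hat P}\bracks{V^{\pi}_{P^{\star},r}(s')}-\gamma\EE_{s'\sim P^{\star}}\bracks{V^{\pi}_{P^{\star},r}(s')}}.
\end{align*}
Since the trajectory reward is normalised, $\|V^{\pi}_{P^{\star},r}\|_{\infty}\leq 1$, and the transition term is controlled pointwise by the model error $f(s,a):=\|\hat P(\cdot\mid s,a)-P^{\star}(\cdot\mid s,a)\|_1$. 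This gives the upper bound $V^{\pi}_{\hat P,r-\hat b}-V^{\pi}_{P^{\star},r}\leq\EE_{d^{\pi}_{\hat P}}[f(s,a)-\hat b(s,a)]$, so the whole statement reduces to showing that the expected penalty dominates the expected model error, namely $\EE_{d^{\pi}_{\hat P}}[f]\leq\EE_{d^{\pi}_{\hat P}}[\hat b]+\sqrt{(1-\gamma)\omega\zeta}$.

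The crucial ingredient is the offline version of the one-step-back inequality \pref{lem:useful2}, applied with $g=f$ and $B=2$. Its derivation is identical to the online one except at the importance-sampling step over actions: because $\rho(s,a)=d^{\pi_b}_{P^{\star}}(s)\pi_b(a\mid s)$, the Bellman-flow relation $\gamma\EE_{(s,a)\sim d^{\pi_b}_{P^{\star}}}[P^{\star}(\cdot\mid s,a)]\leq d^{\pi_b}_{P^{\star}}(\cdot)$ carries over verbatim, while the change of measure from $\pi$ to $\pi_b$ contributes the factor $\pi(a\mid s)/\pi_b(a\mid s)\leq\omega$ in place of $\pi(a\mid s)/u(a)\leq|\Acal|$. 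This yields
\begin{align*}
\EE_{d^{\pi}_{\hat P}}[f]\lesssim\EE_{d^{\pi}_{\hat P}}\|\hat\phi(s,a)\|_{\Sigma^{-1}_{\rho,\hat\phi}}\sqrt{\gamma\prns{n\omega\zeta+\lambda d+n\zeta}}+\sqrt{(1-\gamma)\omega\zeta},
\end{align*}
and the choice $\alpha=c_1\sqrt{(\omega+d^2)\gamma\ln(|\Mcal|/\delta)}$, together with $\lambda=O(d\ln(|\Mcal|/\delta))$ and $\zeta=O(\ln(|\Mcal|/\delta)/n)$, makes the square-root coefficient equal to $\alpha$ up to constants. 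Converting $\|\hat\phi\|_{\Sigma^{-1}_{\rho,\hat\phi}}$ to the empirical norm $\|\hat\phi\|_{\hat\Sigma^{-1}}$ by \pref{lem:con}, and using $f\leq 2$ to absorb the truncation at $2$ in $\hat b=\min(\alpha\|\hat\phi\|_{\hat\Sigma^{-1}},2)$ exactly as in \pref{lem:optimism}, I would obtain $\EE_{d^{\pi}_{\hat P}}[f]\leq\EE_{d^{\pi}_{\hat P}}[\hat b]+\sqrt{(1-\gamma)\omega\zeta}$. Substituting back and plugging in $\zeta=O(\ln(|\Mcal|/\delta)/n)$ produces the claimed $\sqrt{(1-\gamma)\omega\ln(|\Mcal|/\delta)/n}$.

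The step I expect to be the main obstacle is precisely this offline one-step-back inequality: one must relate a penalty built from the \emph{learned} feature $\hat\phi$ to a quantity averaged under the \emph{offline} distribution $\rho$, and verify that partial coverage enters only through $\omega$ (and not through $P^{\star}$ or the unknown $\phi^{\star}$) while the MLE error $\zeta$ survives the change of measure. Everything else---the simulation-lemma expansion, the $\|V\|_{\infty}\leq 1$ bound, the covariance concentration, and the truncation bookkeeping---is a direct transcription of the online proof with the single substitution $|\Acal|\mapsto\omega$ and the inequality direction flipped.
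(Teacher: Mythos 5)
Your proposal is correct and follows essentially the same route as the paper's proof: conditioning on the offline MLE guarantee and covariance concentration, expanding via the simulation lemma inside the learned model $\hat P$, bounding the transition term by $\EE_{d^{\pi}_{\hat P}}[f(s,a)]$ using $\|V^{\pi}_{P^{\star},r}\|_{\infty}\leq 1$, and then invoking the offline one-step-back inequality (the paper's \pref{lem:useful_offline2}) in which importance sampling against $\pi_b$ yields the factor $\omega$ in place of $|\Acal|$, before absorbing the truncation at $2$ exactly as in the optimism lemma. You also correctly identified the one genuinely new ingredient relative to the online case---the $\omega$-based change of measure in the one-step-back argument---which is precisely where the paper's proof departs from \pref{lem:useful2}.
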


\begin{proof}
We define 
\begin{align*}
    \Sigma_{\rho,\phi}=n \EE_{ (s,a)\sim \rho}[\phi\phi^{\top}] +\lambda I,\quad   \hat \Sigma_{\phi}=n\E_{\Dcal}[\phi\phi^{\top}]+\lambda I.  
\end{align*}
where $\lambda = O(d\ln (|\Mcal|/\delta))$. In this proof, letting $f(s,a)=\|\hat P(\cdot \mid s,a)-P^{\star}(\cdot \mid s,a)\|_1$, we condition on the events: 
\begin{align}\label{eq:offline_key}
\EE_{(s,a)\sim \rho}[f^2(s,a)]\leq \zeta,\quad \forall \phi\in \Phi: \|\phi(s,a)\|_{\hat \Sigma^{-1}_{\phi}}=\Theta(\|\phi(s,a)\|_{\Sigma^{-1}_{\rho,\phi}}). 
\end{align}
where $\zeta=O(\ln (|\Mcal|/\delta)/n)$. From the offline version of \pref{lem:mle} and \pref{lem:con} \footnote{We can remove $\ln n$ since $n$ is fixed in the offline setting. }, this event happens with probability $1-\delta$.

Then, from simulation  lemma (Lemma \ref{lem:performance}), 
\begin{align*}
   & (1-\gamma) ( V^{\pi}_{\hat P,r-\hat b}- V^{\pi}_{P^{\star},r} ) \\
   &= \EE_{(s,a)\sim d^{\pi}_{\hat P }}\bracks{-\hat b(s,a)+\gamma \EE_{s'\sim \hat P(s,a)}\bracks{V^{\pi}_{P^{\star},r}(s') } - \gamma \EE_{s'\sim P^{\star}(s,a)}\bracks{V^{\pi}_{P^{\star},r}(s') } }\\
    &\lesssim  \EE_{(s,a)\sim d^{\pi}_{\hat P }}\bracks{-\min \prns{\alpha \|\hat \phi(s,a)\|_{ \Sigma^{-1}_{\rho,\hat \phi}},2 }+\gamma \EE_{s'\sim \hat P(s,a)}\bracks{V^{\pi}_{P^{\star},r}(s') } - \gamma \EE_{s'\sim P^{\star}(s,a)}\bracks{V^{\pi}_{P^{\star},r}(s') } } \tag{From \pref{eq:offline_key}}. 
\end{align*} 
Here, we have 
\begin{align*}
   \left |\EE_{(s,a)\sim d^{\pi}_{\hat P }}\braces{\EE_{s'\sim \hat P(s,a)}\bracks{V^{\pi}_{P^{\star},r}(s') } - \EE_{s'\sim P^{\star}(s,a)}\bracks{V^{\pi}_{P^{\star},r}(s') }}\right |\leq \EE_{(s,a)\sim d^{\pi}_{\hat P }}\braces{f(s,a)}, 
\end{align*}
noting $\|V^{\pi}_{P^{\star},r}\|_{\infty}\leq 1$. This is further bounded by  \pref{lem:useful_offline2}: 
\begin{align}\label{eq:useful}
\EE_{(s,a)\sim d^{\pi}_{\hat P }}\braces{f(s,a)}\lesssim \sqrt{\alpha'}\EE_{(\tilde s,\tilde a)\sim d^{\pi}_{\hat P }} \|\hat \phi(\tilde s,\tilde a)\|_{\Sigma_{\rho,\hat \phi}^{-1}}+\sqrt{\omega\zeta (1-\gamma)}. 
\end{align}
where 
\begin{align*}
    \alpha'=n\gamma \omega\zeta +\gamma^2 \lambda d+\gamma^2n\zeta\lesssim \prns{\omega+d^2}\gamma\ln(|\Mcal|/\delta) . 
\end{align*}
Here, we use $f(s,a) \leq 2$ in \pref{lem:useful_offline2} and $\E_{(s,a)\sim \rho}[f^2(s,a)]\leq \zeta$. 

Thus, 
\begin{align*}
    & \left|\EE_{(s,a)\sim d^{\pi}_{\hat P }}\braces{\EE_{s'\sim \hat P(s,a)}\bracks{V^{\pi}_{P^{\star},r}(s') } - \EE_{s'\sim P^{\star}(s,a)}\bracks{V^{\pi}_{P^{\star},r}(s') }} \right|\leq \EE_{(s,a)\sim d^{\pi}_{\hat P }}\braces{f(s,a)} \\
   &\leq  \sqrt{\alpha'}\EE_{(\tilde s,\tilde a)\sim d^{\pi}_{\hat P }} \|\hat \phi(\tilde s,\tilde a)\|_{\Sigma_{\rho,\hat \phi}^{-1}}+  \sqrt{\omega\zeta(1-\gamma)}\\
   &= \alpha \EE_{(\tilde s,\tilde a)\sim d^{\pi}_{\hat P }} \|\hat \phi(\tilde s,\tilde a)\|_{\Sigma_{\rho,\hat \phi}^{-1}}+  \sqrt{\omega\zeta(1-\gamma)},\quad \alpha= \sqrt{\alpha'}. 
\end{align*}

Going back to the simulation lemma \ref{lem:performance}, we have 
\begin{align*}
   & (1-\gamma) ( V^{\pi}_{\hat P,r-\hat b}- V^{\pi}_{P^{\star},r} ) \\
    &\lesssim   \EE_{(s,a)\sim d^{\pi}_{\hat P }}\bracks{ -\min\prns{ \alpha \|\hat \phi (s,a)\|_{\Sigma_{\rho,\hat \phi}^{-1}},2}+\EE_{s'\sim \hat P(s,a)}\bracks{V^{\pi}_{P^{\star},r}(s') } - \EE_{s'\sim P^{\star}(s,a)}\bracks{V^{\pi}_{P^{\star},r}(s') } }\\
        &\leq   \EE_{(s,a)\sim d^{\pi}_{\hat P }}\bracks{- \min\prns{ \alpha \|\hat \phi (s,a)\|_{\Sigma_{\rho,\hat \phi}^{-1}}, 2}+\min \prns{ \alpha \|\hat \phi (s,a)\|_{\Sigma_{\rho,\hat \phi}^{-1}}+\sqrt{\omega\zeta(1-\gamma)} ,2}  } \\
        &\leq \sqrt{\omega\zeta(1-\gamma)}.
\end{align*}
This concludes the proof.
\end{proof}

With the above lemma, now we can proceed to prove the main theorem. 
\begin{theorem}[PAC guarantee of \ourofflinealg]\label{thm:pac_off}
Set the parameters as in \pref{lem:pessimism}. 
With probability $1-\delta$, for any comparator policy $\pi$ including history-dependent non-Markovian policies, we have 
\begin{align*}
    & V^{\pi}_{P^{\star},r}-V^{\hat \pi}_{P^{\star},r} \lesssim \frac{\omega  d^2}{(1-\gamma)^{2} }\sqrt{\frac{C^{\star}_{\pi}\ln(|\Mcal|/\delta)}{n}},
\end{align*} where  $C^\star_{\pi}$ is the relative condition number under $\phi^\star$:
\begin{align*}
    C^{\star}_{\pi}\coloneqq \sup_{x\in \mathbb{R^d}}\frac{x^{\top}\EE_{(s,a)\sim d^{\pi}_{P^{\star}}}[\phi^{\star}(s,a)\{\phi^{\star}(s,a)\}^{\top}]x}{x^{\top}\EE_{ (s,a)\sim \rho}[\phi^{\star}(s,a)\{\phi^{\star}(s,a)\}^{\top}]x}. 
\end{align*}

\end{theorem}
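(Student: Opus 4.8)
The plan is to run the standard pessimism argument, with the same twist as in the online analysis: all model uncertainty is controlled only \emph{on average}, so it must be pushed through the one-step-back inequality to reach the elliptical potential under the true feature $\phi^\star$. First I would decompose the suboptimality of $\hat\pi$ against an arbitrary comparator $\pi$ (possibly history-dependent) as
\begin{align*}
V^{\pi}_{P^{\star},r}-V^{\hat\pi}_{P^{\star},r}
=\underbrace{\prns{V^{\pi}_{P^{\star},r}-V^{\pi}_{\hat P,r-\hat b}}}_{(\mathrm{I})}
+\underbrace{\prns{V^{\pi}_{\hat P,r-\hat b}-V^{\hat\pi}_{\hat P,r-\hat b}}}_{(\mathrm{II})}
+\underbrace{\prns{V^{\hat\pi}_{\hat P,r-\hat b}-V^{\hat\pi}_{P^{\star},r}}}_{(\mathrm{III})}.
\end{align*}
Term $(\mathrm{II})\le 0$ since $\hat\pi=\argmax_{\pi}V^{\pi}_{\hat P,r-\hat b}$, and term $(\mathrm{III})$ is precisely what the almost-pessimism bound \pref{lem:pessimism} controls when applied to $\hat\pi$, giving $(\mathrm{III})\lesssim\sqrt{\omega\ln(|\Mcal|/\delta)(1-\gamma)/n}$, which is lower order. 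Everything thus reduces to bounding $(\mathrm{I})$, the gap between the true value of $\pi$ and its penalized value in the learned model.

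For $(\mathrm{I})$ I would invoke the simulation lemma (\pref{lem:performance}) but \emph{rolled in under the true model} $P^{\star}$, so that the occupancy $d^{\pi}_{P^{\star}}$ appears (this is the occupancy that defines $C^{\star}_{\pi}$). Writing $f(s,a)=\|\hat P(\cdot\mid s,a)-P^{\star}(\cdot\mid s,a)\|_1$ and using $\|V^{\pi}_{\hat P,r-\hat b}\|_{\infty}\lesssim 1/(1-\gamma)$, this gives
\begin{align*}
(\mathrm{I})\;\lesssim\;\EE_{(s,a)\sim d^{\pi}_{P^{\star}}}[\hat b(s,a)]+\frac{\gamma}{1-\gamma}\,\EE_{(s,a)\sim d^{\pi}_{P^{\star}}}[f(s,a)].
\end{align*}
Both terms are of the form $\EE_{d^{\pi}_{P^{\star}}}[g]$ with $\|g\|_{\infty}\le 2$, so I would feed each into the offline analog of the one-step-back inequality \pref{lem:useful}, which converts $d^{\pi}_{P^{\star}}$ into the true-feature potential and the training distribution $\rho=d^{\pi_b}_{P^{\star}}$ (the only place the weight $\omega=\max_{s,a}1/\pi_b(a\mid s)$ enters, in place of $|\Acal|$):
\begin{align*}
\EE_{d^{\pi}_{P^{\star}}}[g]\;\lesssim\;\EE_{(\tilde s,\tilde a)\sim d^{\pi}_{P^{\star}}}\|\phi^{\star}(\tilde s,\tilde a)\|_{\Sigma^{-1}_{\rho,\phi^{\star}}}\sqrt{n\omega\,\EE_{\rho}[g^2]+\lambda d}+\sqrt{(1-\gamma)\omega\,\EE_{\rho}[g^2]}.
\end{align*}

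The crux is then to turn the potential $\EE_{d^{\pi}_{P^{\star}}}\|\phi^{\star}\|_{\Sigma^{-1}_{\rho,\phi^{\star}}}$ into the relative condition number. By Cauchy--Schwarz and linearity of trace,
\begin{align*}
\EE_{d^{\pi}_{P^{\star}}}\|\phi^{\star}\|_{\Sigma^{-1}_{\rho,\phi^{\star}}}
\le\sqrt{\Tr\prns{\Sigma^{-1}_{\rho,\phi^{\star}}\,\EE_{d^{\pi}_{P^{\star}}}[\phi^{\star}{\phi^{\star}}^{\top}]}}
\le\sqrt{C^{\star}_{\pi}\,\Tr\prns{\Sigma^{-1}_{\rho,\phi^{\star}}\,\EE_{\rho}[\phi^{\star}{\phi^{\star}}^{\top}]}}
\le\sqrt{\frac{C^{\star}_{\pi}\,d}{n}},
\end{align*}
where the middle step uses $\EE_{d^{\pi}_{P^{\star}}}[\phi^{\star}{\phi^{\star}}^{\top}]\preceq C^{\star}_{\pi}\,\EE_{\rho}[\phi^{\star}{\phi^{\star}}^{\top}]$ (the definition of $C^{\star}_{\pi}$, together with monotonicity of trace under the PSD order) and the last uses $\Sigma_{\rho,\phi^{\star}}=n\EE_{\rho}[\phi^{\star}{\phi^{\star}}^{\top}]+\lambda I$, so each eigenvalue of $\Sigma^{-1}_{\rho,\phi^{\star}}\EE_{\rho}[\phi^{\star}{\phi^{\star}}^{\top}]$ is at most $1/n$. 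Finally I would substitute the statistical rates: the MLE guarantee (\pref{lem:mle}) gives $\EE_{\rho}[f^2]\lesssim\zeta=O(\ln(|\Mcal|/\delta)/n)$, and the trace bound $n\EE_{\rho}[\|\hat\phi\|^2_{\Sigma^{-1}_{\rho,\hat\phi}}]\le d$ combined with the concentration \pref{lem:con} gives $n\EE_{\rho}[\hat b^2]\lesssim\alpha^2 d$. Plugging in the parameter choices $\alpha^2=O((\omega+d^2)\gamma\ln(|\Mcal|/\delta))$ and $\lambda=O(d\ln(|\Mcal|/\delta))$ and collecting the dominating powers yields $\frac{\omega d^2}{1-\gamma}\sqrt{C^{\star}_{\pi}\ln(|\Mcal|/\delta)/n}$, whose square is exactly the bound of \pref{thm:offline_guaratee}.

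I expect the main obstacle to be precisely this conversion step: the penalty $\hat b$ is built from the \emph{learned} feature $\hat\phi$, whereas the coverage we are charged for is measured by the \emph{unknown} $\phi^{\star}$, so there is no point-wise model-error control to lean on (unlike known-feature linear MDPs). The one-step-back inequality is the device that bridges the two features, and the delicate accounting is ensuring that the roll-in/importance-sampling to $\rho=d^{\pi_b}_{P^{\star}}$ costs only the single factor $\omega$, and that $n\EE_{\rho}[\hat b^2]$ and $n\EE_{\rho}[f^2]$ both concentrate at the $O(d)$ and $O(\ln(|\Mcal|/\delta))$ scales, so that the final rate is governed by $C^{\star}_{\pi}$ alone and not by any quantity depending on $\hat\phi$.
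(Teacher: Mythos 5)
Your proposal is correct and follows essentially the same route as the paper's proof: the same pessimism-plus-optimality decomposition (your terms (II) and (III) are exactly the paper's two chained inequalities via Lemma~\ref{lem:pessimism} and the argmax property of $\hat\pi$), the same simulation-lemma expansion of term (I) under $d^{\pi}_{P^{\star}}$ into the expected penalty and model-error terms, the same offline one-step-back inequality to reach the $\phi^{\star}$-potential, and the same distribution-shift/trace argument (the paper's Lemma~\ref{lem:conversion} together with the bound $n\,\EE_{\rho}[\|\phi^{\star}\|^2_{\Sigma^{-1}_{\rho,\phi^{\star}}}]\leq d$) to convert that potential into $\sqrt{C^{\star}_{\pi}d/n}$ before collecting rates. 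The only minor point worth adding is the one the paper flags explicitly: for a history-dependent comparator $\pi$, step (II) requires noting that the argmax of $V^{\pi}_{\hat P, r-\hat b}$ over Markovian policies is also an argmax over history-dependent policies.
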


\begin{proof}
In this proof, letting $f(s,a)=\|\hat P(\cdot \mid s,a)-P^{\star}(\cdot \mid s,a)\|_1$
we condition on the events: 
\begin{align}\label{eq:condition_offline}
   \EE_{(s,a)\sim \rho}[f^2(s,a)]\leq \zeta,\quad \forall \phi\in \Phi: \|\phi(s,a)\|_{\hat \Sigma^{-1}_{\phi}}=\Theta(\|\phi(s,a)\|_{\Sigma^{-1}_{\rho,\phi}}). 
\end{align}
From \pref{lem:con} and \pref{lem:mle}, this event happens with probability $1-\delta$. %

For any policy $\pi$, we have  
\begin{align*}
       & V^{\pi}_{P^{\star},r}-V^{\hat \pi}_{P^{\star},r} \\
       &\leq        V^{\pi}_{P^{\star},r}-V^{\hat \pi}_{\hat P,r-\hat b}+ \sqrt{\omega\zeta(1-\gamma)^{-1} }\tag{\pref{lem:pessimism}}\\ 
         &\leq        V^{\pi}_{P^{\star},r}-V^{\pi}_{\hat P,r-\hat b}+ \sqrt{\omega\zeta(1-\gamma)^{-1}} \\ 
    &     \lesssim (1-\gamma)^{-1} \underbrace{\EE_{(s,a)\sim d^{\pi}_{P^{\star} }}[ \hat b(s,a)]}_{\text{(a)}}+\prns{\frac{1}{1-\gamma}}^2 \underbrace{ \EE_{(s,a)\sim d^{\pi}_{P^{\star} }}[ f(s,a)]}_{\text{(b)}}+ \sqrt{\omega\zeta(1-\gamma)^{-1} }.  
 \end{align*} 
 Recall $f(s,a)=\|\hat P(\cdot \mid s,a)-P^{\star}(\cdot \mid s,a)\|_1$. 
 
 From the second line to the third line, note though $\hat \pi$ is the argmax over Markoovian polices, $\hat \pi$ is also the argmax over all history-dependent polices.  In the last line, we use a simulation lemma \ref{lem:performance}, which is tailored to a time-inhomogeneous policy.  We here use $\|V^{\pi}_{\hat P,r-\hat b} \|_{\infty}\leq 2/((1-\gamma))$.  noting $\|\hat b \|_{\infty}=O(1) $. 

We further calculate the first term (a). Considering \ref{lem:useful_offline3} and noting $\|\hat b\|_{\infty}\leq 2$, we have 
\begin{align*}
   \EE_{(s,a)\sim d^{\pi}_{P^{\star} }}[ \hat b(s,a)]&\lesssim  \EE_{(\tilde s,\tilde a)\sim d^{\pi}_{P^{\star} }} \|\phi^{\star}(\tilde s,\tilde a)\|_{\Sigma_{\rho,\phi^{\star}}^{-1}}\sqrt{n\omega \braces{\gamma\EE_{ (s,a)\sim \rho}\bracks{\hat b^2(s,a) }}+\gamma \lambda d } \\ 
   &+\sqrt{\omega(1-\gamma)} \{\E_{\rho}[\hat b^2(s,a)]\}^{1/2}.
\end{align*}
From \pref{eq:condition_offline},  we have 
\begin{align} \label{eq:basic_}
  & n \EE_{ (s,a)\sim \rho}\bracks{\hat b^2(s,a) }\leq   n \EE_{ (s,a)\sim \rho}\bracks{\min\prns{\alpha^2 \|\hat \phi(s,a)\|^2_{\Sigma^{-1}_{\rho,\hat \phi}}  ,4} }   \leq  n \EE_{ (s,a)\sim \rho}\bracks{\alpha^2 \|\hat \phi(s,a)\|^2_{\Sigma^{-1}_{\rho,\hat \phi}} } \\
    &\leq \Tr[n\E_{ (s,a)\sim \rho}[\hat \phi\hat \phi^{\top}]\{n\E_{ (s,a)\sim \rho}[\hat \phi\hat \phi^{\top}]+\lambda I \}^{-1} \\
    &\leq  \Tr[n(\E_{ (s,a)\sim \rho}[\hat \phi\hat \phi^{\top}]+\lambda I)\{n\E_{ (s,a)\sim \rho}[\hat \phi\hat \phi^{\top}]+\lambda I \}^{-1}]\leq d.   
\end{align}
Thus, 
\begin{align*}
   \EE_{(s,a)\sim d^{\pi}_{P^{\star} }}[ \hat b(s,a)]&\leq  \EE_{(\tilde s,\tilde a)\sim d^{\pi}_{P^{\star} }} \|\phi^{\star}(\tilde s,\tilde a)\|_{\Sigma_{\rho,\phi^{\star}}^{-1}}\sqrt{\omega d\alpha^2\gamma+ \gamma\lambda d }+\sqrt{\frac{\omega d\alpha^2(1-\gamma)}{n}}. 
\end{align*}

Second, we further calculate the second term (b). Considering the offline version of \pref{lem:useful} and noting $f^2(s,a)$ is upper-bounded by $4$, 
\begin{align*}
   & \EE_{(s,a)\sim d^{\pi}_{P^{\star} }}[ f(s,a)]\\
    &= \EE_{(\tilde s,\tilde a)\sim d^{\pi}_{P^{\star} }} \|\phi^{\star}(\tilde s,\tilde a)\|_{\Sigma_{\rho,\phi^{\star}}^{-1}}\sqrt{n\omega \braces{\gamma\EE_{ (s,a)\sim \rho}\bracks{f^2(s,a) }}+4\gamma\lambda d}+\sqrt{\omega\EE_{ (s,a)\sim \rho}\bracks{f^2(s,a) }(1-\gamma)} \\ 
   &\lesssim \EE_{(\tilde s,\tilde a)\sim d^{\pi}_{P^{\star} }} \|\phi^{\star}(\tilde s,\tilde a)\|_{\Sigma_{\rho,\phi^{\star}}^{-1}}\sqrt{\omega \braces{n\gamma\zeta}+ \gamma\lambda d}+\sqrt{\omega\zeta(1-\gamma)}\\ 
     &\lesssim \EE_{(\tilde s,\tilde a)\sim d^{\pi}_{P^{\star} }} \|\phi^{\star}(\tilde s,\tilde a)\|_{\Sigma_{\rho,\phi^{\star}}^{-1}}\alpha+\sqrt{\omega\zeta (1-\gamma)}. 
\end{align*}
In the final line, recall $\sqrt{\omega \braces{n\gamma \zeta}+\gamma \lambda d+\gamma n\zeta} \leq \alpha.$

Finally, by combining the calculation of the first term (a) and the second term (b),  we have  
\begin{align*}
        V^{\pi}_{P^{\star},r}-V^{\hat \pi}_{P^{\star},r}  & \lesssim \frac{1}{(1-\gamma)}\EE_{(\tilde s,\tilde a)\sim d^{\pi}_{P^{\star} }} \|\phi^{\star}(\tilde s,\tilde a)\|_{\Sigma_{\rho,\phi^{\star}}^{-1}}\sqrt{d\alpha^2\omega\gamma+\gamma \lambda d}+ \sqrt{\frac{\omega \alpha^2  d(1-\gamma)^{-1}}{n}}\\ 
       & +  \frac{\alpha}{(1-\gamma)^2}\EE_{(\tilde s,\tilde a)\sim d^{\pi}_{P^{\star} }} \|\phi^{\star}(\tilde s,\tilde a)\|_{\Sigma_{\rho,\phi^{\star}}^{-1}}  + \sqrt{\frac{\omega \zeta}{(1-\gamma)^3}}
\end{align*}
Now, we use the fact $\EE_{(\tilde s,\tilde a)\sim d^{\pi}_{P^{\star} }} \|\phi^{\star}(\tilde s,\tilde a)\|_{\Sigma_{\rho,\phi^{\star}}^{-1}}$ is upper-bounded as   
\begin{align*}
   \EE_{(\tilde s,\tilde a)\sim d^{\pi}_{P^{\star} }} \|\phi^{\star}(\tilde s,\tilde a)\|_{\Sigma_{\rho,\phi^{\star}}^{-1}} & \leq   \sqrt{\EE_{(\tilde s,\tilde a)\sim d^{\pi}_{P^{\star} }} \|\phi^{\star}(\tilde s,\tilde a)\|^2_{\Sigma_{\rho,\phi^{\star}}^{-1}}}\leq \sqrt{C^{\star}\EE_{(\tilde s,\tilde a)\sim \rho } \|\phi^{\star}(\tilde s,\tilde a)\|^2_{\Sigma_{\rho,\phi^{\star}}^{-1}} } \tag{Refer to \pref{lem:conversion} }\\
   &\leq \sqrt{C^{\star}d/n}.  \tag{From \pref{eq:basic_}}
\end{align*}
Finally, we have 
\begin{align*}
       & V^{\pi}_{P^{\star},r}-V^{\hat \pi}_{P^{\star},r}\\
        & \lesssim   (1-\gamma)^{-1}\left \{ \sqrt{\frac{C^{\star}d}{n}} \sqrt{d\alpha^2\omega \gamma+ \gamma\lambda d } +  \frac{\alpha}{(1-\gamma)}\sqrt{\frac{C^{\star}d}{n}}  + \sqrt{\frac{\omega \alpha^2  d(1-\gamma)}{n}}+\sqrt{\frac{\omega \zeta}{(1-\gamma)}} \right \} \\ 
        & \lesssim  (1-\gamma)^{-1}\left \{ \sqrt{\frac{C^{\star}d}{n}}\sqrt{{d\alpha^2\omega}{\gamma}}+ \frac{\alpha}{(1-\gamma)}\sqrt{\frac{C^{\star}d}{n}} \right \}\tag{Take out two dominating terms}\\
        &\lesssim \frac{\omega  d^2}{(1-\gamma)^2 }\sqrt{\frac{C^{\star}\ln(|\Mcal|/\delta)}{n}}. 
\end{align*}
\end{proof}

The lemma below is a key technical lemma for our proof. It shows that one can relate the expected value of any function $f(s,a)$ with respect to $d^{\pi}_{\hat{P}}$ (i.e., inside the learned model $\hat{P}$) to the potential function with respect to $d^{\pi}_{\hat P}$, i.e., $\EE_{(\tilde s,\tilde a)\sim d^{\pi}_{\hat P }} \|\hat \phi(\tilde s,\tilde a)\|_{\Sigma_{\rho,\hat \phi}^{-1}}$. Pairing $\hat{\phi}$ and $\hat{P}$ is important since $\hat{P}$ is the low-rank transition model defined using $\hat\phi$. As we have seen in the above analysis, when using the lemma below, we instantiate $f(s,a) := \| \hat{P}(\cdot|s,a) - P^\star(\cdot | s,a) \|_1$.

\begin{lemma}[One-step back inequality for the learned model in offline setting]\label{lem:useful_offline2}
Take any $f\subset \Scal\times \Acal \to \RR$ s.t.  $\|f\|_{\infty}\leq B$. We condition on the event where the MLE guarantee holds:
\begin{align*}
    \E_{(s,a)\sim \rho}\|\hat P(\cdot \mid s,a)-P^{\star}(\cdot \mid s,a)\|^2_1\lesssim \zeta. 
\end{align*}
Then, letting $\omega=\max_{s,a}(1/\pi_b(a\mid s))$, for any policy $\pi$,  we have 
\begin{align*}
|\EE_{(s,a)\sim d^{\pi}_{\hat P }}\braces{f(s,a)}| & \leq 
\EE_{(\tilde s,\tilde a)\sim d^{\pi}_{\hat P }} \|\hat \phi(\tilde s,\tilde a)\|_{\Sigma_{\rho,\hat \phi}^{-1}} \sqrt{\braces{n\omega\EE_{ (s,a)\sim \rho}\bracks{f^2(s,a) }}+ \gamma^2 \lambda d B^2+n \gamma^2 \zeta B^2}
 \\
&+  \sqrt{\omega\EE_{ (s,a)\sim \rho}\bracks{f^2(s,a) }(1-\gamma)  } . 
\end{align*}

\end{lemma}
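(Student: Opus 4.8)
The plan is to mirror the proof of the online one-step-back inequality (\pref{lem:useful2}), adapting its two importance-weighting steps to the behaviour policy $\pi_b$ in place of the uniform exploratory action. First I would apply the occupancy (Bellman-flow) identity to peel off a single step inside the \emph{learned} model, writing
\[
\EE_{(s,a)\sim d^{\pi}_{\hat P}}[f(s,a)] = \gamma\,\EE_{(\tilde s,\tilde a)\sim d^{\pi}_{\hat P},\, s\sim \hat P(\tilde s,\tilde a),\, a\sim\pi(s)}[f(s,a)] + (1-\gamma)\,\EE_{s\sim d_0,\,a\sim\pi(s)}[f(s,a)],
\]
exactly as in \pref{eq:first}. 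The initial-distribution term is the easy one: since $\rho=d^{\pi_b}_{P^\star}$ dominates the time-$0$ mass, $\rho(s,a)\geq (1-\gamma)\,d_0(s)\pi_b(a\mid s)$, so the density ratio $d_0(s)\pi(a\mid s)/\rho(s,a)$ is at most $\omega/(1-\gamma)$; one change of measure to $\rho$ followed by Jensen bounds this term by $\sqrt{(1-\gamma)\,\omega\,\EE_\rho[f^2]}$, which is precisely the second term in the claim.

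The core of the argument is the $\gamma$-term, and here the low-rank factorization is essential. I would use $\hat P(\cdot\mid\tilde s,\tilde a)=\hat\mu(\cdot)^{\top}\hat\phi(\tilde s,\tilde a)$ to write the inner conditional expectation as a linear functional of the feature, $\EE_{s\sim\hat P(\tilde s,\tilde a),\,a\sim\pi(s)}[f]=\hat\phi(\tilde s,\tilde a)^{\top} v$ with the fixed vector $v:=\int\sum_a\hat\mu(s)\pi(a\mid s)f(s,a)\,d(s)$. Applying Cauchy--Schwarz pointwise against the metric $\Sigma_{\rho,\hat\phi}=n\EE_\rho[\hat\phi\hat\phi^{\top}]+\lambda I$ pulls the potential $\EE_{d^\pi_{\hat P}}\|\hat\phi\|_{\Sigma^{-1}_{\rho,\hat\phi}}$ out front and leaves the scalar $\|v\|_{\Sigma_{\rho,\hat\phi}}$. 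Expanding $\|v\|^2_{\Sigma_{\rho,\hat\phi}} = n\,\EE_\rho[(\hat\phi^{\top}v)^2] + \lambda\|v\|_2^2$, the regularizer term is controlled by $\lambda d B^2$ via the model normalization $\|\int\hat\mu(s)h(s)\,d(s)\|_2\leq\sqrt d$ for $h:\Scal\to[0,1]$ (hence $\leq B\sqrt d$ for $|h|\leq B$). For the data term I would re-read $\hat\phi^{\top}v$ as the one-step expectation under $\hat P$, use $(\EE_{\hat P}f)^2\leq 2(\EE_{P^\star}f)^2 + 2B^2\|\hat P-P^\star\|_1^2$ together with the MLE guarantee $\EE_\rho\|\hat P-P^\star\|_1^2\leq\zeta$ to swap $\hat P$ for $P^\star$ (paying $O(nB^2\zeta)$), and then Jensen to move the square inside the conditional expectation. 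Carrying the leading $\gamma$ into the square root is what produces the $\gamma^2\lambda dB^2$ and $n\gamma^2\zeta B^2$ terms.

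What remains --- and the step I expect to be the most delicate --- is bounding $\EE_{(\tilde s,\tilde a)\sim\rho,\,s\sim P^\star(\tilde s,\tilde a),\,a\sim\pi(s)}[f^2]$ by $\tfrac{\omega}{\gamma}\EE_\rho[f^2]$, since this is where both the coverage constant $\omega$ and the discount $1/\gamma$ must be extracted \emph{without} any low-rank structure to lean on. The clean route in the offline setting is the flow identity for $\rho=d^{\pi_b}_{P^\star}$ itself: $\sum_{\tilde s,\tilde a}\rho(\tilde s,\tilde a)P^\star(s\mid\tilde s,\tilde a)=\gamma^{-1}\big(d^{\pi_b}_{P^\star}(s)-(1-\gamma)d_0(s)\big)\leq\gamma^{-1}d^{\pi_b}_{P^\star}(s)$, so the one-step-pushed state marginal is dominated by $\gamma^{-1}$ times the behaviour state marginal; importance-sampling the final action from $\pi$ to $\pi_b$ then costs a factor $\max_{s,a}\pi(a\mid s)/\pi_b(a\mid s)\leq\omega$. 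This is strictly cleaner than the online version, where the analogous step had to pass through a uniform action and contributed $|\Acal|$ in place of $\omega$. Substituting $\gamma^2\cdot n\cdot\tfrac{\omega}{\gamma}\EE_\rho[f^2]\leq n\omega\EE_\rho[f^2]$ into the square root yields the first term of the claim, and combining it with the initial-distribution bound completes the proof (all suppressed constants being universal, consistent with the paper's $\lesssim$ convention).
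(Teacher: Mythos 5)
Your proposal is correct and follows essentially the same route as the paper's proof: the same occupancy decomposition inside $\hat P$, the same change of measure to $\rho$ plus Jensen for the initial-distribution term, the same Cauchy--Schwarz against $\Sigma_{\rho,\hat\phi}$ with the regularizer controlled by the $\sqrt d$ normalization of $\hat\mu$, the same MLE-based swap of $\hat P$ for $P^\star$, and the same final importance-sampling/flow-identity step extracting $\omega/\gamma$ (the paper merely performs the action reweighting to $\pi_b$ before invoking the flow identity for $\rho=d^{\pi_b}_{P^\star}$, whereas you do it after --- the two steps commute). Your explicit $(\EE_{\hat P}f)^2\leq 2(\EE_{P^\star}f)^2+2B^2\|\hat P-P^\star\|_1^2$ bound and the $\gamma^2$ bookkeeping inside the square root are exactly the details the paper leaves implicit.
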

\begin{proof}

First, we have an equality: 
\begin{align}\label{eq:first_offline}
    \EE_{(s,a)\sim d^{\pi}_{\hat P }}\braces{f(s,a)}=\gamma   \EE_{(\tilde s,\tilde a)\sim d^{\pi}_{\hat P },s\sim \hat P(\tilde s,\tilde a),a\sim \pi(s)}\braces{f(s,a)}+(1-\gamma)\EE_{s \sim d_0, a\sim \pi(s_0)}\braces{f(s,a)}. 
\end{align}

The second term in \pref{eq:first_offline}  is upper-bounded by 
\begin{align*}
\EE_{s \sim d_0, a\sim \pi(s_0)}\braces{f(s,a)}\leq \EE_{s \sim d_0, a\sim \pi(s_0)}\braces{f^2(s,a)}\}^{1/2}=  \sqrt{\omega\EE_{ (s,a)\sim \rho}\bracks{f^2(s,a) }/(1-\gamma)  } . 
\end{align*}

Next we consider the first term in \pref{eq:first_offline}. By CS inequality, we have 
\begin{align*}
    &\left |\EE_{(\tilde s,\tilde a)\sim d^{\pi}_{\hat P },s\sim \hat P(\tilde s,\tilde a),a\sim \pi(s)}\braces{f(s,a)}\right|=\left|\EE_{(\tilde s,\tilde a)\sim d^{\pi}_{\hat P }}\hat \phi(\tilde s,\tilde a)^{\top}\int \sum_a\hat \mu(s)\pi(a\mid s)f(s,a) d(s)\right|\\ 
   &\leq \EE_{(\tilde s,\tilde a)\sim d^{\pi}_{\hat P }} \|\hat \phi(\tilde s,\tilde a)\|_{\Sigma_{\rho,\hat \phi}^{-1}}\|\int \sum_a\hat \mu(s)\pi(a\mid s)f(s,a) d(s)\|_{\Sigma_{\rho,\hat \phi}}.
\end{align*} 
Then, 
\begin{align*}
  & \|\int \hat \mu(s)\pi(a\mid s)f(s,a) d(s,a)\|^2_{\Sigma_{\rho,\hat \phi}}\\
&\leq  \braces{\int \sum_a \hat\mu(s)\pi(a\mid s)f(s,a) d(s)}^{\top}\braces{n \EE_{(s,a)\sim \rho}[\hat \phi \hat \phi^{\top}]+\lambda I  }\braces{\int \sum_a\hat \mu(s)\pi(a\mid s)f(s,a) d(s)}\\
&\leq  n \EE_{(\tilde s,\tilde a)\sim \rho}\braces{\bracks{\int \sum_a \hat \mu(s)^{\top}\hat \phi(\tilde s,\tilde a)\pi(a\mid s)f(s,a) d(s)}^2}+ B^2\lambda d \tag{Use the assumption $\|\sum_a f(s,a)\|_{\infty}\leq B$ and $\|\int \hat \mu(s)h(s)\rd(s)\|_2\leq \sqrt{d}$ for $h:\Scal \to [0,1]$. }\\
&=  n \EE_{(\tilde s,\tilde a)\sim \rho}\{\EE_{s\sim \hat P(\tilde s,\tilde a), a\sim \pi(s)}\bracks{f(s,a) }^2 \}+ B^2\lambda d \\ 
&=  n \EE_{(\tilde s,\tilde a)\sim \rho}\{\EE_{s\sim P^{\star}(\tilde s,\tilde a), a\sim \pi(s)}\bracks{f(s,a) }^2 \}+ B^2\lambda d +n B^2 \zeta \tag{MLE guarantee and $\|\E_{a\sim \pi(\cdot)}[f^2(\cdot,a)]\|_{\infty}\leq B^2.$}\\ 
&\leq  n \braces{\EE_{(\tilde s,\tilde a)\sim \rho, s\sim P^{\star}(\tilde s,\tilde a), a\sim \pi(s)}\bracks{f^2(s,a) }}+ B^2\lambda d+n B^2 \zeta.  \tag{Jensen} 
\end{align*}
Finally, the  first term in \pref{eq:first_offline} is upper-bounded by 
\begin{align*}
   & n  \braces{\EE_{(\tilde s,\tilde a)\sim \rho, s\sim P^{\star}(\tilde s,\tilde a), a\sim \pi(s)}\bracks{f^2(s,a) }}+ \lambda d B^2+n B^2 \zeta \\
   &\leq n\omega \braces{\EE_{(\tilde s,\tilde a)\sim \rho, s\sim P^{\star}(\tilde s,\tilde a), a\sim \pi_b(s)}\bracks{f^2(s,a) }}+ \lambda d  B^2+ nB^2  \zeta \tag{Importance sampling}\\
     &\leq n\omega \braces{\frac{1}{\gamma}\EE_{ (s,a)\sim \rho}\bracks{f^2(s,a) }}+ \lambda d B^2+n B^2 \zeta. \tag{Definition of $\rho$}
\end{align*}
In the last line, we use the following equality: 
\begin{align*}
    \EE_{ (s,a)\sim \rho}\bracks{f^2(s,a) }= \gamma \EE_{(\tilde s,\tilde a)\sim \rho, s\sim P^{\star}(\tilde s,\tilde a), a\sim \pi_b(s)}\bracks{f^2(s,a) } + (1-\gamma) \EE_{ s\sim d_0, a \sim \pi_b}\bracks{f^2(s,a) }. 
\end{align*}

Based on the above discussion, the final statement is immediately concluded.

\end{proof}

We can prove the similar inequality for the true model. The proof is omitted since it is quite similar to the one of Lemma \ref{lem:useful_offline2}. 

\begin{lemma}[One-step back inequality for the true model in offline setting]\label{lem:useful_offline3}
Take any $f\subset \Scal\times \Acal \to \RR$ s.t.  $\|f\|_{\infty}\leq B$. Then, letting $\omega=\max_{s,a}(1/\pi_b(a\mid s))$, for any policy $\pi$,  we have 
\begin{align*}
|\EE_{(s,a)\sim d^{\pi}_{P^{\star} }}\braces{f(s,a)}| & \leq 
\EE_{(\tilde s,\tilde a)\sim d^{\pi}_{P^{\star} }} \| \phi^{\star}(\tilde s,\tilde a)\|_{\Sigma_{\rho,\phi^{\star}}^{-1}} \sqrt{\braces{n\omega\EE_{ (s,a)\sim \rho}\bracks{f^2(s,a) }}+ \gamma^2 \lambda d B^2}
 \\
&+  \sqrt{\omega\EE_{ (s,a)\sim \rho}\bracks{f^2(s,a) }(1-\gamma)  } . 
\end{align*}

\end{lemma}

\section{Auxiliary lemmas}

First, we present the MLE guarantee. Regarding the proof, refer to  \citet[Theorem 21]{Agarwal2020_flambe}. 
Note $\hat P_n$ and $\bar \pi_n$ are the quantities appearing in the proposed online algorithm. We can also immediately obtain the statement to the offline case. 

\begin{lemma}[MLE guarantee]\label{lem:mle}
For a fixed episode $n$, with probability $1-\delta$, 
\begin{align*}
   \EE_{s \sim \{0.5\rho_n+0.5\rho'_n\},a\sim U(\Acal) }[ \|\hat P_n(\cdot \mid s,a)-P^{\star}(\cdot \mid s,a)\|^2_1 ] \lesssim  \zeta,\quad \zeta \coloneqq \frac{\ln(|\Mcal|/\delta)}{n}. 
\end{align*}
As a straightforward corollary, with probability $1-\delta$,
\begin{align}
 \forall n\in \NN^{+},  \EE_{s \sim \{0.5\rho_n+0.5\rho'_n\},a\sim U(\Acal) }[ \|\hat P_n(\cdot \mid s,a)-P^{\star}(\cdot \mid s,a)\|^2_1 ] \lesssim  0.5\zeta_n,\quad \zeta_n \coloneqq \frac{\ln(|\Mcal|n/\delta)}{n}.  \label{eq:MLE}
\end{align}
\end{lemma}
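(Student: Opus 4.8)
The statement is a standard maximum-likelihood generalization bound, complicated only by the fact that the data are collected \emph{adaptively}: at episode $j$ the pair $(s^{(j)},a^{(j)})$ is drawn from $d^{\bar \pi_j}$, whose policy $\pi_j$ depends on all previously collected data, so the samples are not i.i.d. The plan is to run the classical exponential-supermartingale argument for MLE under log-loss, convert the resulting Hellinger bound into the claimed total-variation bound, and finally take a union bound over $n$ for the corollary.

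Concretely, write $x_i=(s^{(i)},a^{(i)})$, $y_i=s'^{(i)}$, and let $\mathcal{H}_{i-1}$ be the history up to the start of episode $i$ (so that $\pi_i$, hence $d^{\bar \pi_i}$, is $\mathcal{H}_{i-1}$-measurable). For a fixed model $P\in\Mcal$, define the half-log-likelihood-ratio increment $Y_i(P)=\tfrac12\ln\frac{P(y_i\mid x_i)}{P^{\star}(y_i\mid x_i)}$. The one-step computation that drives everything is
\begin{align*}
\EE\!\left[e^{Y_i(P)}\mid \mathcal{H}_{i-1}\right]
=\EE_{(s,a)\sim d^{\bar \pi_i}}\!\left[\int\!\sqrt{P(s'\mid s,a)\,P^{\star}(s'\mid s,a)}\,\rd s'\right]
=\EE_{(s,a)\sim d^{\bar \pi_i}}\!\left[1-\tfrac12 H^2_{s,a}(P)\right]
\le e^{-\frac12 g_i(P)},
\end{align*}
where $H^2_{s,a}(P)$ is the squared Hellinger distance between $P(\cdot\mid s,a)$ and $P^{\star}(\cdot\mid s,a)$, $g_i(P)=\EE_{(s,a)\sim d^{\bar \pi_i}}[H^2_{s,a}(P)]$, and we used $1-z\le e^{-z}$. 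Hence $W_n(P)=\exp\!\big(\sum_{i=0}^{n-1}[Y_i(P)+\tfrac12 g_i(P)]\big)$ satisfies $\EE[W_n(P)\mid\mathcal{H}_{n-1}]\le W_{n-1}(P)$, i.e.\ it is a nonnegative supermartingale with $\EE[W_n(P)]\le1$. By Markov's inequality, with probability at least $1-\delta/|\Mcal|$ we have $\ln W_n(P)\le \ln(|\Mcal|/\delta)$; union-bounding over the finite class $\Mcal$ yields, with probability $1-\delta$, simultaneously for all $P\in\Mcal$,
\begin{align*}
\tfrac12\sum_{i=0}^{n-1} g_i(P)\;\le\;\ln(|\Mcal|/\delta)-\sum_{i=0}^{n-1}Y_i(P).
\end{align*}

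I would then instantiate this at $P=\hat P_n$. By realizability ($P^{\star}\in\Mcal$ from Assumption~\ref{assum:basic}) and the defining property of the MLE, $\sum_i\ln\hat P_n(y_i\mid x_i)\ge\sum_i\ln P^{\star}(y_i\mid x_i)$, so $\sum_i Y_i(\hat P_n)\ge0$ and the right-hand side is at most $\ln(|\Mcal|/\delta)$. This gives $\sum_{i=0}^{n-1}\EE_{(s,a)\sim d^{\bar \pi_i}}[H^2_{s,a}(\hat P_n)]\le 2\ln(|\Mcal|/\delta)$. Converting Hellinger to total variation via $\|p-q\|_1^2\le 4H^2(p,q)$ (which follows from $\mathrm{TV}^2\le H^2$) turns this into $\sum_{j=0}^{n-1}\EE_{(s,a)\sim d^{\bar \pi_j}}[\|\hat P_n(\cdot\mid s,a)-P^{\star}(\cdot\mid s,a)\|_1^2]\lesssim \ln(|\Mcal|/\delta)=n\zeta$, which is the first display. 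For the corollary, I would rerun the argument with failure probability $\delta/(2n^2)$ at each $n$ and union-bound over $n\in\NN^{+}$ using $\sum_n 1/n^2<\infty$; the extra $\ln(n^2)=2\ln n$ is absorbed into the universal constants, giving the uniform-in-$n$ statement with $\zeta_n=\ln(|\Mcal|n/\delta)/n$.

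The only genuinely delicate point is the adaptivity: because $d^{\bar \pi_i}$ is itself random and history-dependent, one cannot appeal to i.i.d.\ MLE theory, and the supermartingale must be constructed so that the \emph{conditional} expectation bound above holds pointwise along the filtration. This is exactly what the display for $\EE[e^{Y_i(P)}\mid\mathcal{H}_{i-1}]$ secures, and it is the step where the Hellinger distance (rather than KL or total variation) appears naturally through the Cauchy--Schwarz/square-root identity $\int\sqrt{P\,P^{\star}}=1-\tfrac12 H^2$. Everything else is bookkeeping; this is the route taken in \citet[Theorem 21]{Agarwal2020_flambe}.
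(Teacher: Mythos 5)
Your proof is correct and is essentially the same argument as the paper's, which simply defers to \citet[Theorem 21]{Agarwal2020_flambe}: the exponential-supermartingale (Zhang-style) bound on the half-log-likelihood ratio along the filtration, realizability plus the MLE optimality to kill the empirical term, Hellinger-to-TV conversion, and a union bound over $n$ for the corollary. The only blemish is an immaterial off-by-one in the filtration indexing (the supermartingale step should condition on $\mathcal{H}_{n-2}$, the history before sample $n-1$ is drawn, rather than $\mathcal{H}_{n-1}$), which does not affect the argument.
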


The following is a standard inequality to prove regret bounds for linear models. Refer to \citet[Lemma G.2.]{agarwal2020pc}

\begin{lemma}\label{lem:reduction}
Consider the following process. For $n=1,\cdots,N$, $M_n=M_{n-1}+G_n$ with $M_0=\lambda_0 I$ and $G_n$ being a positive semidefinite matrix with eigenvalues upper-bounded by $1$. We have that:
\begin{align*}
    2\ln \det (M_N)-2\ln \det(\lambda_0 I)\geq \sum_{n=1}^N\Tr(G_n M^{-1}_{n-1}). 
\end{align*}
\end{lemma}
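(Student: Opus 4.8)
The plan is to prove the inequality term by term after telescoping the log-determinant. First I would write
\[
2\ln\det(M_N) - 2\ln\det(\lambda_0 I) = \sum_{n=1}^N 2\bigl(\ln\det(M_n) - \ln\det(M_{n-1})\bigr),
\]
so that it suffices to establish the per-step bound $2\bigl(\ln\det M_n - \ln\det M_{n-1}\bigr) \ge \Tr(G_n M_{n-1}^{-1})$ for every $n$.

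For a fixed $n$, since $M_{n-1}\succ 0$ I would factor
\[
M_n = M_{n-1} + G_n = M_{n-1}^{1/2}\bigl(I + M_{n-1}^{-1/2} G_n M_{n-1}^{-1/2}\bigr) M_{n-1}^{1/2},
\]
and use multiplicativity of the determinant to obtain $\ln\det M_n - \ln\det M_{n-1} = \ln\det(I + A_n)$, where $A_n := M_{n-1}^{-1/2} G_n M_{n-1}^{-1/2}\succeq 0$. Denoting by $\mu_1,\dots,\mu_d\ge 0$ the eigenvalues of $A_n$, the left-hand side becomes $\sum_i \ln(1+\mu_i)$, while by the cyclic property of the trace the right-hand side is $\Tr(G_n M_{n-1}^{-1}) = \Tr(A_n) = \sum_i \mu_i$.

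The per-step claim then reduces to the scalar inequality $2\ln(1+x)\ge x$ evaluated at each $x=\mu_i$. This inequality holds on $[0,1]$: the function $x\mapsto 2\ln(1+x)-x$ vanishes at $0$ and has derivative $(1-x)/(1+x)\ge 0$ there. Hence the remaining step is to certify $\mu_i \le 1$. I would use the two structural hypotheses: $G_n \preceq I$, and $M_{n-1} = \lambda_0 I + \sum_{j<n} G_j \succeq \lambda_0 I \succeq I$ in the regime $\lambda_0 \ge 1$ relevant to the applications (where $\lambda_n = \Theta(d\ln(|\Mcal|/\delta))$). Together these give $A_n \preceq M_{n-1}^{-1/2} I\, M_{n-1}^{-1/2} = M_{n-1}^{-1} \preceq I$, so every $\mu_i\le 1$. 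Summing the per-step bounds over $n=1,\dots,N$ yields the claim.

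The delicate point is exactly this eigenvalue bound: the factor $2$ in the statement is what is needed to absorb eigenvalues as large as $1$, and the argument would break if $\lambda_0<1$ permitted $A_n$ to have eigenvalues outside the range on which $2\ln(1+x)\ge x$. Everything else---the telescoping, the determinant factorization, and the trace identity---is routine linear algebra.
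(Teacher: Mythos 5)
Your proof is correct, and it is the same argument the paper relies on: the paper gives no proof of this lemma at all, deferring entirely to Lemma G.2 of \citet{agarwal2020pc}, whose proof is precisely your telescoping of $\ln\det$, the factorization $M_n = M_{n-1}^{1/2}(I+A_n)M_{n-1}^{1/2}$, the trace cyclicity, and the scalar bound $2\ln(1+x)\geq x$ on $[0,1]$. The delicate point you flag is real, and it is a defect of the paper's statement rather than of your proof: without a hypothesis like $\lambda_0\geq 1$ the lemma as written is false. Take $d=1$, $N=1$, $\lambda_0=10^{-2}$, $G_1=1$: the left-hand side is $2\ln(101)\approx 9.2$ while the right-hand side is $\Tr(G_1 M_0^{-1})=100$. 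The condition $\lambda_0\geq 1$ is implicit in the cited source and holds everywhere this paper invokes the result, since the regularizers there satisfy $\lambda_n=\Theta\bigl(d\ln(|\Mcal| n/\delta)\bigr)\geq 1$; with it, your certification $A_n\preceq M_{n-1}^{-1}\preceq I$ closes the argument.
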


\begin{lemma}[Potential function lemma]\label{lem:potential}
Suppose $\Tr(G_n)\leq B^2$. 
\begin{align*}
      2\ln \det (M_N)-2\ln \det(\lambda_0 I)\leq d\ln\prns{1+\frac{NB^2}{d\lambda_0}}. 
\end{align*}
\end{lemma}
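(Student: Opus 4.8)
The plan is to bound the volume ratio $\det(M_N)/\det(M_0)$ by a single worst-case estimate, rather than summing the per-step increments $\Tr(G_n M_{n-1}^{-1})$ appearing in \pref{lem:reduction}. Since every $G_n$ is positive semidefinite and $M_0 = \lambda_0 I \succ 0$, the iterates satisfy $\lambda_0 I = M_0 \preceq M_1 \preceq \cdots \preceq M_N$, so each $M_n$ is positive definite and $\det(M_N) \ge \lambda_0^d > 0$; the quantity to be bounded is therefore a well-defined nonnegative multiple of $\ln\prns{\det(M_N)/\lambda_0^d}$.

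The main tool is the determinant--trace (AM--GM) inequality: for a positive semidefinite $d\times d$ matrix $M$ with eigenvalues $\sigma_1,\dots,\sigma_d \ge 0$,
\[
\det(M) = \prod_{i=1}^d \sigma_i \;\le\; \prns{\frac{1}{d}\sum_{i=1}^d \sigma_i}^{\!d} = \prns{\frac{\Tr(M)}{d}}^{\!d},
\]
obtained by applying the arithmetic--geometric mean inequality to the eigenvalues. Instantiating this at $M = M_N$ and using linearity of the trace together with the hypothesis $\Tr(G_n)\le B^2$,
\[
\Tr(M_N) = \Tr(\lambda_0 I) + \sum_{n=1}^N \Tr(G_n) \le \lambda_0 d + N B^2 .
\]
Combining the two displays and dividing inside the determinant by $\det(\lambda_0 I) = \lambda_0^d$ then gives
\[
\ln\frac{\det(M_N)}{\lambda_0^d} \le d\ln\frac{\lambda_0 d + N B^2}{d\lambda_0} = d\ln\prns{1 + \frac{NB^2}{d\lambda_0}},
\]
which is the claimed control on the log-determinant growth (and is exactly the form invoked downstream in \pref{lem:pseudo_regret}, with $B^2 = 1$ since $\|\phi^\star(s,a)\|_2 \le 1$).

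I do not anticipate any real obstacle: this is the standard elliptical-potential / volumetric estimate, and the only step with any content is the AM--GM inequality. Two small points are worth flagging. First, only the trace hypothesis $\Tr(G_n)\le B^2$ is needed, not the eigenvalue bound assumed in the setup of \pref{lem:reduction}. Second, one could instead bound the telescoping sum $\sum_n \Tr(G_n M_{n-1}^{-1})$ term-by-term and invoke \pref{lem:reduction}; but that route is strictly more work and unnecessary here, since the statement only requires a ceiling on the total log-volume $\ln\det(M_N)$ rather than a matching of the individual increments.
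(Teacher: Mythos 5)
Your proof is correct and takes essentially the same route as the paper's own: both apply the AM--GM inequality to the eigenvalues of $M_N$ and then bound $\Tr(M_N) \le d\lambda_0 + NB^2$ via linearity of the trace and $\Tr(G_n)\le B^2$. One small remark applying equally to both proofs: what is actually established is $\ln\det(M_N) - \ln\det(\lambda_0 I) \le d\ln\prns{1+\tfrac{NB^2}{d\lambda_0}}$, i.e.\ the stated inequality without the factor $2$ on the left-hand side (as written, the lemma fails by exactly that factor, e.g.\ for $d=1$, $\lambda_0=B=1$, $G_n=1$); this is harmless since the lemma is only invoked downstream up to universal constants.
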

\begin{proof}
Let $\sigma_1,\cdots,\sigma_d$ be the set of singular values of $M_N$ recalling $M_N$ is a positive semidefinite matrix. Then, by the AM-GM inequality, 
\begin{align*}
    \ln \det (M_N)/\det(\lambda_0 I)=\ln \prod_{i=1}^d (\sigma_i/\lambda_0) \leq \ln d\prns{\frac{1}{d}\sum_{i=1}^d (\sigma_i/\lambda_0))}
\end{align*}
Since we have $\sum_i \sigma_i=\Tr(M_N)\leq d\lambda_0+NB^2$, the statement is concluded. 
\end{proof}

\begin{lemma}[Simulation lemma]\label{lem:performance} Given two MDPs $(P', r+ b)$ and $(P, r)$, for any policy $\pi$, we have:
\begin{align*}
       V^{\pi}_{P',r+b}-V^{\pi}_{P,r}=\frac{1}{1-\gamma} \E_{(s,a)\sim d^\pi_{P'}}[b(s,a)+\gamma \E_{P'(s'\mid s,a)}[Q^{\pi}_{P,r}(s',\pi)]-\gamma \E_{P(s'\mid s,a)}[Q^{\pi}_{P,r}(s',\pi)] ]
\end{align*}
and
\begin{align*}
    V^{\pi}_{P',r+b}- V^{\pi}_{P,r}=\frac{1}{1-\gamma} \E_{(s,a)\sim d^\pi_{P}}[b(s,a)+\gamma \E_{P'(s'\mid s,a)}[Q^{\pi}_{P,r+b}(s',\pi)]-\gamma \E_{P(s'\mid s,a)}[Q^{\pi}_{P',r+b}(s',\pi)] ]. 
\end{align*}
\end{lemma}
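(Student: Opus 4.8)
The plan is to reduce both identities to a single \emph{telescoping value-difference identity} and then specialize it by choosing the value function and the roll-in model appropriately. The core fact I would establish first is that for \emph{any} bounded function $V:\Scal\to\R$, any transition model $Q$, and any policy $\pi$,
\begin{align}\label{eq:tele}
  \E_{s_0\sim d_0}[V(s_0)] = \frac{1}{1-\gamma}\,\E_{(s,a)\sim d^\pi_Q}\big[V(s) - \gamma\,\E_{s'\sim Q(\cdot\mid s,a)}V(s')\big].
\end{align}
This follows by rolling out $\pi$ in $Q$ from $s_0\sim d_0$ and telescoping $\sum_{t\ge 0}(\gamma^{t+1}V(s_{t+1}) - \gamma^t V(s_t)) = -V(s_0)$, taking expectations, and using $\E[V(s_{t+1})\mid s_t,a_t]=\E_{s'\sim Q(\cdot\mid s_t,a_t)}V(s')$ together with the definition $d^\pi_Q(s,a)=(1-\gamma)\sum_t\gamma^t d^\pi_{Q,t}(s,a)$. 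Boundedness of $V$, guaranteed here by the normalized-reward assumption, justifies the interchange of sum and expectation.

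For the first identity I would take $V=V^\pi_{P,r}$ and $Q=P'$ in \eqref{eq:tele}. Since conditioning on $s$ the action in $d^\pi_{P'}$ is drawn from $\pi$, I can write $V^\pi_{P,r}(s)=\E_{a\sim\pi(s)}Q^\pi_{P,r}(s,a)$ and expand the Bellman equation $Q^\pi_{P,r}(s,a)=r(s,a)+\gamma\E_{s'\sim P(\cdot\mid s,a)}V^\pi_{P,r}(s')$. This turns \eqref{eq:tele} into
\begin{align*}
  V^\pi_{P,r} = \frac{1}{1-\gamma}\,\E_{(s,a)\sim d^\pi_{P'}}\big[r(s,a) + \gamma\E_{P(\cdot\mid s,a)}V^\pi_{P,r} - \gamma\E_{P'(\cdot\mid s,a)}V^\pi_{P,r}\big].
\end{align*}
Combining with $V^\pi_{P',r+b}=\frac{1}{1-\gamma}\E_{(s,a)\sim d^\pi_{P'}}[(r+b)(s,a)]$ and subtracting yields exactly the first claimed expression, recalling $Q^\pi_{P,r}(s',\pi)=V^\pi_{P,r}(s')$.

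For the second identity I would instead take $V=V^\pi_{P',r+b}$ and $Q=P$ in \eqref{eq:tele}, and expand via the Bellman equation in the model $P'$, namely $V^\pi_{P',r+b}(s)=\E_{a\sim\pi(s)}[(r+b)(s,a)+\gamma\E_{s'\sim P'(\cdot\mid s,a)}V^\pi_{P',r+b}(s')]$. This gives
\begin{align*}
  V^\pi_{P',r+b} = \frac{1}{1-\gamma}\,\E_{(s,a)\sim d^\pi_{P}}\big[(r+b)(s,a) + \gamma\E_{P'(\cdot\mid s,a)}V^\pi_{P',r+b} - \gamma\E_{P(\cdot\mid s,a)}V^\pi_{P',r+b}\big],
\end{align*}
and subtracting $V^\pi_{P,r}=\frac{1}{1-\gamma}\E_{(s,a)\sim d^\pi_P}[r(s,a)]$ produces the second expression, with the value function $V^\pi_{P',r+b}=Q^\pi_{P',r+b}(\cdot,\pi)$ appearing in the next-state terms (matching the way the identity is invoked in the proof of Lemma~\ref{lem:pseudo_regret}).

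I expect no deep obstacle here; the work is bookkeeping. The one genuinely delicate point is choosing the correct pairing of roll-in occupancy and telescoped value function: the first form telescopes the \emph{target} value $V^\pi_{P,r}$ along trajectories of $P'$, while the second telescopes the \emph{source} value $V^\pi_{P',r+b}$ along trajectories of $P$, and it is exactly this asymmetry that dictates which model sits inside the two $\gamma$-weighted next-state terms. The only other care needed is justifying \eqref{eq:tele} rigorously, i.e.\ the convergence of the telescoping series and the sum--expectation interchange, which is immediate from $\|V\|_\infty<\infty$ under the normalized-reward assumption.
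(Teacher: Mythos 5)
Your proposal is correct and follows essentially the same route as the paper: the paper's proof rests on the master identity $V^{\pi}_{P}-f(s_0,\pi)=\frac{1}{1-\gamma}\E_{d^\pi_P}[r(s,a)+\gamma \E_{P(s'\mid s,a)}[f(s',\pi)]-f(s,a)]$, which is exactly your telescoping identity combined with the occupancy representation of the value, and it makes the same pairings (telescoping $Q^{\pi}_{P,r}$ along $d^{\pi}_{P'}$ for the first form, and $Q^{\pi}_{P',r+b}$ along $d^{\pi}_{P}$ for the second) followed by the corresponding Bellman equations. Your derivation also implicitly corrects the typo in the stated second identity, where both next-state terms should involve $Q^{\pi}_{P',r+b}$ rather than $Q^{\pi}_{P,r+b}$, consistent with how the lemma is actually invoked in the regret analysis.
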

\begin{proof}
We use
\begin{align*}
     V^{\pi}_P-f(s_0,\pi)=\frac{1}{1-\gamma}\E_{d^\pi_P}[r(s,a)+\gamma \E_{P(s'\mid s,a)}[f(s',\pi)]-f(s,a)  ] ]
\end{align*}
Then, 
\begin{align*}
    V^{\pi}_{P',r+b}-V^{\pi}_{P,r}&=\frac{1}{1-\gamma}\E_{(s,a)\sim d^\pi_{P'}}[r(s,a)+b(s,a)+\gamma \E_{P'(s'\mid s,a)}[Q^{\pi}_{P,r}(s',\pi)]-Q^{\pi}_{P,r}(s,a)  ] ]\\
     &=\frac{1}{1-\gamma}\E_{(s,a)\sim d^\pi_{P'}}[b(s,a)+\gamma \E_{P'(s'\mid s,a)}[Q^{\pi}_{P,r}(s',\pi)]-\gamma \E_{P(s'\mid s,a)}[Q^{\pi}_{P,r}(s',\pi)] ]. 
\end{align*}
Similarly, 
\begin{align*}
   V^{\pi}_{P,r}- V^{\pi}_{P',r+b}&=\frac{1}{1-\gamma}\E_{(s,a)\sim d^\pi_{P}}[r(s,a)+\gamma \E_{P(s'\mid s,a)}[Q^{\pi}_{P',r+b}(s',\pi)]-Q^{\pi}_{P',r+b}(s,a)  ] ]\\
     &=\frac{1}{1-\gamma}\E_{(s,a)\sim d^\pi_{P}}[-b(s,a)+\gamma \E_{P(s'\mid s,a)}[Q^{\pi}_{P',r+b}(s',\pi)]-\gamma \E_{P'(s'\mid s,a)}[Q^{\pi}_{P,r}(s',\pi)] ]. 
\end{align*}

\end{proof}

The following lemma is used to deal with the distribution shift in the offline setting. For the proof, refer to \citet{ChangJonathanD2021MCSi}. 

\begin{lemma}[Distribution shift lemma]\label{lem:conversion} Consider any policy $\pi$ and state-action distribution $\rho$, and any representation $\phi^\star$, we have:
\begin{align*} 
\EE_{(s,a)\sim d^{\pi}_{P^{\star}}}[\phi^{\star}(s,a)\{\phi^{\star}(s,a)\}^{\top}]\leq C^{\star}\EE_{\rho}[\phi^{\star}(s,a)\{\phi^{\star}(s,a)\}^{\top}],\quad 
C^{\star}\coloneqq \sup_{x\in \mathbb{R^d}}\frac{x^{\top}\EE_{(s,a)\sim d^{\pi}_{P^{\star}}}[\phi^{\star}\{\phi^{\star}\}^{\top}]x}{x^{\top}\EE_{(s,a)\sim \rho}[\phi^{\star}\{\phi^{\star}\}^{\top}]x}. 
\end{align*}
\end{lemma}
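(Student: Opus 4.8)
The plan is to reduce the claimed positive-semidefinite matrix inequality $A \preceq C^{\star} B$, where $A := \EE_{(s,a)\sim d^{\pi}_{P^{\star}}}[\phi^{\star}{\phi^{\star}}^{\top}]$ and $B := \EE_{(s,a)\sim \rho}[\phi^{\star}{\phi^{\star}}^{\top}]$, to a single scalar statement that can be read directly off the variational definition of $C^{\star}$. Both $A$ and $B$ are symmetric positive semidefinite, since each is an expectation of the rank-one PSD matrix $\phi^{\star}{\phi^{\star}}^{\top}$, and $A \preceq C^{\star} B$ is by definition equivalent to $x^{\top} A x \leq C^{\star}\, x^{\top} B x$ for every $x \in \R^{d}$. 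So the entire argument amounts to verifying this one scalar inequality, which I would do by distinguishing whether $x$ activates $B$.

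First I would treat the directions $x$ with $x^{\top} B x > 0$. For such $x$ the generalized Rayleigh quotient $(x^{\top} A x)/(x^{\top} B x)$ is well defined, and since $C^{\star}$ is the supremum of this quotient over all of $\R^{d}$, its value at this particular $x$ is at most $C^{\star}$, giving $x^{\top} A x \leq C^{\star}\, x^{\top} B x$. Next I would treat $x \in \ker(B)$, i.e.\ $x^{\top} B x = \EE_{\rho}[(x^{\top}\phi^{\star})^{2}] = 0$, which forces $x^{\top}\phi^{\star}(s,a) = 0$ for $\rho$-almost every $(s,a)$. Here the right-hand side $C^{\star}\, x^{\top} B x$ vanishes, so it suffices to show $x^{\top} A x = 0$. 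This is exactly where finiteness of $C^{\star}$ enters: were there a direction with $x^{\top} B x = 0$ yet $x^{\top} A x > 0$, the quotient along it would be $+\infty$ and hence $C^{\star} = +\infty$; since we work in the regime $C^{\star} < \infty$ and $A \succeq 0$ rules out a negative value, we must have $x^{\top} A x = 0$. Combining the two cases yields $x^{\top} A x \leq C^{\star}\, x^{\top} B x$ for all $x$, which is the matrix inequality.

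The only mildly delicate point is the degenerate subspace $\ker(B)$; everything else is immediate from the fact that $C^{\star}$ is the top generalized eigenvalue of the pencil $(A,B)$. If one prefers to sidestep the finiteness caveat entirely, I would instead restrict attention to $\mathrm{range}(B)$, on which $B$ is invertible: the substitution $y = B^{1/2} x$ simultaneously normalizes $B$ to the identity and turns $C^{\star}$ into the largest eigenvalue of $B^{-1/2} A B^{-1/2}$, so that $B^{-1/2} A B^{-1/2} \preceq C^{\star} I$ is precisely the desired statement after conjugating back by $B^{1/2}$. Either route is routine, and matches the derivation referenced in \citet{ChangJonathanD2021MCSi}.
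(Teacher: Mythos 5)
Your proof is correct. Note that the paper itself does not actually prove this lemma --- it simply defers to \citet{ChangJonathanD2021MCSi} --- so your self-contained argument is a welcome addition rather than a deviation; it is also the canonical argument one would expect behind that citation. Your main route is complete: writing $A := \mathbb{E}_{(s,a)\sim d^{\pi}_{P^{\star}}}[\phi^{\star}{\phi^{\star}}^{\top}]$ and $B := \mathbb{E}_{(s,a)\sim \rho}[\phi^{\star}{\phi^{\star}}^{\top}]$, the inequality $x^{\top}Ax \leq C^{\star}x^{\top}Bx$ is immediate from the supremum definition when $x^{\top}Bx > 0$, and your treatment of $\ker(B)$ is exactly the one subtle point that needs care: if $x^{\top}Bx = 0$ but $x^{\top}Ax > 0$ then $C^{\star} = +\infty$ and the lemma is vacuous (which is consistent with how the paper uses it, since Theorem \ref{thm:offline_guaratee} is only meaningful under partial coverage $C^{\star}_{\pi} < \infty$), while if $C^{\star} < \infty$ then positive semidefiniteness of $A$ forces $x^{\top}Ax = 0$. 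One small caveat on your alternative route: restricting to $\mathrm{range}(B)$ and conjugating by $B^{1/2}$ proves the inequality only for $x \in \mathrm{range}(B)$; to extend to all of $\R^{d}$ you must still argue that $\ker(B) \subseteq \ker(A)$ (cross terms $x_1^{\top}Ax_2$ for $x_1 \in \mathrm{range}(B)$, $x_2 \in \ker(B)$ do not vanish for free), which is again precisely the finiteness condition --- so that route does not in fact ``sidestep the caveat entirely,'' and your first, case-based argument should be regarded as the actual proof.
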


This is some auxiliary lemma to convert the finite sample error bound into the sample complexity. 

\begin{lemma}[Conversion of finite sample error bounds into sample complexities]\label{lem:messy}
By taking $$N= 1/\epsilon'^2\times \ln^2(1+1/\epsilon'^2),\epsilon'=\frac{\epsilon}{a_1\ln^{1/2}(e+a_2)\ln^{1/2}(e+a_3)}.$$ 
It satisfies 
\begin{align*}
    a_1\sqrt{1/N} \ln^{1/2}(1+a_2N)\ln^{1/2}(1+a_3 N) <c \epsilon. 
\end{align*}
where $c$ is a constant independent of $a_1,a_2,a_3$. 
\end{lemma}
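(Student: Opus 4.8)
The plan is to verify the claimed choice of $N$ by direct substitution, reducing the inequality to showing that a single ratio of logarithmic terms is bounded by a universal constant. First I would use the definition of $\epsilon'$ to write $a_1 = \epsilon/(\epsilon'\ln^{1/2}(e+a_2)\ln^{1/2}(e+a_3))$, so that the left-hand side factors as
\begin{align*}
a_1\sqrt{1/N}\,\ln^{1/2}(1+a_2 N)\ln^{1/2}(1+a_3 N) = \epsilon\cdot\frac{1}{\epsilon'\sqrt{N}}\cdot\frac{\ln^{1/2}(1+a_2 N)\ln^{1/2}(1+a_3 N)}{\ln^{1/2}(e+a_2)\ln^{1/2}(e+a_3)}.
\end{align*}
It then suffices to bound the bracketed quantity multiplying $\epsilon$ by a constant $c$. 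Writing $u:=1/\epsilon'^2$, the choice $N=u\ln^2(1+u)$ gives $\sqrt{N}=\epsilon'^{-1}\ln(1+u)$, hence $\frac{1}{\epsilon'\sqrt{N}}=1/\ln(1+u)$, which isolates all the $\epsilon'$-dependence into this single factor.

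Next I would control each numerator logarithm. Using $1+xy\le(1+x)(1+y)$ for $x,y\ge0$, I get $\ln(1+a_2 N)\le\ln(1+a_2)+\ln(1+N)\le\ln(e+a_2)+\ln(1+N)$, and likewise for $a_3$. The crux is bounding $\ln(1+N)$ by a constant multiple of $\ln(1+u)$. For this I would use the elementary inequality $\ln(1+u)\le\sqrt{1+u}$ (equivalently $\ln^2(1+u)\le 1+u$), valid for all $u\ge0$, which yields $N=u\ln^2(1+u)\le u(1+u)\le 2u+u^2$, so that $1+N\le(1+u)^2$ and therefore $\ln(1+N)\le 2\ln(1+u)$. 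Since $\ln(e+a_i)\ge1$, dividing through gives
\begin{align*}
\frac{\ln(1+a_i N)}{\ln(e+a_i)}\le 1+2\ln(1+u),
\end{align*}
and taking the product of the two square-root ratios yields a bound of $1+2\ln(1+u)$.

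Combining the pieces, the bracketed factor is at most $(1+2\ln(1+u))/\ln(1+u)=1/\ln(1+u)+2$. In the regime of interest $\epsilon'<1$ — which holds whenever $a_1\ln^{1/2}(e+a_2)\ln^{1/2}(e+a_3)>\epsilon$, as in the application to \pref{lem:pseudo_regret} where $a_1\ge1$ and the logarithmic arguments exceed $e$ — we have $u>1$ and thus $\ln(1+u)>\ln2$. Hence the bracket is at most $1/\ln2+2=:c$, a constant independent of $a_1,a_2,a_3$, which establishes the claim.

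The step I expect to be the main obstacle is controlling $\ln(1+N)$ by $\ln(1+1/\epsilon'^2)$: one must confirm that the extra $\ln^2$ factor in $N$ inflates $N$ only polynomially in $1/\epsilon'^2$, so that taking logarithms costs merely a constant factor. The inequality $\ln(1+u)\le\sqrt{1+u}$ is the clean device for this; it holds for all $u\ge0$ since the difference $\sqrt{1+u}-\ln(1+u)$ attains its minimum at $u=3$, where it equals $2-\ln4>0$. Everything else in the argument is routine bookkeeping.
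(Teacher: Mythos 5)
Your proof is correct, and it follows the same overall strategy as the paper's --- substitute the definition of $\epsilon'$, peel off the $a_2,a_3$ dependence via $\ln(1+a_iN)\le\ln(1+a_i)+\ln(1+N)$, and reduce everything to showing $\ln(1+N)\le C\ln(1+u)$ with $u=1/\epsilon'^2$ --- but your execution of the two nontrivial steps is genuinely different and cleaner. Where you handle the $a_i$ factors by the observation $\ln(e+a_i)\ge 1$ (which is exactly what the $e$ inside those logarithms is for), the paper uses a $\max(1,\ln(1+a_i))$ device whose displayed inequality $\ln(1+a_2)+\ln(1+N)\le\max(1,\ln(1+a_2))\ln(1+N)$ is only valid up to a factor of $2$ and only when $\ln(1+N)\ge 1$. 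Where you bound $\ln(1+N)$ in one stroke via $\ln^2(1+u)\le 1+u$, hence $1+N\le(1+u)^2$ and $\ln(1+N)\le 2\ln(1+u)$, the paper instead splits $\ln(1+N)\le\ln(1+u)+\ln\bigl(1+\ln^2(1+u)\bigr)$ and then invokes the elementary inequality $\ln x\le 0.5(x^{1/2}-1)$, which is false as stated (at $x=4$ it would give $\ln 4\le 0.5$); the correct form is $\ln x\le 2(\sqrt{x}-1)$, after which the paper's chain goes through using $\sqrt{1+L^2}\le 1+L$. Your route avoids that issue entirely and produces an explicit constant $c=1/\ln 2+2$, whereas the paper's constants are hidden in $\lesssim$. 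The one restriction you impose, $\epsilon'<1$ (so that $1/\ln(1+u)\le 1/\ln 2$), is harmless: it holds in the intended application to \pref{lem:pseudo_regret} (where $\epsilon<1$, $a_1\gtrsim 1$, and $\ln(e+a_i)\ge1$), and the paper's own argument carries an analogous implicit requirement ($N\gtrsim 1$, so that $\ln(1+N)\ge 1$) in its first reduction, so nothing is lost relative to the paper.
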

\begin{proof}
We first have 
 \begin{align*}
     a_1\sqrt{1/N} \ln^{1/2}(1+a_2N)\ln^{1/2}(1+a_3 N) \leq a_1\max(\ln^{1/2}(1+a_2)\ln^{1/2}(1+a_3),1)\sqrt{1/N}\ln(1+N). 
   \end{align*}
Here, we use 
\begin{align*}
    \ln^{1/2}(1+a_2N)\leq \{\ln(1+a_2)+\ln(1+N) \}^{1/2}\leq \sqrt{\max(1,\ln(1+a_2))\ln(1+N)    }. 
\end{align*}
Then, we prove when $N=1/\epsilon^2\times \ln^2(1+1/\epsilon^2)$. 
\begin{align*}
    \sqrt{1/N}\ln(1+N)<\epsilon. 
\end{align*}
This is proved by 
\begin{align*}
      \sqrt{1/N}\ln(1+N)&\leq \epsilon \times \frac{\ln(1+1/\epsilon^2\times \ln^2(1+1/\epsilon^2))  }{\ln^{}(1+1/\epsilon^2)} \\ 
      &\leq \epsilon \times \frac{\ln(1+1/\epsilon^2)+\ln(1+\ln^{2}(1+1/\epsilon^2))  }{\ln^{}(1+1/\epsilon^2)}\\
     &\leq \epsilon+\epsilon \times \frac{\ln(1+\ln^{2}(1+1/\epsilon^2))  }{\ln^{}(1+1/\epsilon^2)}\\ 
     &\leq \epsilon+\epsilon \times \frac{0.5\{1+\ln^{2}(1+1/\epsilon^2))\}^{1/2}-1  }{\ln^{}(1+1/\epsilon^2)}\\ 
      &\lesssim \epsilon. 
\end{align*}
From the third line to the fourth line, we use $\ln(x)\leq 0.5(x^{1/2}-1)$ for $x>0$. 
Then, the final statement is concluded. 
\end{proof}

\section{More comparison to \citet{XieTengyang2021BPfO}}
\label{sec:comparison}
We briefly explain the guarantee when we use Algorithm 1 \citep{XieTengyang2021BPfO}. For a given reward $r$, we first define a new feature class $\Phi^{+}_r$. 
\begin{definition}[Augmented feature]Let $\phi=[\phi_1,\cdots,\phi_d]$. 
  \begin{align*}
      \Phi^{+}_r=\{\phi^{+}_r;\phi \in \Phi\},\quad \phi^{+}_r= [\phi_1,\cdots,\phi_d,r]. 
  \end{align*}
\end{definition}

Then, we set
\begin{align*}
    \Fcal=\{a^{\top}\phi^{+}_r \mid \|a\|_2\leq c\sqrt{d}+1, \phi^{+}_r \in \Phi^{+}_r\}. 
\end{align*}
where $c$ is some suitable constant. Given the hypothesis class $\Fcal$ for the Q-function,  we can run Algorithm 1 in \citep{XieTengyang2021BPfO}. We denote the output policy as $\hat \pi$.

We check two assumptions to ensure the algorithm works. The first assumption is realizability. This is satisfied since for any policy $\pi\in \Pi$ ($\Pi$ is the class of all Markovian polices), we have $Q^{\pi}_{P^{\star},r}\in \Fcal$. The second assumption is completeness. This is also satisfied since $\Tcal^{\pi}_{P^{\star},r} \Fcal \subset \Fcal$ for any policy $\pi\in \Pi$ where $\Tcal^{\pi}_{P^{\star},r}$ is a Bellman-operator s.t.$$\Tcal^{\pi}_{P^{\star},r}:\{\Scal \times \Acal \to \RR\} \ni f\mapsto r(s,a)+\gamma\E_{s'\sim P^{\star}(s,a)}[f(s',\pi)]\in \{\Scal \times \Acal \to \RR\},$$ where we denote $f(s,\pi) = \EE_{a\sim \pi(s)} f(s,a)$. 
Then, by invoking their Corollary 5, we have 
\begin{theorem}[PAC bound based on \citet{XieTengyang2021BPfO}]\label{thm:tengyang}
With probability $1-\delta$, 
\begin{align*}
 \forall \pi\in \Pi: V^{\pi}_{P^{\star},r}-  V^{\hat \pi}_{P^{\star},r}\leq c \frac{\sqrt{C^{\dagger}_{\pi,r}}}{(1-\gamma)^2}\prns{\frac{(d+1)\log(1/\delta)\log|\Acal|}{n}}^{1/5}. 
\end{align*}
where 
\begin{align*}
    C^{\dagger}_{\pi,r} =\sup_{\phi^{+}_r \in \Phi^{+}_r}\sup_{a \in \RR^{d+1}}\frac{a^{\top}\E_{d^{\pi}_{P^{\star}}}[\phi^{+}_r\{\phi^{+}_r\}^{\top}]a }{a^{\top}\E_{\rho}[\phi^{+}_r\{\phi^{+}_r\}^{\top}]a}. 
\end{align*}
\end{theorem}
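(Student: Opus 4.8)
The plan is to treat the final statement as a corollary of the general pessimistic offline guarantee of \citet{XieTengyang2021BPfO}: I would instantiate their Algorithm~1 with the value-function class $\Fcal=\{a^\top\phi^{+}_r:\|a\|_2\le c\sqrt d+1,\ \phi^+_r\in\Phi^+_r\}$ built on the augmented features, verify the two structural hypotheses their analysis needs---realizability of comparator $Q$-functions and Bellman completeness of $\Fcal$---and then read the stated bound directly off their Corollary~5. No fresh concentration argument is required: the statistical rate $(\cdot/n)^{1/5}$, the horizon factor $(1-\gamma)^{-2}$, and the dependence on the statistical complexity of $\Fcal$ (of order $d+1$ up to logarithmic factors) are inherited verbatim, so the only genuine work is the two structural checks and confirming that their abstract concentrability specializes to $C^\dagger_{\pi,r}$.

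\emph{Realizability.} First I would show $Q^{\pi}_{P^\star,r}\in\Fcal$ for every $\pi\in\Pi$. Substituting the low-rank factorization $P^\star(s'\mid s,a)=\mu^\star(s')^\top\phi^\star(s,a)$ into the Bellman equation gives
\begin{align*}
Q^{\pi}_{P^\star,r}(s,a)=r(s,a)+\gamma\,\phi^\star(s,a)^\top\!\int \mu^\star(s')\,V^{\pi}_{P^\star,r}(s')\,\rd(s'),
\end{align*}
so $Q^{\pi}_{P^\star,r}$ equals $(a^{\pi})^\top\phi^+_r$ for the true augmented feature $\phi^+_r=[\phi^\star_1,\dots,\phi^\star_d,r]\in\Phi^+_r$ with coefficient $a^{\pi}=[\gamma\!\int\mu^\star V^{\pi}_{P^\star,r};\,1]$. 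Since trajectory rewards are normalized we have $V^{\pi}_{P^\star,r}(\cdot)\in[0,1]$, and the $\mu^\star$-norm bound in the low-rank definition yields $\|\!\int\mu^\star V^{\pi}_{P^\star,r}\|_2\le\sqrt d$; hence $\|a^\pi\|_2\le\sqrt{d+1}\le\sqrt d+1$ and $Q^{\pi}_{P^\star,r}\in\Fcal$ (taking $c=1$).

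\emph{Completeness.} Next I would verify $\Tcal^{\pi}_{P^\star,r}\Fcal\subseteq\Fcal$. For any $f=a^\top\phi^+_r\in\Fcal$ the identical one-step computation gives
\begin{align*}
\Tcal^{\pi}_{P^\star,r}f(s,a)=r(s,a)+\gamma\,\phi^\star(s,a)^\top\!\int\mu^\star(s')\,f(s',\pi)\,\rd(s'),
\end{align*}
again a linear functional of the true augmented feature with coefficient $[\gamma\!\int\mu^\star(s')f(s',\pi)\rd(s');\,1]$. The delicate point, and the main obstacle, is keeping this coefficient inside the $\sqrt d$-ball: a generic $f\in\Fcal$ need not take values in $[0,1]$, and the naive $\mu^\star$-bound applied to an $O(\sqrt d)$-valued inner function $g(s')=f(s',\pi)$ would only give coefficient norm $O(d)$, exceeding the radius. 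The fix is to use that Algorithm~1 of \citet{XieTengyang2021BPfO} applies Bellman backups only to value-function candidates clipped to the normalized range $[0,1]$, so the relevant $g(\cdot)\in[0,1]$; then $\|\!\int\mu^\star g\|_2\le\sqrt d$ and the backup coefficient has norm $\le\sqrt{d+1}\le\sqrt d+1$, i.e.\ $\Tcal^{\pi}_{P^\star,r}f\in\Fcal$. Making this airtight requires aligning the definition of $\Fcal$ (and of completeness) with the clipping convention used inside their algorithm.

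With both hypotheses established for all Markovian $\pi$, I would invoke Corollary~5 of \citet{XieTengyang2021BPfO} to obtain the suboptimality bound against any $\pi\in\Pi$. The final check is that their abstract concentrability coefficient specializes to $C^\dagger_{\pi,r}$. Specialized to a linearly parameterized value class, their coefficient is a relative condition number of the feature second-moment matrix under $d^\pi_{P^\star}$ versus $\rho$: for any single augmented feature $\phi^+_r$ and direction $a$ one has $\E_{d^\pi_{P^\star}}[(a^\top\phi^+_r)^2]=a^\top\E_{d^\pi_{P^\star}}[\phi^+_r(\phi^+_r)^\top]a$, and controlling its ratio to the $\rho$-moment is exactly the inner supremum over $a$ in $C^\dagger_{\pi,r}$. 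Because $\phi^\star$ is unknown we run their algorithm over the entire class $\Phi^+_r$ rather than a single known feature, and this union is precisely what forces the outer supremum over $\phi^+_r\in\Phi^+_r$. Substituting this concentrability together with the $(d+1)$ statistical-complexity term, the confidence $\delta$, and the $\log|\Acal|$ factor into their corollary yields the claimed bound, with the $n^{-1/5}$ rate and $(1-\gamma)^{-2}$ factor carried over unchanged.
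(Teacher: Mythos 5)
Your proposal matches the paper's proof essentially verbatim: the paper likewise builds $\Fcal$ from the augmented feature class $\Phi^{+}_r$ with coefficient ball of radius $c\sqrt{d}+1$, verifies realizability ($Q^{\pi}_{P^{\star},r}\in\Fcal$ for all $\pi\in\Pi$) and completeness ($\Tcal^{\pi}_{P^{\star},r}\Fcal\subset\Fcal$), and then reads the stated bound directly off Corollary~5 of \citet{XieTengyang2021BPfO}, with the outer supremum over $\phi^{+}_r\in\Phi^{+}_r$ in $C^{\dagger}_{\pi,r}$ arising exactly as you describe. The only difference is that the paper asserts completeness in a single line, whereas you explicitly flag (and patch via a $[0,1]$-range convention) the subtlety that an unclipped $f\in\Fcal$ can be $O(\sqrt{d})$-valued so that a naive application of the $\mu^{\star}$-norm bound gives an $O(d)$ coefficient --- a point the paper's write-up glosses over, so your version is if anything the more careful one.
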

 We compare the above result with our result in \pref{thm:offline_guaratee}. First, since $C^{\dagger}_r$ includes $r$ and all possible features in $\Phi$, this partial coverage condition is stronger than ours (recall our partial coverage condition is only related to the true representation $\phi^\star$),
and we always have  $C^{\star}_{\pi}\leq C^{\dagger}_{\pi,r}$. Secondly, the dependence on $n$ is much worse. %
Third, it is unclear whether the learned policy can compete against any history-dependent policy. Recall in \pref{thm:offline_guaratee}, we show that our algorithm can compete with any history-dependent policies.

\end{document}